\newcommand{\arxiv}[1]{\iftoggle{colt}{}{#1}}
\newcommand{\colt}[1]{\iftoggle{colt}{#1}{}}
\global\togglefalse{colt}
\newcommand{\EE}{\mathbb{E}}
\newcommand{\RR}{\mathbb{R}}
\newcommand{\calA}{\mathcal{A}}
\newcommand{\calX}{\mathcal{X}}
\newcommand{\calV}{\mathcal{V}}
\newcommand{\calF}{\mathcal{F}}
\newcommand{\calK}{\mathcal{K}}
\newcommand{\calG}{\mathcal{G}}
\newcommand{\calN}{\mathcal{N}}
\newcommand{\calH}{\mathcal{H}}
\newcommand{\calR}{\mathcal{R}}
\newcommand{\calS}{\mathcal{S}}
\newcommand{\calU}{\mathcal{U}}
\newcommand{\calY}{\mathcal{Y}}
\newcommand{\calQ}{\mathcal{Q}}
\newcommand{\tO}{\tilde{\mathcal{O}}}
\newcommand{\bs}{\mathbf{s}}
\newcommand{\by}{\mathbf{y}}
\newcommand{\bw}{\mathbf{w}}
\newcommand{\ba}{\mathbf{a}}
\newcommand{\bx}{\mathbf{x}}
\newcommand{\bp}{\mathbf{p}}
\newcommand{\bz}{\mathbf{z}}
\newcommand{\bv}{\mathbf{v}}
\newcommand{\bu}{\mathbf{u}}
\newcommand{\bq}{\mathbf{q}}
\newcommand{\ty}{\tilde{y}}
\newcommand{\tx}{\tilde{x}}
\newcommand{\bty}{\tilde{\by}}
\newcommand{\btx}{\tilde{\bx}}
\newcommand{\ber}{\mathrm{Ber}}
\newcommand{\unif}{\mathrm{Unif}}
\newcommand{\argmin}{\mathop{\arg\min}} 
\newcommand{\argmax}{\mathop{\arg\max}}
\newcommand{\bepsilon}{{\pmb{\epsilon}}}
\renewcommand{\epsilon}{\varepsilon}
\newcommand{\simiid}{\stackrel{\mathrm{i.i.d.}}{\sim}}
\newcommand{\TV}{D_{\mathrm{TV}}}
\newcommand{\hp}{\widehat{p}}
\newcommand{\bhp}{\mathbf{\hp}}
\newcommand{\frakd}{\mathfrak{D}}
\newcommand{\calNsq}{\mathcal{N}_{\mathsf{sq}}}
\newcommand{\calHsq}{\mathcal{H}_{\mathsf{sq}}}
\newcommand{\baru}{\bar{u}}
\newcommand{\bard}{{\bar{d}}}
\newcommand{\pref}[1]{\prettyref{#1}}
\newcommand{\savehyperref}[2]{\texorpdfstring{\hyperref[#1]{#2}}{#2}}
\newtheorem{theorem}{Theorem}
\newtheorem{proposition}{Proposition}
\newtheorem{definition}{Definition}
\newtheorem{corollary}{Corollary}
\newtheorem{lemma}{Lemma}
\newtheorem{remark}{Remark}
\newcommand{\numberthis}{\addtocounter{equation}{1}\tag{\theequation}}
\title{On the Minimax Regret of Sequential Probability Assignment via Square-Root Entropy}
	\author{
		Zeyu Jia
		\\
		\normalsize
        {\texttt{zyjia@mit.edu}}
		\and
		Yury Polyanskiy
		\\
		\normalsize
        {\texttt{yp@mit.edu}}
		\and
		Alexander Rakhlin
		\\
		\normalsize
        {\texttt{rakhlin@mit.edu}}
		\and
	}
\begin{document}
\maketitle

\begin{abstract}
	We study the problem of sequential probability assignment under  logarithmic loss, both with and without side information. Our objective is to analyze the \emph{minimax regret}---a notion extensively studied in the literature---in terms of geometric quantities, such as covering numbers and scale-sensitive dimensions.  We show that the minimax regret for the case of no side information (equivalently, the Shtarkov sum) can be upper bounded in terms of \emph{sequential square-root entropy}, a  notion closely related to Hellinger distance. For the problem of sequential probability assignment with side information, we develop both upper and lower bounds based on the aforementioned entropy. The lower bound matches the upper bound, up to log factors, for classes in the Donsker regime (according to our definition of entropy). 
\end{abstract}

\section{Introduction}

We consider the problem of sequential probability assignment under logarithmic loss. This framework has been studied extensively over the decades in fields such as information theory—where it relates to sequence compression—in gambling and sequential investment—where it is linked to wealth growth—and in online learning \cite{cesa2006prediction}. In its more recent incarnation, next-token prediction has emerged as a central challenge in training large language models, where the goal is to minimize the logarithmic loss (commonly referred to as cross-entropy loss) on nearly all available data.

Let us now describe the formal setup. On each round $t=1,\ldots, n$, the forecaster chooses a distribution $\hp_t$ over the finite alphabet $\calY$, observes $y_t\in\calY$, and incurs a loss of $-\log \hp_t(y_t)$. Over the $n$ rounds, the cumulative cost is $\sum_{t=1}^n -\log \hp_t(y_t)$. Since the distribution $\hp_t$ is chosen based on the previous outcomes $y_1,\ldots,y_{t-1}$, we associate $\hp_t$ with a conditional distribution $\hp(\cdot|y_1,\ldots,y_{t-1})$ and write the cumulative loss succinctly as $-\log \bhp(\by)$, where $\bhp$ is the corresponding joint distribution over sequences  $\by=(y_1,\ldots,y_n)$. 

The cumulative loss of the forecaster can be compared to that of the best ``expert'' in a class $\calQ \subseteq \Delta(\calY^n)$, each  identified with a joint probability distribution $\bq\in\calQ$. The forecaster aims to minimize regret
\begin{align}
    \label{eq:reg_def_non-contextual}
    \sum_{t=1}^n -\log \hp_t(y_t) - \inf_{\bq\in\calQ} \sum_{t=1}^n -\log q_t(y_t|y_1,\dots,y_{n-1}) = \sup_{\bq\in\calQ} \log\left(\frac{\bq(\by)}{\bhp(\by)}\right)
\end{align}
for \textit{any} sequence $y_1,\ldots,y_n$. As such, the problem falls under the umbrella of worst-case prediction (also known as individual sequence prediction).

In the more general problem of \textit{prediction with side information} (or, contextual prediction), the forecaster observes additional covariates $x_t\in\calX$ prior to making the probabilistic forecast $\hp_t\in\Delta(\calY)$ on round $t$. In this case, the regret expression becomes 
\begin{align}
\label{eq:regret_with_side_info}
    \sum_{t=1}^n -\log \hp_t(y_t) - \inf_{\bq\in\calQ} \sum_{t=1}^n -\log q_t(y_t|x_t)
\end{align}
and $\bq=(q_1,\ldots,q_n)$ is a sequence of conditional distributions $q_t:\calX\to \Delta(\calY)$. Of course, if $x_t=(y_1,\ldots,y_{t-1})$ and $\calX=\calY^*$, the problem reduces to the non-contextual version in \eqref{eq:reg_def_non-contextual}. 

The intrinsic difficulty of the prediction problem in the non-contextual case is
\begin{align*}
	\calR_n(\calQ) \coloneqq \inf_{\bhp\in \Delta(\calY^n)}\sup_{\by\in \calY^n} \sup_{\bq\in \calQ} \log\left(\frac{\bq(\by)}{\bhp(\by)}\right), \numberthis \label{eq: minimax-regret-Q}
\end{align*}
a quantity referred to as the \textit{worst-case redundancy}, or \textit{minimax regret}. A similar notion can be defined for the contextual version of the problem, when $(x_1,\ldots,x_n)$ also form an individual (i.e. arbitrary) sequence; however, for brevity, we defer this definition to \pref{sec: sequential-probability}.

The goal of this paper is to analyze the behavior of $\calR_n(\calQ)$, for both contextual and non-contextual cases, in terms of geometric concepts—such as covering numbers (or entropy) and scale-sensitive dimensions—analogous to how sample complexity is quantified in statistical learning through the complexity measures of the function class. This objective is not new; over the past several decades, numerous seminal ideas have been developed to address this question \cite{cover1974universal,rissanen1983universal,shtar1987universal,cover1991universal,merhav1993universal,merhav1998universal, cesa1999minimax}, and more recently in \cite{bilodeau2020tight,rakhlin2015sequential}, among many others.

In particular, the classical result of \cite{shtar1987universal} states that in the non-contextual case, $\calR_n(\calQ)$ has the following closed form:
\begin{align*}
	\calR_n(\calQ) = \log\sum_{\by\in \calY^n} \sup_{\bq\in \calQ} \bq(\by), \numberthis \label{eq: shtarkov-sum-Q}
\end{align*}
and the optimal strategy in  \pref{eq: minimax-regret-Q} is attained by the Shtarkov distribution $\bp^*(\by)\propto \sup_{\bq\in \calQ} \bq(\by)$, also known as the normalized maximum likelihood. While \eqref{eq: shtarkov-sum-Q} is more succinct than \eqref{eq: minimax-regret-Q}, it is still not amenable to analysis with standard tools, except for special cases \cite{cesa2006prediction}.

\subsection{Towards a General Result}

To the best of our knowledge, the first analysis of minimax regret for non-parametric (but iid) class $\calQ$ was proposed in~\cite{opper1999worst}, who presented an upper bound involving a Dudley integral. This work was extended to a general $\calQ$ in \cite{cesa1999minimax}, who observed that, owing to the equalizing property of the optimal strategy $\bp^*$, the Shtarkov sum \eqref{eq: shtarkov-sum-Q} can be expressed as
$
	\calR_n(\calQ) = \EE_{\by\sim \bp^*}\left[ \sup_{\bq\in\calQ} \log \frac{\bq(\by)}{\bp^*(\by)}\right]
$
and further upper bounded by the expected supremum of a subgaussian process indexed by the collection $\calQ$, much in the spirit of the empirical process theory approach in statistics and learning theory \cite{geer2000empirical}. Notably, the process was shown to be subgaussian with respect to a pseudometric $d(f,g)=\left(\sum_{t=1}^n \max_{y_{1:t}} (\log f(y_t|y_{1:t-1})-\log g(y_t|y_{1:t-1}))^2\right)^{1/2},$
where $y_{1:t} :=(y_1,\ldots,y_t)$.  \cite{cesa1999minimax} subsequently developed a Dudley-integral-style bound for the Shtarkov sum; however, the induced covering numbers are difficult to control due to the unbounded nature of the logarithm for small values, ultimately leading to generally suboptimal upper bounds on $\calR_n(\calQ)$ as a consequence of clipping probabilities away from $0$. Remarkably, retaining the logarithm in the definition of the pseudometric yielded an interesting consequence: the main upper bound in \cite[Theorem 3]{cesa1999minimax} assumes a form typical of bounds obtained via localized or \textit{offset Rademacher} complexities, as encountered in square loss regression \cite{geer2000empirical,liang2015learning,mourtada2023universal}. 

Several subsequent attempts have been made to derive tighter upper bounds on minimax regret, with the focus shifting toward the contextual case. In an effort to obtain, as an upper bound on minimax regret, a stochastic process that is subgaussian with respect to a pseudometric on the \textit{values} of the distributions $q_t$ rather than on their logarithms, \cite{rakhlin2015sequential} employed a first-order expansion of the logarithmic loss. This approach upper bounded the minimax regret by a version of sequential offset Rademacher complexity. Unfortunately, despite their efforts to tame the explosive nature of the derivatives, the authors were unable to derive upper bounds on the offset process that were independent of the clipping range, even in the finite case \cite[Lemma 2]{rakhlin2015sequential}. 

The important work of \cite{bilodeau2020tight} leveraged the self-concordance properties of the logarithm to upper bound the minimax value in the contextual case by an offset-like process that offered clear advantages over earlier approaches. In particular, for a finite collection, the resulting process could be controlled without resorting to clipping. However, the process did not exhibit a subgaussian nature, which prevented the authors from employing chaining arguments. This issue arises from the presence of linear terms of the form $q_t(y_t)/p_t(y_t)$, which are small in expectation over $y_t\sim p_t$ (thus permitting single-scale discretization) but become uncontrolled when the ratio is squared.

The Hellinger distance has long been recognized as a convenient metric on the space of distributions \cite{lecam73,haussler1997mutual,yang1999information,geer2000empirical, bilodeau2023minimax}. In particular, as an $\ell_2$-distance between the square roots of distributions, it offers the possibility of combining the benefits of offset-based analysis with those of multi-scale chaining. This is the approach we adopt in this paper. Specifically, we employ an approximation $\log x\le \zeta(x) - \frac{1}{4\log (n|\calY|)}\cdot \zeta(x)^2,$
which holds over an appropriate range of $x$, and where $\zeta(x)$ behaves as $2(\sqrt{x}-1)$ for $x\leq 1$. Applied, roughly speaking, to $x=q_t(y_t)/p_t(y_t)$, this inequality allows us to leverage symmetrization and chaining techniques while also capitalizing on the fast rates provided by the offset sequential Rademacher process. Our approach, therefore, appears to resolve the technical issues encountered by the various techniques, starting with \cite{cesa1999minimax}, at least in the so-called Donsker regime (with respect to our entropy definition), where chaining provides an advantage.

To demonstrate the sharpness of our results---again, in the Donsker regime---for the contextual version of the problem, we develop new lower bound techniques that build upon \cite[Lemma 10]{rakhlin2014online}. In particular, we introduce a novel sequential scale-sensitive dimension, prove a combinatorial result that controls the size of the sequential cover in terms of this dimension, and employ this new notion to derive nearly matching lower bounds for any $\calQ$ (in the contextual case). This approach significantly strengthens the earlier work, which only guaranteed lower bounds for a modified function class. Our techniques will be presented in full detail in the companion paper \cite{JiaPolRak25modified}.

We now summarize our contributions.

\subsection{Summary of Main Results}

We study minimax regret in both non-contextual (\pref{sec: shtarkov}) and contextual (\pref{sec: sequential-probability}) settings. Our results below are stated with respect to sequential square-root entropy, $\calHsq(\calQ, \alpha, n)$, defined formally in \pref{sec: preliminaries}.

\paragraph{An upper bound on minimax regret for the non-contextual case:} For any class of distributions $\calQ\subseteq \Delta(\calY^n)$, with sequential square-root entropy $\calHsq(\calQ, \alpha, n)$ at scale $\alpha$, the minimax regret \eqref{eq: minimax-regret-Q} (and, hence, the Shtarkov sum \eqref{eq: shtarkov-sum-Q}) has the following upper bound:
\begin{align*}
	\calR_n(\calQ)\lesssim 1 + \inf_{\gamma > \delta > 0}\left\{n\delta\sqrt{|\calY|} + \sqrt{n|\calY|}\int_{\delta}^\gamma\sqrt{\calHsq(\calQ, \alpha, n)}d\alpha + \calHsq(\calQ, \gamma, n)\right\},
\end{align*}
where we use $\lesssim$ to hide constants and $\log(n|\calY|)$ factors.

\paragraph{Tight characterization of contextual sequential probability assignment:} 
Focusing on the binary alphabets for simplicity, we provide both an upper bound and a lower bound for the minimax regret, defined below in \eqref{eq: def-minimax-regret} and again denoted here as $\calR_n(\calQ)$ .
The following upper bound holds in terms of sequential square-root entropy:
\begin{align*}
	\calR_n(\calQ)\lesssim 1 + \inf_{\gamma > \delta > 0}\left\{n\delta + \sqrt{n}\int_{\delta}^\gamma\sqrt{\calHsq(\calQ, \alpha, n)}d\alpha + \calHsq(\calQ, \gamma, n)\right\}.
\end{align*} 
According to this upper bound, for any nonparametric function class $\calQ$ which satisfies $\calHsq(\calQ, \alpha, n) = \mathcal{O}\left(\alpha^{-p}\right)$, the minimax regret  is upper bounded as 
\begin{align*} 
	\calR_{n}(\calQ) = \begin{cases}
		\tilde{\mathcal{O}}\left(n^{\frac{p}{p+2}}\right) &\quad \text{if } 0 < p\le 2,\\
		\tilde{\mathcal{O}}\left(n^{\frac{p-1}{p}}\right) &\quad \text{if } p > 2.
	\end{cases} \numberthis \label{eq: R-n-nonparametric}
\end{align*}

In addition, we establish a lower bound demonstrating the tightness of \eqref{eq: R-n-nonparametric} for $0\le p\le 2$. Hence, for nonparametric classes with parameter $p\le 2$, our results offer a tight characterization of the minimax regret in terms of the sequential square-root entropy. Our upper bound further yields an $\tilde{\mathcal{O}}\left(\sqrt{n}\right)$ bound for the Hilbert ball problem, thereby answering a question posed in \cite{rakhlin2015sequential}.

Our contributions are also technical. The proof of the upper bound introduces a novel approach to analyzing the expectation of the offset Rademacher process, enabling us to handle cases with unbounded coefficients. We adopt a chaining argument alongside the analysis of offset Rademacher processes in our proof. On the lower bound side, as mentioned, our techniques involve a new definition of a scale-sensitive dimension and a novel argument for lower-bounding the sequential offset Rademacher complexity that is applicable beyond this paper.

Overall, our results largely resolve the open problem stated in \cite{rakhlin2015sequential} by tightly characterizing the minimax regret of contextual probability assignment for any class of conditional probability distributions in terms of entropic quantities, at least in the Donsker regime (according to the our definition of entropy).

\subsection{Notation}\label{sec: preliminaries}

Given $\bq\in \Delta(\calY^n)$, we write $q_t(y_t\mid \by) = q_t(y_t\mid y_{1:t-1})$ to denote the conditional probability for any length-$n$ sequence $\by = (y_1, \cdots, y_n)\in \calY^n$. 
A $\{0,1\}$-path $\bw$ of depth $n$ is a tuple $(w_1, \ldots, w_n)\in\{0,1\}^n$. For any set $\calX$, a depth-$n$ $\calX$-valued binary tree (or, simply, `a tree') $\bx$ has $2^n-1$ nodes, where each node takes value in $\calX$. Formally, $\bx=(x_1,\ldots,x_n)$ with $x_i:\{0,1\}^{i-1}\to\calX$. We write $x_i(\bw)=x_i(w_{1:i-1})$ for brevity. For a depth-$n$ $\calX$-valued tree $\bx$ and function $f: \calX\to [0, 1]$, we use $f\circ \bx$ to denote the depth-$n$ $[0, 1]$-valued tree whose value at depth $t$ on path $\bw$ equals to $f(x_t(\bw))$. We write $\calF\circ \bx = \{f\circ \bx: f\in \calF\}$.

Additionally, we use the following asymptotic notation: for positive sequence $\{a_n\}$ and $\{b_n\}$ (or functions $f(\alpha), g(\alpha): (0, 1)\to \mathbb{R}_+$, we use $a_n = \mathcal{O}(b_n)$ (or $f(\alpha) = \mathcal{O}(g(\alpha))$) if there exists a positive constant $c$ such that $a_n\le c\cdot b_n$ for any $n$ (or $f(\alpha)\le c\cdot g(\alpha)$ for any $\alpha$), and we use $a_n = \tO(b_n)$ if there exists a positive constant $c$ and positive integer $r$ such that $a_n\le c\cdot(\log n)^r \cdot b_n$ (or $f(\alpha)\le c\cdot (\log(1/\alpha))^r\cdot g(\alpha)$ ). We use notation $a_n = \Omega(b_n)$ (or $f(\alpha) = \Omega(g(\alpha))$ if and only if $b_n = \mathcal{O}(a_n)$ (or $g(\alpha) = \mathcal{O}(f(\alpha))$, and $\tilde{\Omega}$ is defined similarly. The notation $a_n = \Theta(b_n)$ (or $f(\alpha) = \Theta(g(\alpha))$) is used if and only if both $a_n = \mathcal{O}(b_n)$ and $a_n = \Omega(b_n)$ hold (or both $f(\alpha) = \mathcal{O}(g(\alpha))$ and $f(\alpha) = \Omega(g(\alpha))$ hold). The notation $\tilde{\Theta}$ is defined similarly.

\subsection{Organization}
In \pref{sec: shtarkov}, we present our results in upper bounding the Shtarkov sum using sequential square-root entropy. In \pref{sec: sequential-probability}, we revisit the problem of contextual sequential probability assignment, and provide upper and lower bounds for the minimax regret in terms of sequential square-root entropy. Finally, in \pref{sec: proof-sketch-main} we provide a proof sketch of our main result, \pref{thm: general-upper-bound}. All the technical proofs are deferred to the appendix.

\section{Upper Bound for Shtarkov Sum through Sequential Square-Root Covering}\label{sec: shtarkov}

In this section, we upper bound the minimax regret \pref{eq: minimax-regret-Q} or Shtarkov sum \pref{eq: shtarkov-sum-Q} in terms of the $\ell_\infty$ sequential square-root covering defined as follows. \begin{definition}[sequential square-root cover and entropy]\label{def: sequential-covering}
	Let $\calY$ be a finite alphabet. For a class of joint distributions $\calQ$ over $\calY^n$, we say that a finite class $\calV$ of joint distributions over $\calY^n$ is a sequential square-root cover (in the $\ell_\infty$ sense) of $\calQ$ at scale $\alpha$ if 
	\begin{align}
    \label{eq:sqrt_cover_def}
    \sup_{\bq\in \calQ} \max_{\bw\in \calY^n}\min_{\bv\in \calV} \max_{t\in [n]} \max_{y\in \calY} \left|\sqrt{q_t(y \mid \bw)} - \sqrt{v_t(y \mid \bw)}\right|\le \alpha.
    \end{align}
	We use $\calNsq(\calQ, \alpha, n)$ to denote the size of the smallest cover of class $\calQ$, and we use $\calHsq(\calQ, \alpha, n) = \log \calNsq(\calQ, \alpha, n)$ to denote the sequential square-root entropy of $\calQ$.
\end{definition}
In words, the requirement placed on $\calV$ is that for any joint distribution $\bq\in\calQ$ and any sequence $\bw\in\calY^n$, there exists a ``representative'' joint distribution $\bv$ in $\calV$ that is close to $\bq$ in terms of the difference of square roots of the conditional probabilities $\bq$ and $\bv$ assign to any outcome $y$, uniformly for all time steps. 

In this definition, $\ell_\infty$ refers to the maximum over $t\in[n]$, which is consistent with prior uses of such sequential and empirical notions of a cover. We also remark that $\max_{y\in \calY} \left|\sqrt{q_t(y \mid \bw)} - \sqrt{v_t(y \mid \bw)}\right|$ is within $\sqrt{|\calY|}$ of the Hellinger distance between these two conditional distributions (which is the $\ell_2$ version with respect to the $y\in\calY$). If scaling with $|\calY|$ is not of interest, we can instead think of the sequential square-root cover as a \textit{sequential Hellinger cover}.

\begin{theorem}\label{thm: general-upper-bound}
	For any $n\ge 7$ and class $\calQ\subseteq \Delta(\calY^n)$, we have 
	\begin{align*}
		\calR_n(\calQ) = \tilde{\mathcal{O}}\left(1 + \inf_{\gamma > \delta > 0}\left\{n\delta\sqrt{|\calY|} + \sqrt{n|\calY|}\int_{\delta}^\gamma\sqrt{\calHsq(\calQ, \alpha, n)}d\alpha + \calHsq(\calQ, \gamma, n)\right\}\right),
	\end{align*}
	where $\tilde{\mathcal{O}}$ hides constants and logarithmic factors of $n$ and $|\calY|$. 
\end{theorem}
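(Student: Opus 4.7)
The plan is to exploit the equalizing property of the Shtarkov distribution together with a self-concordance-style inequality for the logarithm, reducing the Shtarkov sum to an offset sequential Rademacher process that I can then control by multi-scale chaining against the square-root cover of \pref{def: sequential-covering}. Concretely, I start from the identity
\[
\calR_n(\calQ) = \EE_{\by \sim \bp^*}\!\left[\sup_{\bq \in \calQ}\log \tfrac{\bq(\by)}{\bp^*(\by)}\right]
\]
afforded by the normalized maximum likelihood $\bp^*$, telescope the joint log-ratio into per-round contributions, and apply a pointwise inequality of the form $\log x \le \zeta(x) - \tfrac{1}{4\log(n|\calY|)}\zeta(x)^2$ to each $x = q_t(y_t\mid y_{1:t-1})/p^*_t(y_t\mid y_{1:t-1})$, where $\zeta$ behaves like $2(\sqrt{x}-1)$ in the relevant range. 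This replaces the unbounded logarithm by quantities whose differences are Lipschitz in the square-root metric of \pref{def: sequential-covering} and injects a negative quadratic offset that will enable fast-rate localization.

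Next, since the conditional mean $\EE_{y_t \sim p^*_t}[q_t(y_t)/p^*_t(y_t) - 1 \mid y_{1:t-1}]$ vanishes, a sequential symmetrization converts the resulting expression into an offset sequential Rademacher process on a $\calY$-valued tree $\bx$, indexed by the square roots $\sqrt{q_t}$ of the conditional densities. The quantity to bound takes the shape
\[
\EE_{\bepsilon}\sup_{\bq\in\calQ}\left\{\sum_{t=1}^n \epsilon_t\, A_t(\bq,\bx,\bepsilon) - \tfrac{1}{\log(n|\calY|)}\sum_{t=1}^n B_t(\bq,\bx,\bepsilon)\right\},
\]
where $A_t$ involves $\zeta$ evaluated at the square-root conditional and $B_t$ is a Hellinger-type quadratic, so that increments of $A$ are controlled by the $\ell_\infty$ square-root pseudo-metric in \eqref{eq:sqrt_cover_def} and the offset $B$ matches the variance proxy of $A$ in the Bernstein sense.

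I then perform multi-scale chaining against the sequential square-root cover. Below the finest scale $\delta$, a trivial per-coordinate-to-$\ell_2$ bound contributes $n\delta\sqrt{|\calY|}$, with the $\sqrt{|\calY|}$ factor coming from converting the per-outcome square-root distance in \eqref{eq:sqrt_cover_def} to an $\ell_2$-across-$\calY$ (Hellinger) distance on each round. At intermediate scales $\alpha \in [\delta,\gamma]$, union-bounding the Rademacher increments over the cover produces the Dudley-integral contribution $\sqrt{n|\calY|}\int_\delta^\gamma \sqrt{\calHsq(\calQ,\alpha,n)}\,d\alpha$. At the coarsest scale $\gamma$, rather than paying the usual $\sqrt{n\calHsq}$ penalty I trade the union bound against the quadratic offset in the Bernstein regime: the aggregated $-\zeta^2$ term absorbs the variance of each of the $\exp(\calHsq(\calQ,\gamma,n))$ cover elements, leaving only the additive log-cover cost $\calHsq(\calQ,\gamma,n)$. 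Tracking the $\log(n|\calY|)$ constant hidden in $\zeta$ accounts for the suppressed $\tilde{\mathcal{O}}$ factors.

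The main obstacle is in the chaining step: because ratios $q_t/p^*_t$ can be arbitrarily large, the offset sequential Rademacher process has \emph{unbounded} coefficients, and the subgaussian chaining used in \cite{cesa1999minimax,rakhlin2015sequential} does not directly apply. The novel step---the one I expect to be the technically demanding piece---is executing chaining under the square-root reparametrization: showing that the quadratic offset produced by $\log x \le \zeta(x) - c\,\zeta(x)^2$ is precisely aligned with the square-root pseudo-metric of \pref{def: sequential-covering} and is strong enough to swallow the large-deviation tails of each chaining link without probability clipping, which is exactly what prevented earlier techniques from obtaining bounds of this form.
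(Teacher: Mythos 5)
Your architecture is the same as the paper's: a dual/Shtarkov representation, the transformation $\log x\le \zeta(x)-\frac{1}{4\log(n|\calY|)}\zeta(x)^2$, sequential symmetrization into an offset Rademacher process, and multi-scale chaining against the square-root cover, with the three chaining terms matching the three terms of the bound. However, two steps that you gloss over are genuine gaps rather than routine details. First, the inequality $\log x\le \zeta(x)-c\,\zeta(x)^2$ only holds for $x\le n^2|\calY|$, and the conditional ratios $q_t(y_t\mid y_{1:t-1})/p_t(y_t\mid y_{1:t-1})$ are not bounded for general $\bq$ and $\bp$ (even for the Shtarkov $\bp^*$ only the \emph{product} of the ratios is controlled, not each factor). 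The paper handles this by explicitly truncating both $\bq$ and $\bp$ so that all conditional probabilities lie in $[1/(n^2|\calY|),1]$, at an additive $O(\log(n|\calY|))$ cost (\pref{lem: q-delta}, \pref{lem: p-delta}); without this your pointwise inequality does not apply. Relatedly, your justification for symmetrization (``the conditional mean of $q_t/p_t^*-1$ vanishes'') is not the statement actually needed: one needs $\EE_{y\sim p}\bigl[-\zeta(f(y)/p(y))-c\,\zeta(f(y)/p(y))^2\bigr]\ge 0$, i.e.\ nonnegativity of the $f$-divergence induced by the convex function $-\zeta(x)-c\,\zeta(x)^2$ (\pref{prop: symmetrization-inverse}); $\EE[\zeta(q/p)]$ itself is not zero.

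Second, you correctly flag unbounded-coefficient chaining as the hard step, but ``the offset swallows the tails'' is not a mechanism, and the argument does not close as stated. Two concrete devices are required. (i) To retain any offset at the coarsest scale one must transfer the quadratic term from $\bq$ to a cover representative; this fails for the nearest element in general, and the paper fixes it by augmenting the cover with $\bp$ itself and selecting the representative so that $\sum_t\zeta(q_t/p_t)^2\ge\frac{1}{4}\sum_t\zeta(v_t/p_t)^2$ while the squared approximation error does not increase (\pref{lem: another-cover}). (ii) For the Dudley links, a union bound with a sup-norm control on increments is unavailable; instead one inserts an artificial offset $-\lambda u_t^2$ at each scale, bounds the resulting offset process by the finite-class lemma valid for arbitrary predictable coefficients (\pref{lem: offset-lemma}), bounds the \emph{expected} squared increments by $O(n\alpha^2|\calY|)$ via the Lipschitz property of $\zeta$ and $\EE[1/p_t(y_t)]=|\calY|$, and then optimizes $\lambda$ (\pref{lem: non-offset-lemma}). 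Supplying the truncation step and these two devices would turn your outline into the paper's proof.
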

The proof of \pref{thm: general-upper-bound} is deferred to \pref{sec: proof-upper-bound-app}, and we provide a sketch of the proof in \pref{sec: proof-sketch-main}. The theorem immediately implies an upper bound on $\calR_n(\calQ)$ whenever the sequential square-root entropy scales with $\alpha^{-p}$, as shown in the following corollary. 
\begin{corollary}
	When $\calHsq(\calQ, \alpha, n) = \tO\left(\alpha^{-p}\right)$ for some $p\ge 0$, it holds that
	$$\calR_n(\calQ) = \begin{cases}
		\tO\left(n^{\frac{p}{p+2}}\right) &\quad \text{if }p \le 2,\\
		\tO\left(n^{\frac{p-1}{p}}\right) &\quad \text{if }p > 2.
	\end{cases}$$
\end{corollary}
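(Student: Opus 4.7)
I would invoke \pref{thm: general-upper-bound} and substitute the entropy bound $\calHsq(\calQ, \alpha, n) = \tO(\alpha^{-p})$, treating $|\calY|$ and polylogarithmic factors as absorbed by $\tO$. The quantity to minimize then becomes $n\delta + \sqrt{n}\int_\delta^\gamma \alpha^{-p/2}\,d\alpha + \gamma^{-p}$. The qualitative behavior switches at $p = 2$ according to whether $\int_0^\gamma \alpha^{-p/2}\,d\alpha$ is convergent at $0$, and this dichotomy is exactly what produces the two cases of the statement. In the Dudley-type regime ($p \le 2$) I can send $\delta$ to $0$ and balance the remaining two terms in $\gamma$; in the regime $p > 2$ the integral is dominated by its lower endpoint, forcing me to pin $\gamma$ at a constant and to balance in $\delta$ instead.

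\textbf{Case $p \le 2$.} I would use $\int_\delta^\gamma \alpha^{-p/2}\,d\alpha \le \frac{\gamma^{1-p/2}}{1-p/2}$ for $p < 2$, and $\log(\gamma/\delta)$ at $p = 2$. Choosing $\delta = 1/n$ renders $n\delta = O(1)$ negligible. Balancing $\sqrt{n}\cdot\gamma^{1-p/2}$ against $\gamma^{-p}$ then yields $\gamma \asymp n^{-1/(p+2)}$, so that both terms are of order $n^{p/(p+2)}$. The boundary $p = 2$ gives the same exponent, with the extra $\log(\gamma/\delta) = O(\log n)$ absorbed into $\tO$. Verifying $\gamma > \delta$ is immediate since $1/(p+2) < 1$.

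\textbf{Case $p > 2$.} Here $\int_\delta^\gamma \alpha^{-p/2}\,d\alpha \le \frac{\delta^{1-p/2}}{p/2-1}$, which depends only on $\delta$ up to constants. Since $\gamma^{-p}$ decreases in $\gamma$, it is optimal to take $\gamma$ as large as the problem permits; $\gamma = 1$ is the natural choice because square-root differences are bounded by $1$ so no finer cover exists at larger scales. With this choice $\gamma^{-p}$ and $\calHsq(\calQ, \gamma, n)$ are both $\tO(1)$, and it remains to balance $n\delta$ against $\sqrt{n}\cdot\delta^{1-p/2}$. Setting these equal gives $\delta \asymp n^{-1/p}$, and each term is of order $n^{(p-1)/p}$, as required. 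Again $\gamma > \delta$ holds for large enough $n$.

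\textbf{Main obstacle.} There is essentially no analytic obstacle: the corollary reduces to an elementary two-regime optimization once \pref{thm: general-upper-bound} is available. The only care required is bookkeeping — verifying that $\gamma > \delta > 0$ holds for the chosen values, and that the polylogarithmic factors from the $\tO$ in the entropy assumption, from $|\calY|$, and from the hidden factors of \pref{thm: general-upper-bound} all fold cleanly into the single $\tO$ of the final statement. All substantive technical content, namely the offset-Rademacher and chaining analysis, is already encapsulated in \pref{thm: general-upper-bound}.
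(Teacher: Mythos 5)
Your proposal is correct and follows exactly the paper's route: the paper proves this corollary (and its contextual analogue) by plugging the entropy bound into \pref{thm: general-upper-bound} with the choices $(\gamma,\delta)=(n^{-1/(p+2)},n^{-1})$ for $p\le 2$ and $(\gamma,\delta)=(1,n^{-1/p})$ for $p>2$, which are precisely the values your balancing yields. The two-regime optimization, the treatment of the boundary case $p=2$ via the logarithmic integral, and the absorption of $|\calY|$ and polylog factors into $\tO$ all match the paper's (terse) argument.
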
 

\subsection{Comparison with Previous Results} 
We compare our results with \cite{cesa1999minimax, cesa2006prediction}, which also provide an upper bound on the minimax regret using entropy. Taking 
\begin{equation}\label{eq: CBL-d}
d(f,g)=\left(\frac{1}{n}\sum_{t=1}^n \max_{y_{1:n}} (\log f(y_t|y_{1:t-1})-\log g(y_t|y_{1:t-1}))^2\right)^{1/2},
\end{equation}
as the (pseudo)metric, the authors define a notion of entropy $\calH_{\sf log}(\calQ, \alpha, n)$ as the logarithm of the size of the smallest covering at scale $\alpha$ under $d$. \cite{cesa1999minimax} establish that 
\begin{equation}\label{eq: regret-cbl}
    \calR_n(\calQ)\lesssim \inf_{\gamma > 0}\left\{ \sqrt{n}\int_{0}^\gamma\sqrt{\calH_{\sf log}(\calF, \epsilon, n)}d\epsilon + \calH_{\sf log}(\calF, \gamma, n)\right\}.
\end{equation}
The form of the bound appears frequently in the literature on prediction with square loss, in both fixed design regression and online regression. Writing the definition of the above covering notion in the form of \eqref{eq:sqrt_cover_def}, we have
\begin{align}
    \label{eq:CBL_cover_def}
    \sup_{\bq\in \calQ} \min_{\bv\in \calV} 
    \max_{\bw\in \calY^n}
    \max_{t\in [n]} \max_{y\in \calY} \left|\log q_t(y \mid \bw) - \log v_t(y \mid \bw)\right|\le \alpha.
\end{align}
with the only difference that we opted for $\max_{t\in[n]}$ instead of the $\ell_2$ version employed above. Modulo this difference, the requirement \eqref{eq:CBL_cover_def} is clearly more stringent than \eqref{eq:sqrt_cover_def} as the representative $\bv$ has to be chosen irrespective of the data $\bw$, making the notion of the cover similar to the (often prohibitively large) $\sup$-norm cover. The line of work on sequential complexities addresses this shortcoming via symmetrization, an approach we also take in this paper. Finally, we note that 
    $\sup_{y\in \calY} |\sqrt{p(y)} - \sqrt{q(y)}| \leq \sup_{y\in \calY} |\log p(y) - \log q(y)|$
and, thus, we expect $\calH_{\log}$ to be larger (and often much larger) than $\calHsq$.

Note that while the sequential square-root entropy is an improvement over the entropy in \cite{cesa1999minimax}, there are still interesting distribution classes where it does not yield the correct bound. For example, consider the renewal process class (definition is included in \pref{sec: renewal-app}). It is known from \cite{csiszar1996redundancy} that minimax regret is $\Theta(\sqrt{n})$. In \pref{sec: renewal-app}, however, we show that the sequential square-root entropy is always lower bounded by $\Omega(n)$.

\section{Binary Contextual Sequential Probability Assignment}\label{sec: sequential-probability}

In this section, we connect the problem of contextual sequential probability assignment to the non-contextual case discussed in the previous section. Application of the general bound of Theorem~\ref{thm: general-upper-bound} will then lead to the main results of our paper.

 For simplicity of presentation, and to make our results more directly comparable to prior work, we focus on the binary alphabet $\calY=\{0,1\}$. With some abuse of notation we let $\hp_t \in [0, 1]$ denote the probability of the outcome $1$. The loss incurred on round $t$ after making the prediction $\hp_t$ can thus be written as 
\begin{equation}\label{eq: logloss}
    \ell(\hp_t, y_t) := -y_t\log \hp_t - (1-y_t)\log (1-\hp_t).
\end{equation}
Similarly, we re-parametrize conditional distributions $\calQ$ by instead working with a class $\calF$ of experts, mapping $\calX$ to $[0,1]$. This re-parametrization is consistent with other prior works. With this notation, the cumulative loss of an expert $f$ is $\sum_{t=1}^n \ell(f(x_t), y_t)$, and regret is defined (in a form that is more explicit than \eqref{eq:regret_with_side_info}) as
\begin{equation}\label{eq: def-R-n}
	\calR_n(\calF, \hp_{1:n}, x_{1:n}, y_{1:n})\coloneqq \sum_{t=1}^n \ell(\hp_t, y_t) - \inf_{f\in \calF} \sum_{t=1}^n \ell(f(x_t), y_t).
\end{equation}
Recall that $x_t$ may depend arbitrarily on the history 
$$\calH_t=\{x_1, \hp_1, y_1, \ldots, x_{t-1}, \hp_{t-1}, y_{t-1}\},$$
and $y_t$ may depend arbitrarily on $\calH_t, x_t$ and $\hp_t$. Based on the order of making predictions and observing outcomes, we define the minimax regret as
\begin{equation}\label{eq: def-minimax-regret}
	\calR_{n}(\calF) = \sup_{x_{1}\in\calX}\inf_{\hp_{1}\in [0, 1]}\sup_{y_{1}\in \{0, 1\}}\cdots\sup_{x_{n}\in\calX}\inf_{\hp_{n}\in [0, 1]}\sup_{y_{n}\in \{0, 1\}}\calR(\calF, \hp_{1:n}, x_{1:n}, y_{1:n}),
\end{equation}
or, more succinctly, as
$$\calR_{n}(\calF) = \left\{\sup_{x_{t}\in\calX}\inf_{\hp_{t}\in [0, 1]}\sup_{y_{t}\in \{0, 1\}}\right\}_{t=1}^n\calR(\calF, \hp_{1:n}, x_{1:n}, y_{1:n}).$$
Here, the curly braces indicated a repeated application of the operators.

The above expression indeed matches the aforementioned dependencies. To make the connection to the previous section, we start with the following observation. Consider an adversary that is not allowed to adapt the sequence of $x$'s to the past predictions made by the forecaster and instead has to fix ahead of time a strategy for choosing $x$'s based only on the outcomes $y$'s; this is equivalent to fixing an $\calX$-valued tree $\bx=(x_1,\ldots,x_n)$ and presenting $x_t(y_{1:t-1})$ to the forecaster at the beginning of round $t$. Notably, the sequence of $y_t$'s can still be adapted to the predictions of the forecaster. The following lemma states that such an adversary is just as powerful as the fully-adaptive one, even if the forecaster knows the strategy $\bx$:
\newcommand{\multiminimax}[1]{\left\langle #1\right\rangle}
\begin{lemma}\label{lem: tree-transductive}
    For any $\ell:\calY\times\calY\to\RR$ which is convex in its first argument, and for any $\phi:\calY^n\times \calX^n\to\RR$, 
    \begin{align*} 
        &
        \left\{
            \sup_{x_t} \inf_{\hp_t}\sup_{y_t}
            \right\}_{t=1}^n\left[\sum_{t=1}^n \ell(\hp_t, y_t) - \phi(y_{1:n},x_{1:n}) \right]\\
        & = \sup_{\bx} \left\{\inf_{\hp_t}\sup_{y_t}\right\}_{t=1}^n \left[\sum_{t=1}^n \ell(\hp_t, y_t) - \phi(y_{1:n},x_1,x_2(y_1),\ldots,x_n(y_{1:n-1}))\right]
    \end{align*}
    where the supremum in the last expression is over all depth-$n$ $\calX$-valued trees $\bx$.
    In particular, for $\phi(y_{1:n},x_{1:n})=\inf_{f\in \calF}\sum_{t=1}^n \ell(f(x_t), y_t)$ and logarithmic loss discussed in this paper, $$\calR_n(\calF) = \sup_{\bx}\calR_n(\calQ_{\bx}) $$ 
    for the set $\calQ_{\bx}=\calF \circ \bx = \{f\circ \bx: f\in \calF\}$ with $(f\circ \bx) (\by) = \prod_{t=1}^n \{\mathbb{I}[y_t = 1]f(x_t(\by)) + \mathbb{I}[y_t = 0](1 - f(x_t(\by))\}$ for any $\by\in \{0, 1\}^n$.
\end{lemma}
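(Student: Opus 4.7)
The plan is to prove the first identity by backward induction on $t$, then specialize to logarithmic loss to deduce the second claim. For each $t$, introduce the ``value from round $t+1$ onwards''
$$V_t(h_t)\;:=\;\Bigl\{\sup_{x_s}\inf_{\hp_s}\sup_{y_s}\Bigr\}_{s=t+1}^n\Bigl[\sum_{s=t+1}^n\ell(\hp_s,y_s)-\phi(y_{1:n},x_{1:n})\Bigr],$$
and write $U_t(\bx';h_t)$ for the analogous quantity in which the future covariates are committed via a depth-$(n-t)$ tree $\bx'$. The lemma's first identity then becomes $V_0=\sup_{\bx}U_0(\bx)$. The base case $t=n$ is vacuous, and the easy direction $V_t\ge \sup_{\bx'}U_t(\bx';h_t)$ is immediate: for any tree $\bx'$ and any adaptive $y$-strategy, the fully-adaptive adversary defining $V_t$ can simulate them by playing $x_s=x_s(y_{1:s-1})$.

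\textbf{Inductive step and the main obstacle.} Given the induction hypothesis at level $t+1$ and decomposing a depth-$(n-t)$ tree into its root $x_{t+1}$ and two subtrees $\bx'_0,\bx'_1$ (one for each value of $y_{t+1}\in\{0,1\}$), the reverse inequality reduces to establishing the swap
$$\sup_{\bx'_0,\bx'_1}\inf_{\hp}\max_{y\in\{0,1\}}\bigl[\ell(\hp,y)+U(y,\bx'_y)\bigr]\;=\;\inf_{\hp}\max_{y\in\{0,1\}}\bigl[\ell(\hp,y)+\sup_{\bx''}U(y,\bx'')\bigr],$$
where $U(y,\bx'')$ abbreviates $U_{t+1}(\bx'';h_t,x_{t+1},y)$. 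This swap is the main technical hurdle and is the only place where convexity of $\ell(\cdot,y)$ is used. I will establish it by randomizing $y$: rewrite $\max_y$ as $\sup_{p\in\Delta(\{0,1\})}\EE_{y\sim p}$ and apply Sion's minimax theorem (valid because $\ell(\cdot,y)$ is convex in $\hp$, the expectation is linear in $p$, and $\Delta(\{0,1\})$ is compact) to exchange $\inf_{\hp}$ and $\sup_p$. After this exchange, the $U$-term becomes the linear combination $p(0)U(0,\bx'_0)+p(1)U(1,\bx'_1)$, so $\sup_{\bx'_0,\bx'_1}$ passes inside and separates into $\EE_{y\sim p}\sup_{\bx''}U(y,\bx'')$. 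Applying Sion's theorem once more in reverse recovers $\inf_{\hp}\max_y$, with $\sup_{\bx''}U$ now sitting inside. Combining with the outer $\sup_{x_{t+1}}$ and invoking the induction hypothesis $V_{t+1}=\sup_{\bx''}U_{t+1}$ closes the induction.

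\textbf{Log-loss consequence.} For the second statement, set $\phi(y_{1:n},x_{1:n})=\inf_{f\in\calF}\sum_t\ell(f(x_t),y_t)$ with $\ell$ the binary log-loss of \eqref{eq: logloss}, which is convex in its first argument, so the first part applies and yields $\calR_n(\calF)=\sup_{\bx}\{\inf_{\hp_t}\sup_{y_t}\}_{t=1}^n\bigl[\sum_t\ell(\hp_t,y_t)-\inf_f\sum_t\ell(f(x_t(\by)),y_t)\bigr]$. It remains to recognize the bracketed quantity as $\calR_n(\calQ_{\bx})$. Using the identity $\sum_t\ell(f(x_t(\by)),y_t)=-\log(f\circ\bx)(\by)$, which follows directly from the definition of $f\circ\bx$, together with $-\log\bhp(\by)=\sum_t\ell(\hp_t,y_t)$ under the bijection between joint distributions on $\calY^n$ and sequential forecasts, the bracketed expression equals $\sup_{\bq\in\calQ_{\bx}}\log(\bq(\by)/\bhp(\by))$. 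The equivalence of the sequential minimax $\{\inf_{\hp_t}\sup_{y_t}\}$ and the batch minimax $\inf_{\bhp}\sup_{\by}$ for log-loss---standard for convex losses, and transparent here via the Shtarkov identity \eqref{eq: shtarkov-sum-Q}---then gives exactly $\calR_n(\calQ_{\bx})$, and taking $\sup_{\bx}$ yields $\calR_n(\calF)=\sup_{\bx}\calR_n(\calQ_{\bx})$.
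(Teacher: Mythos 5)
Your proof is correct and follows essentially the same route as the paper's: both replace $\sup_{y_t}$ by $\sup_{p_t}\EE_{y_t\sim p_t}$, invoke a convex--concave minimax theorem to swap $\inf_{\hp_t}$ with $\sup_{p_t}$, and then exploit linearity of the expectation in $p_t$ to pull the covariate suprema out as a supremum over trees. Your backward induction with the value functions $V_t$ and $U_t$ is an explicit rendering of the step the paper dispatches in one line as ``skolemization,'' so the two arguments coincide in substance.
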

For the logarithmic loss, this result was proved in \cite[Theorem 3.2]{liu2024sequential}. The proof of \pref{lem: tree-transductive} is deferred to \pref{sec: upper-bound-proof}.

The importance of this proposition is two-fold. First, it shows a possibly counter-intuitive property that regret is unchanged if the adversary's $x$'s are not allowed to depend on the actions of the forecaster, but only on the $y$'s. In other words, there exists a best possible adversarial tree $\bf x$ that saturates regret for all possible strategies of the forecaster.\footnote{Note that the optimal learning algorithm for this $\bf x$ tree is not guaranteed to be optimal for the actual problem ~\eqref{eq: def-minimax-regret}.} Second, note that when the tree $\bx$ is fixed ahead of time, the resulting problem corresponds to the problem discussed in Theorem~\ref{thm: general-upper-bound} with $\calQ_{\bx} = \calF \circ \bf x$.

\subsection{Upper Bound with Sequential Square-Root Entropy}\label{sec: upper-bound}

We now repeat the definition \pref{def: sequential-covering}, adapting it to the case of binary alphabet and the real-valued re-parametrization of probabilities:
\begin{definition}[sequential square-root cover and entropy]\label{def: sequential-covering-f}
	Suppose $\calV$ and $\calA$ are two sets of $[0, 1]$-valued binary trees of depth $n$. We say $\calV$ is a sequential square-root cover (in the  $\ell_\infty$ sense) of $\calA$ at scale $\alpha$ if
	$$\max_{\by\in \{0, 1\}^n} \sup_{\ba\in \mathcal{A}} \inf_{\bv\in\calV} \max_{t\in [n]} \max\left\{\left|\sqrt{a_t(\by)} - \sqrt{v_t(\by)}\right|, \left|\sqrt{1 - a_t(\by)} - \sqrt{1 - v_t(\by)}\right|\right\}\le \alpha.$$
	We use $\calNsq(\calA, \alpha, n)$ to denote the size of the smallest sequential square-root cover at scale $\alpha$. For any set $\calX$ and function class $\calF\subseteq\{f:\calX\to [0, 1]\}$, the sequential square-root entropy of function class $\calF$ on an $\calX$-valued tree $\bx$ (of depth $n$) at scale $\alpha$ is defined as 
$$\calHsq(\calF, \alpha, n, \bx) = \log\calNsq(\calF\circ \bx, \alpha, n).$$
\end{definition}
With the above definition of sequential square-root entropy, we have the following theorem:
\begin{theorem}\label{thm: upper-bound}
	For any $\calX$ and function class $\calF\in [0, 1]^{\calX}$, we have 
	\begin{align*}
		\calR_n(\calF) = \tO\left(\sup_{\bx}\left\{1 + \inf_{\gamma > \delta > 0}\left\{n\delta + \sqrt{n}\int_{\delta}^\gamma\sqrt{\calHsq(\calF, \alpha, n, \bx)}d\alpha + \calHsq(\calF, \gamma, n, \bx)\right\}\right\}\right),
	\end{align*}
	where the supremum is over all depth-$n$ $\calX$-valued trees $\bx$.
\end{theorem}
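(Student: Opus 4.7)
\textbf{Proof proposal for Theorem \ref{thm: upper-bound}.}

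The plan is to reduce the contextual problem to the non-contextual setting of \pref{thm: general-upper-bound} via \pref{lem: tree-transductive}, and then verify that the two notions of sequential square-root entropy (\pref{def: sequential-covering} applied to $\calQ_{\bx}$ versus \pref{def: sequential-covering-f} applied to $\calF$ on the tree $\bx$) coincide when $\calY = \{0,1\}$. By \pref{lem: tree-transductive},
\[
\calR_n(\calF) \;=\; \sup_{\bx} \calR_n(\calQ_{\bx}), \qquad \calQ_{\bx} \;=\; \calF\circ\bx\subseteq \Delta(\{0,1\}^n).
\]
Fix any depth-$n$ $\calX$-valued tree $\bx$. Applying \pref{thm: general-upper-bound} to the class $\calQ_{\bx}$ with $|\calY|=2$ and absorbing $\sqrt{2}$ into the hidden constants gives
\[
\calR_n(\calQ_{\bx}) \;=\; \tO\left(1 + \inf_{\gamma>\delta>0}\left\{ n\delta + \sqrt{n}\int_{\delta}^{\gamma} \sqrt{\calHsq(\calQ_{\bx},\alpha,n)}\,d\alpha + \calHsq(\calQ_{\bx},\gamma,n)\right\}\right).
\]

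The remaining step is to identify $\calHsq(\calQ_{\bx},\alpha,n)$ (the non-contextual entropy from \pref{def: sequential-covering}) with $\calHsq(\calF,\alpha,n,\bx) = \log\calNsq(\calF\circ\bx,\alpha,n)$ from \pref{def: sequential-covering-f}. For $\bq = f\circ \bx\in\calQ_{\bx}$ and any path $\bw\in\{0,1\}^n$, the conditional distribution $q_t(\cdot\mid\bw)$ equals $\ber(f(x_t(\bw)))$, so its conditional probability of $1$ is exactly the value of the $[0,1]$-valued tree $\ba = f\circ \bx$ at node $w_{1:t-1}$. Under the standard bijection between depth-$n$ $[0,1]$-valued binary trees and joint distributions on $\{0,1\}^n$ (via the chain rule), the quantity
\[
\max_{y\in\{0,1\}}\left|\sqrt{q_t(y\mid\bw)} - \sqrt{v_t(y\mid\bw)}\right|
\]
in \pref{def: sequential-covering} becomes exactly
\[
\max\!\left\{\left|\sqrt{a_t(\bw)} - \sqrt{v_t(\bw)}\right|,\ \left|\sqrt{1-a_t(\bw)} - \sqrt{1-v_t(\bw)}\right|\right\}
\]
in \pref{def: sequential-covering-f}. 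Thus the minimal covers and their logarithms are equal: $\calHsq(\calQ_{\bx},\alpha,n) = \calHsq(\calF,\alpha,n,\bx)$.

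Substituting this identity into the upper bound for $\calR_n(\calQ_{\bx})$ and then taking the supremum over $\bx$ (noting that the hidden constants and logarithmic factors in $\tO$ do not depend on $\bx$) yields the claimed bound. The only real content beyond invoking \pref{thm: general-upper-bound} and \pref{lem: tree-transductive} is this bookkeeping translation between the two definitions of the cover; the main obstacle, therefore, is conceptual rather than technical, namely, making sure that the $\max_{\bw}$/$\min_{\bv}$ ordering in \pref{def: sequential-covering} (which permits the representative $\bv$ to depend on the path $\bw$, exactly as in sequential covers) matches the $\sup_{\ba}\inf_{\bv}\max_{\by}$ ordering in \pref{def: sequential-covering-f}; both allow $\bv$ to be adapted to $\bq$ but not to $\bw$, so the equivalence indeed holds and no loss is incurred.
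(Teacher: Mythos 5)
Your proposal is correct and follows essentially the same route as the paper: reduce to the non-contextual case via \pref{lem: tree-transductive}, apply \pref{thm: general-upper-bound} with $|\calY|=2$, and identify the cover of $\calF\circ\bx$ from \pref{def: sequential-covering-f} with a cover of the induced distribution class from \pref{def: sequential-covering} (the paper only proves the one inequality between covering numbers that is actually needed, whereas you assert equality, which subsumes it). Your closing sentence misquotes the quantifier order in \pref{def: sequential-covering-f} (it is $\max_{\by}\sup_{\ba}\inf_{\bv}$, and in both definitions $\bv$ may depend on both the function and the path), but this does not affect the argument.
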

This theorem has the following direct corollary, which provides explicit upper bounds on the minimax regret whenever the growth of sequential square-root entropy at scale $\alpha$ is bounded by $\tO(\alpha^{-p})$ for some $p\ge 0$.
\begin{corollary}\label{corr: entropy}
	For any function class $\calF\subseteq\{f:\calX\to [0, 1]\}$, suppose the sequential square-root entropy $\calHsq(\calF, \alpha, n, \bx)$ at scale $\alpha$ satisfies $\sup_{\bx}\calHsq(\calF, \alpha, n, \bx) = \tO\left(\alpha^{-p}\right)$ for some $p\ge 0$. The minimax regret $\calR_n(\calF)$ is upper bounded by
	\begin{align*} 
		\calR_{n}(\calF) = \begin{cases}
			\tilde{\mathcal{O}}\left(n^{\frac{p}{p+2}}\right) &\quad \text{if } 0 \le p\le 2,\\
			\tilde{\mathcal{O}}\left(n^{\frac{p-1}{p}}\right) &\quad \text{if }p > 2.
		\end{cases}
	\end{align*}
\end{corollary}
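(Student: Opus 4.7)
The plan is to apply Theorem~\ref{thm: upper-bound} directly and reduce the corollary to an optimization problem over $\gamma$ and $\delta$, which we then solve separately for $p<2$, $p=2$, and $p>2$. Let $C$ be a constant (absorbing the $\tO$ logarithmic factors) such that $\sup_{\bx}\calHsq(\calF,\alpha,n,\bx)\le C\alpha^{-p}$. By Theorem~\ref{thm: upper-bound}, it suffices to show that
\[
\inf_{\gamma>\delta>0}\Bigl\{n\delta+\sqrt{n}\int_{\delta}^{\gamma}\sqrt{C}\,\alpha^{-p/2}\,d\alpha+C\gamma^{-p}\Bigr\}
\]
is of the claimed order. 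Since covers at scales $\alpha\ge 1$ are trivial, we may restrict attention to $\gamma\le 1$.

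First consider $0\le p<2$. Here $1-p/2>0$, so
\[
\int_{\delta}^{\gamma}\alpha^{-p/2}\,d\alpha=\tfrac{1}{1-p/2}\bigl(\gamma^{1-p/2}-\delta^{1-p/2}\bigr)\le \tfrac{1}{1-p/2}\,\gamma^{1-p/2}.
\]
Choosing $\delta=1/n$ makes $n\delta=1$, so the bound becomes $O\bigl(1+\sqrt{n}\,\gamma^{1-p/2}+\gamma^{-p}\bigr)$. Balancing the last two terms, $\sqrt{n}\,\gamma^{1-p/2}\asymp \gamma^{-p}$, yields $\gamma=n^{-1/(p+2)}$, hence both terms are $\Theta\bigl(n^{p/(p+2)}\bigr)$. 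This gives the first case of the corollary (and, for $p=0$, a polylogarithmic bound absorbed into $\tO$).

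Next, $p=2$ is the boundary case where the integral becomes $\log(\gamma/\delta)$. Choosing $\delta=1/n$ gives an extra $\sqrt{n}\log(n\gamma)$ term; balancing against $\gamma^{-2}$ yields $\gamma\asymp n^{-1/4}$ and the bound $\tO(\sqrt{n})=\tO\bigl(n^{p/(p+2)}\bigr)$, continuous with the previous case. Finally, for $p>2$ we have $1-p/2<0$, so
\[
\int_{\delta}^{\gamma}\alpha^{-p/2}\,d\alpha\le\tfrac{1}{p/2-1}\,\delta^{1-p/2},
\]
and we may simply take $\gamma=1$ so that $C\gamma^{-p}$ is a constant. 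The bound reduces to $O\bigl(n\delta+\sqrt{n}\,\delta^{1-p/2}\bigr)$; setting the derivative to zero gives $\delta\asymp n^{-1/p}$, for which $n\delta\asymp \sqrt{n}\,\delta^{1-p/2}\asymp n^{(p-1)/p}$, establishing the second case.

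No step is a true obstacle; the only thing to be careful about is keeping track of the polylogarithmic factors hidden in $\tO$ when bounding $\calHsq$, and verifying that the degenerate choice $\gamma=1$ in the $p>2$ regime is admissible (which follows because two square roots of probabilities in $[0,1]$ can differ by at most $1$, so a single point cover suffices at scale $\gamma=1$). Combining the three cases and absorbing logarithmic factors into $\tO$ yields the stated bound on $\calR_n(\calF)$.
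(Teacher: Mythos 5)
Your proposal is correct and follows the same route as the paper: apply \pref{thm: upper-bound}, substitute the entropy bound, and choose $(\gamma,\delta)=(n^{-1/(p+2)},n^{-1})$ for $p\le 2$ and $(1,n^{-1/p})$ for $p>2$, which are exactly the paper's choices. Your explicit treatment of the boundary case $p=2$ and the justification that $\gamma=1$ yields a trivial cover are details the paper leaves implicit, but the argument is the same.
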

The proofs of \pref{thm: upper-bound} and \pref{corr: entropy} are deferred to \pref{sec: upper-bound-proof}.

\subsection{Comparison with Previous Upper Bounds}\label{sec: entropy-connection}
We compare our results to those of \cite{rakhlin2015sequential}  and \cite{bilodeau2020tight}. These two works use the sequential entropy $\calH_{\infty}(\calF, \alpha, n, \bx)$, defined as 
\begin{equation}\label{eq: def-H-infty-F}
	\calH_\infty(\calF, \alpha, n, \bx) = \log \calN_\infty(\calF\circ \bx, \alpha, n),
\end{equation}
with $\calN_\infty(\calF\circ \bx, \alpha, n)$ being the size of smallest cover $V$ such that 
$$\max_{\by\in \{0, 1\}^n} \sup_{f\in\calF}\min_{\bv\in V}\max_{t\in [n]}\left|f(x_t(\by)) - v_t(\by)\right|\le \alpha.$$

With proper choice of parameters, our results can  recover the upper bounds in \cite[Theorem 1 and Theorem 4]{rakhlin2015sequential}. This follows from the next result relating sequential square-root entropy and the sequential entropy defined above.
\begin{proposition}\label{prop: entropy-connection-1}
	Suppose $\delta > 0$. For any $\calF\subseteq\{f:\calX\to [\delta, 1-\delta]\}$, $\alpha>0$, and depth-$n$ $\calX$-valued tree $\bx$, 
	$$\calHsq(\calF, \alpha /\sqrt{\delta}, n, \bx)\le \calH_{\infty}(\calF, \alpha, n, \bx).$$
\end{proposition}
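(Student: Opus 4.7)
The plan is to show directly that any $\ell_\infty$ cover of $\calF \circ \bx$ at scale $\alpha$ can, after a clipping step, be repurposed as a square-root cover at scale $\alpha/\sqrt{\delta}$. Then the inequality on the log-cardinalities is immediate.

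First, let $V$ be an optimal $\ell_\infty$ sequential cover of $\calF \circ \bx$ at scale $\alpha$, so $|V| = \calN_\infty(\calF \circ \bx, \alpha, n)$. Without loss of generality I assume each tree $\bv \in V$ takes values in $[\delta, 1-\delta]$: if some entry $v_t(\by)$ lies outside this interval, I replace it with its projection onto $[\delta, 1-\delta]$. Because every $f \in \calF$ satisfies $f(x_t(\by)) \in [\delta, 1-\delta]$, such projection can only decrease $|f(x_t(\by)) - v_t(\by)|$, so the clipped collection $V'$ is still an $\ell_\infty$ cover at scale $\alpha$ and has the same cardinality.

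Next, for the covering comparison, I use the elementary identity
\begin{equation*}
    |\sqrt{a} - \sqrt{b}| \;=\; \frac{|a-b|}{\sqrt{a}+\sqrt{b}}, \qquad a,b > 0.
\end{equation*}
Fix $\by \in \{0,1\}^n$, $f \in \calF$, and a representative $\bv \in V'$ with $\max_{t} |f(x_t(\by)) - v_t(\by)| \le \alpha$. Since $f(x_t(\by))$ and $v_t(\by)$ both lie in $[\delta, 1-\delta]$, their square roots are both at least $\sqrt{\delta}$, and similarly $\sqrt{1-f(x_t(\by))}, \sqrt{1-v_t(\by)} \ge \sqrt{\delta}$. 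Hence
\begin{equation*}
    \bigl|\sqrt{f(x_t(\by))} - \sqrt{v_t(\by)}\bigr| \;\le\; \frac{\alpha}{2\sqrt{\delta}} \;\le\; \frac{\alpha}{\sqrt{\delta}},
\end{equation*}
and the same bound holds with $f, v$ replaced by $1-f, 1-v$. Taking the maximum over $t$ and $y$-coordinate, and then the infimum over $\bv \in V'$ followed by the supremum over $f$ and $\by$, shows that $V'$ is a sequential square-root cover of $\calF \circ \bx$ at scale $\alpha/\sqrt{\delta}$ in the sense of \pref{def: sequential-covering-f}.

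Therefore $\calNsq(\calF \circ \bx, \alpha/\sqrt{\delta}, n) \le |V'| = \calN_\infty(\calF \circ \bx, \alpha, n)$, and the stated inequality on entropies follows by taking logarithms. The only subtle point is the clipping step, which is needed because the $\ell_\infty$ cover is a priori an arbitrary collection of $[0,1]$-valued trees and the identity $|\sqrt{a}-\sqrt{b}| = |a-b|/(\sqrt{a}+\sqrt{b})$ requires a lower bound on $\sqrt{a}+\sqrt{b}$; this is really the whole content of the proof and poses no difficulty.
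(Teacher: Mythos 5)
Your proof is correct and follows essentially the same route as the paper: both rest on the identity $|\sqrt{a}-\sqrt{b}|=|a-b|/(\sqrt{a}+\sqrt{b})$ together with a $\sqrt{\delta}$ lower bound on the denominator. The only difference is that your clipping step is superfluous --- the paper simply observes that $\sqrt{f(x_t(\by))}+\sqrt{v_t(\by)}\ge\sqrt{f(x_t(\by))}\ge\sqrt{\delta}$ (and likewise for the complements), so the bound $|p-q|/\sqrt{\delta}$ holds with only the function value constrained to $[\delta,1-\delta]$, at the mild cost of losing your factor of $2$.
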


For nonparametric class $\calF$ which satisfies $\sup_{\bx}\calH_\infty(\calF, \alpha, n, \bx)\asymp \alpha^{-q}$ for some $q > 0$, \cite[Theorem 2]{bilodeau2020tight} proves the following upper bound for the minimax regret: 
\begin{equation}\label{eq: r-n-p-p+1}
	\calR_n(\calF) = \mathcal{O}\left(n^{\frac{q}{q+1}}\right).
\end{equation}
\pref{corr: entropy} recovers this result for $0 < q \le 1$, up to logarithmic factors, via the following proposition.
\begin{proposition}\label{prop: entropy-connection}
	For any function class $\calF\subseteq\{f:\calX\to [0, 1]\}$, $\alpha > 0$, and depth-$n$ $\calX$-valued tree $\bx$, we have 
	$$\calHsq(\calF, 2\alpha, n, \bx)\le  \calH_\infty(\calF, \alpha^2, n, \bx).$$
\end{proposition}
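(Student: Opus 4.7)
The plan is to show the stronger statement that any sequential $\ell_\infty$ cover of $\calF\circ\bx$ at scale $\alpha^2$ is itself (after a harmless projection onto $[0,1]$) a sequential square-root cover at scale $\alpha$; the factor of $2$ in the statement is then free slack. The single ingredient needed is the elementary inequality $|\sqrt{a}-\sqrt{b}|\le \sqrt{|a-b|}$ valid for all $a,b\ge 0$, which follows from the subadditivity $\sqrt{x+y}\le \sqrt{x}+\sqrt{y}$.

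First I would fix a tree $\bx$ and let $V$ be a minimal $\ell_\infty$ cover of $\calF\circ\bx$ at scale $\alpha^2$, so that $|V|=\calN_\infty(\calF\circ\bx,\alpha^2,n)$. Because clipping to $[0,1]$ is a $1$-Lipschitz projection and $f(x_t(\by))\in[0,1]$, I may replace each tree $\bv\in V$ by its entrywise projection $\tilde\bv$ onto $[0,1]$ without increasing the cover scale; this gives a cover of the same cardinality consisting of $[0,1]$-valued trees, as required by \pref{def: sequential-covering-f}.

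Next, for any $f\in\calF$ and any path $\by\in\{0,1\}^n$, pick $\tilde\bv\in V$ with $\max_{t}|f(x_t(\by))-\tilde v_t(\by)|\le \alpha^2$. Applying the inequality $|\sqrt{a}-\sqrt{b}|\le\sqrt{|a-b|}$ with $a=f(x_t(\by))$, $b=\tilde v_t(\by)$ yields $|\sqrt{f(x_t(\by))}-\sqrt{\tilde v_t(\by)}|\le\alpha$, and the same inequality applied with $a=1-f(x_t(\by))$, $b=1-\tilde v_t(\by)$ (whose difference is again at most $\alpha^2$) yields $|\sqrt{1-f(x_t(\by))}-\sqrt{1-\tilde v_t(\by)}|\le\alpha$. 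Taking the maximum over $t$ and $y$ and the supremum over $f$ shows that the projected cover is a sequential square-root cover of $\calF\circ\bx$ at scale $\alpha\le 2\alpha$, hence $\calNsq(\calF\circ\bx,2\alpha,n)\le\calN_\infty(\calF\circ\bx,\alpha^2,n)$; taking logarithms gives the claim.

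There is no real obstacle here: the proof is essentially the elementary inequality $|\sqrt{a}-\sqrt{b}|\le\sqrt{|a-b|}$ combined with a one-line clipping argument to ensure the resulting trees are $[0,1]$-valued as demanded by the square-root covering definition. The only subtle point worth stating explicitly is that the covering definition requires control of \emph{both} $\sqrt{\cdot}$ and $\sqrt{1-\cdot}$; both bounds follow from the same inequality applied to $f$ and to $1-f$.
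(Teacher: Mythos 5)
Your proof is correct and takes essentially the same approach as the paper: the paper simply cites the inequality $H(p,q)^2\le 2\,\TV(p,q)$, which for binary distributions is exactly the pointwise bound you derive from subadditivity of the square root (your version is self-contained and in fact yields scale $\alpha$ rather than $\sqrt{2}\alpha$, so the stated $2\alpha$ holds with room to spare). The clipping remark is a reasonable extra care to match the $[0,1]$-valued requirement in the square-root cover definition.
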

The proofs of \pref{prop: entropy-connection-1} and \pref{prop: entropy-connection} are deferred to \pref{sec: entropy-connection-proof}.

\subsection{Lower Bound with Sequential Square-Root Entropy}\label{sec: lower-bound}
In this section, we provide lower bounds on the minimax regret $\calR_n(\calF)$ defined in \pref{eq: def-minimax-regret} via sequential square-root entropy. 

\begin{theorem}\label{thm: lower-bound-entropy}
	Suppose function class $\calF\subseteq [0, 1]^{\calX}$ satisfies 
	\begin{align}
    \label{eq:entropy_poly_growth}
	    \sup_{\bx}\calHsq(\calF, \alpha, n, \bx) = \tilde{\Omega}\left(\alpha^{-p}\right),
	\end{align}
	where the supremum is over all depth-$n$ $\calX$-valued trees. Then we have the following lower bound on the minimax regret:
	$$\calR_n(\calF) = \tilde{\Omega}\left(n^{\frac{p}{p+2}}\right).$$
\end{theorem}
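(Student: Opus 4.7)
The plan is a three-stage lower bound: first distill a sequential fat-shattering structure from the assumed entropy growth, then use it to define a stochastic adversary whose regret cannot be avoided, and finally tune the free scale parameter.

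\textbf{Stage 1 (entropy $\Rightarrow$ shattered tree).} Fix a scale $\alpha$ to be chosen later. By hypothesis and \pref{def: sequential-covering-f}, there exists a depth-$n$ $\calX$-valued tree $\bx_\alpha$ with $\calHsq(\calF, \alpha, n, \bx_\alpha) \geq \tilde{\Omega}(\alpha^{-p})$. I would invoke the sequential Sauer--Shelah-type bound for the novel square-root scale-sensitive dimension announced in the introduction and developed in \cite{JiaPolRak25modified}, which upper bounds $\calHsq$ by (polylogarithmic factors times) that dimension. Inverting this inequality at scale $\approx \alpha$ yields a depth-$d$ sequential fat-shattering witness consisting of an $\calX$-valued tree $\bz$, a real-valued witness tree $\bs$, and a family $\{f_\epsilon\}_{\epsilon \in \{\pm 1\}^d} \subseteq \calF$ with $d \gtrsim \tilde{\Omega}(\alpha^{-p})$, such that along every path $\epsilon$ one has $\epsilon_t \cdot (\sqrt{f_\epsilon(z_t(\epsilon_{1:t-1}))} - \sqrt{s_t(\epsilon_{1:t-1})}) \geq c\alpha$ for all $t$.

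\textbf{Stage 2 (Bayesian lower bound by cloning).} Extend $\bz$ to depth $n$ by concatenating $K := \lfloor n/d \rfloor$ identical copies, each of which advances through $\bz$ using only the labels revealed inside that block. Sample $\epsilon \sim \unif(\{\pm 1\}^d)$ once and use the adversarial strategy that traces the path $\epsilon$ in every block and draws $y_t \sim \ber(f_\epsilon(x_t^\star))$. Combining \pref{lem: tree-transductive} (which reduces \pref{eq: def-minimax-regret} to the tree-fixed setting) with Yao's principle and the standard identity expressing the optimal Bayes regret for log-loss as a mutual information, I obtain
\[
    \calR_n(\calF) \;\geq\; I(\epsilon;\, y_{1:n}).
\]
Applying the chain rule along coordinates of $\epsilon$ and using that coordinate $\epsilon_i$ controls $K$ conditionally independent $\ber(\tfrac12 + \tfrac{c\alpha \epsilon_i}{2})$ observations whose law does not depend on the remaining coordinates, a direct KL computation (essentially a sequential Assouad-type argument) gives
\[
    I(\epsilon;\, y_{1:n}) \;\gtrsim\; d \cdot \min\!\{K\alpha^2,\; 1\} \;=\; \min\!\{n\alpha^2,\; \alpha^{-p}\}
\]
up to logarithmic factors.

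\textbf{Stage 3 and principal obstacle.} Choosing $\alpha = n^{-1/(p+2)}$ balances both terms at $n^{p/(p+2)}$, which is the claimed bound. The chief difficulty is Stage~1: establishing the combinatorial link between sequential square-root covering numbers and the appropriate scale-sensitive dimension requires a new variant of the Rakhlin--Sridharan--Tewari Sauer--Shelah machinery, adapted to the nonlinear geometry of the Hellinger/square-root metric on $[0,1]$. A secondary subtlety is preserving the shattering through the cloning step: by reusing the \emph{same} $\epsilon$ across all $K$ blocks, the witness $f_\epsilon \in \calF$ remains the best expert throughout so no additional closure of $\calF$ is required, but the price is that the effective "shatterable bit budget" is $d$ rather than $n$, which is precisely what drives the $n^{p/(p+2)}$ balance.
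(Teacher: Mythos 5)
Your Stage 1 and Stage 3 track the paper: the paper proves exactly the combinatorial bridge you defer (a new scale-sensitive dimension $\frakd(\calF,\alpha,\beta)$ built on the Hellinger-type distance $h$, together with a sequential Sauer--Shelah bound, \pref{prop: covering-fat}/\pref{lem: shattering-finite}), and your choice $\alpha=n^{-1/(p+2)}$ with budget $d\cdot\alpha^{-2}\le n$ is the right tuning. The gap is in Stage 2. Your cloning adversary needs the contexts in every block to follow the hidden path $\epsilon$, but after the reduction of \pref{lem: tree-transductive} the adversary's context at time $t$ may depend only on the revealed labels $y_{1:t-1}$, not on a latent $\epsilon$; and if you instead try to average over constant trees $\bx^\epsilon$, the bound $\calR_n(\calF)\ge\EE_\epsilon\calR_n(\calF\circ\bx^\epsilon)$ lets the forecaster adapt to each $\epsilon$ separately, so the Bayes-regret/mutual-information identity $\inf_{\hp}\EE_\epsilon[\cdot]=I(\epsilon;y_{1:n})$ is not available. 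Even ignoring this, replaying the same tree $K$ times with the same $\epsilon$ leaks $\epsilon$ through the contexts themselves: already in block one the node $z_2(\epsilon_1)$ reveals $\epsilon_1$ before the time-2 prediction, and from block two onward the forecaster knows $\epsilon_{1:d-1}$ and predicts those coordinates perfectly, so the later blocks contribute essentially nothing and the claimed $d\cdot\min\{K\alpha^2,1\}$ collapses. The alternative reading of your construction---advancing through each copy using single revealed labels---fails differently: one $\ber(\tfrac12+c\alpha\epsilon_t)$ draw does not reliably reproduce $\epsilon_t$, so the realized path diverges from $\epsilon$ and $f_\epsilon$ is no longer a valid comparator along the contexts actually shown.

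The paper's proof of \pref{prop: lower-bound} avoids a hidden parameter altogether. Each shattered node is repeated $k_t\asymp\alpha^{-2}$ consecutive times; the labels in that block are drawn i.i.d.\ from the \emph{midpoint} $v_t=(s_t[0]+s_t[1])/2$ (so the data-generating law is a legitimate function of past $y$'s); the branch $\ty_t$ is then the sign of the deviation of the block's empirical mean $\hat v_t$ from $v_t$; and the comparator $f^{\bty}$ is chosen \emph{after the fact} for the realized path, which the shattering definition licenses since it supplies a witness for every path. The per-block regret is then bounded below by a constant via an anti-concentration estimate $\EE|\hat v_t-v_t|\gtrsim\sqrt{v_t(1-v_t)/k_t}$ (\pref{lem: empirical-log}), not via a KL/Assouad computation, giving $\calR_n(\calF)\gtrsim d=\frakd(\calF,\alpha_n,\alpha_n^4/16)$. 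This post-hoc choice of comparator and the anti-concentration step are the ideas missing from your proposal; without them Stage 2 does not go through.
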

The proof of \pref{thm: lower-bound-entropy} rests on a definition of a new type of sequential scale-sensitive dimension of the function class $\calF$. We further relate the sequential square-root entropy and the minimax regret to this dimension. The details are deferred to \pref{sec: lower-bound-proof}. Notice that according to \pref{corr: entropy} and \pref{thm: lower-bound-entropy}, if the sequential square-root entropy of a function class $\calF$ satisfies 
$$\sup_{\bx} \calHsq (\calF, \alpha, n, \bx) = \tilde{\Theta}(\alpha^{-p}),$$
for some $0\le p\le 2$, then we have the following tight characterization of the minimax regret up to log factors:
$$\calR_n(\calF) = \tilde{\Theta}\left(n^{\frac{p}{p+2}}\right).$$
However, when $p > 2$, there exists a gap between the lower bound in \pref{thm: lower-bound-entropy} and the upper bound in \pref{corr: entropy}. Indeed, the following result shows that the upper bound $\tO\left(n^{\frac{p-1}{p}}\right)$ is not improvable in general.
\begin{theorem}\label{thm: lower-bound-large-p}
	Suppose the function class $\calF\subseteq\{f:\calX\to[7/16, 9/16]\}$ satisfies 
	\begin{equation}\label{eq: condition-lower-bound-large-p}
		\sup_{\bx} \calHsq(\calF, \alpha, n, \bx) = \tilde{\Omega}\left(\alpha^{-p}\right).
	\end{equation}
	Then we have the following lower bound on the minimax regret
	$$\calR_n(\calF) = \Omega\left(n^{\frac{p-1}{p}}\right).$$
	Additionally, for any integer $p > 2$, there exists a class $\calF\subseteq\{f:\calX\to[7/16, 9/16]\}$ such that \pref{eq: condition-lower-bound-large-p} holds.
\end{theorem}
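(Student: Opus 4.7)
The plan is to reduce the bound to a sequential fat-shattering argument followed by an explicit adaptive adversary. Since every $f \in \calF$ takes values in $[7/16,9/16]$, both $u\mapsto\sqrt{u}$ and $u\mapsto\sqrt{1-u}$ are bi-Lipschitz on this interval, so the sequential square-root cover of \pref{def: sequential-covering-f} and the sequential $\ell_\infty$ cover of \pref{eq: def-H-infty-F} are equivalent up to an absolute constant rescaling of the scale; the hypothesis thus yields $\sup_{\bx}\calH_\infty(\calF,\alpha,n,\bx)=\tilde{\Omega}(\alpha^{-p})$. I then invoke the sequential entropy--dimension bridge developed in \cite{rakhlin2014online} and refined in the companion paper \cite{JiaPolRak25modified}, which asserts $\calH_\infty(\calF,\alpha,n,\bx)\le \tilde{O}(\fat_{c\alpha}(\calF))$; inverting this gives $\fat_{c\alpha}(\calF)=\tilde{\Omega}(\alpha^{-p})$. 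Setting $\alpha\asymp n^{-1/p}$ produces a sequential shattering tree of depth at least $n$ with margin $\alpha/2$ and witnesses $s_t(\sigma)\in[7/16+\alpha/2,9/16-\alpha/2]$; after either a pigeonhole on discretized witness values or a centering reduction, I further assume that $s_t(\sigma)$ lies within $o(\alpha)$ of $1/2$, at the cost of only constant factors in the dimension.

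With such a shattering tree $\bx$ in hand, the adversary plays as follows: at round $t$, after observing the forecaster's prediction $\hp_t$, present $x_t=x_t(\sigma_{1:t-1})$, the next node of the shattering tree along the path $\sigma_s=2y_s-1$, and pick $y_t=1$ if $\hp_t<1/2$ and $y_t=0$ otherwise. This choice of $y_t$ forces the forecaster's per-round log-loss to be at least $\log 2$. By the shattering property, for the realized path $\sigma$ there exists $f_\sigma\in\calF$ with $\sigma_t(f_\sigma(x_t(\sigma))-s_t(\sigma))\ge\alpha/2$ for every $t$; combined with $s_t\approx 1/2$ and the adversary's choice of $y_t$, a short calculation yields per-round loss of $f_\sigma$ at most $-\log(1/2+\alpha/2)=\log 2-\alpha+O(\alpha^2)$. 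Summing over the $n$ rounds gives $\calR_n(\calF)\ge n\alpha-O(n\alpha^2)=\Omega(n^{(p-1)/p})$.

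For the second part, an explicit class exhibiting the prescribed entropy growth for any integer $p>2$ is the rescaled Lipschitz ball: take $\calX=[0,1]^p$ and $\calF=\{x\mapsto 1/2+\tfrac{1}{32}g(x):g\in\mathrm{Lip}_1([0,1]^p,[-1,1])\}$, which is contained in $[15/32,17/32]\subset[7/16,9/16]$. The classical Kolmogorov--Tikhomirov theorem gives $\calH_\infty(\calF,\alpha)=\Theta(\alpha^{-p})$ in iid $\ell_\infty$ metric entropy, and evaluating this static class on any $\calX$-valued tree $\bx$ whose node set contains a dense $\alpha$-net of $[0,1]^p$ (possible whenever $n \gtrsim \alpha^{-p}$) transfers this lower bound to $\sup_{\bx}\calH_\infty(\calF,\alpha,n,\bx)=\tilde{\Omega}(\alpha^{-p})$. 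The bi-Lipschitz conversion from the first paragraph then supplies the required $\calHsq$ bound.

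The main obstacle I anticipate is the centering of the shattering witness. A generic combinatorial bridge from $\calH_\infty$ to $\fat_\alpha$ only constrains $s_t$ to lie somewhere in $[7/16,9/16]$; if $s_t$ drifts toward the boundary of this interval then the per-round regret gap between the adversarially-chosen $y_t$ and $f_\sigma$ is dominated by an $\alpha$-independent constant rather than by $\alpha$, which would break the regret calculation for small $\alpha$. Handling this cleanly---either by an additional pigeonhole over discretized witness values (at the cost of a $\log n$ factor in the dimension) or, more elegantly, via an entropy-preserving centering built into the definition of the sequential scale-sensitive dimension used in this paper---is the key technical step, and is precisely where the companion paper \cite{JiaPolRak25modified} does its work.
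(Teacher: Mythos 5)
Your outline (entropy $\Rightarrow$ shattering dimension $\Rightarrow$ depth-$n$ shattered tree at scale $\alpha\asymp n^{-1/p}$ $\Rightarrow$ regret $\asymp n\alpha$) matches the paper's strategy, but the step you yourself flag as the ``main obstacle'' is a genuine gap, and neither of your proposed fixes closes it. Your comparator calculation needs the witness $s_t(\sigma)$ to be within $o(\alpha)$ of $1/2$: you force the forecaster to pay $\log 2$ per round by choosing $y_t$ according to the sign of $\hp_t-1/2$, and then claim $f_\sigma$ pays only $-\log(1/2+\alpha/2)$. If instead $s_t=9/16$ and the adversary sets $y_t=0$, the shattered function only satisfies $f_\sigma(x_t)\le 9/16-\alpha/2$, so its loss is $-\log(1-f_\sigma(x_t))$ which can be as large as $\log(16/7)>\log 2$; the comparator is then \emph{worse} than the trivial forecaster $\hp_t\equiv 1/2$ and the per-round gap is a negative constant, not $+\alpha$. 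A pigeonhole over discretized witness values only makes the witnesses approximately \emph{constant} along the tree, not equal to $1/2$, so it does not repair this; and there is no reason a class satisfying \pref{eq: condition-lower-bound-large-p} must shatter with witnesses near $1/2$.

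The paper avoids centering altogether by changing what the comparator is measured against. It passes to the dual form $\calR_n(\calF)\ge\EE_{\by\sim\bp}\calR_n(\calF(\bx),\bp,\by)$ and takes $\bp$ to be the \emph{midpoint tree} $\bu$, $u_t(\by)=\tfrac12(s_t(\by)[0]+s_t(\by)[1])$, built from the shattering witnesses themselves. The shattering condition (with approximation error $\beta^2/2$ and $h$-separation $\beta$, which via \pref{lem: tv-hellinger-c} gives $s_t(\by)[1]-s_t(\by)[0]>\beta$ on $[7/16,9/16]$) guarantees $f^{\by}(y_t\mid x_t(\by))\ge u_t(y_t\mid\by)+\beta/2$ on \emph{every} path and every round, and a second-order expansion of $\log(1+x)$ — valid uniformly because all quantities lie in $[7/16,9/16]$ — turns this into a per-round gain of at least $\beta/2$ regardless of where $u_t$ sits in that interval. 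In effect the adversary ``thresholds at $s_t$'' rather than at $1/2$, which is the fix your argument needs; if you want to keep your game-theoretic formulation, replacing the condition $\hp_t<1/2$ by $\hp_t<u_t$ and comparing losses to $-\log u_t$ (resp.\ $-\log(1-u_t)$) rather than to $\log 2$ recovers the same bound. Note also that the companion paper is not where this is resolved — the paper's proof of this theorem is self-contained — and that your second paragraph (the rescaled Lipschitz class on $[0,1]^p$ realized on a net-valued tree) is a reasonable construction for the existence claim, though you should check the claimed entropy lower bound only in the regime $\alpha\gtrsim n^{-1/p}$ where a depth-$n$ tree can host an $\alpha$-net.
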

The proof of \pref{thm: lower-bound-large-p} is deferred to \pref{sec: lower-bound-proof}. Comparing \pref{thm: lower-bound-large-p} and \pref{corr: entropy}, we see a dichotomy between the regime of $0\le p\le 2$ and the regime of $p > 2$, where the rates of minimax regret $\calR_n(\calF)$ have different behaviors. Such a dichotomy is analogous to the one for online regression \cite{rakhlin2014online} and to  misspecified regression with i.i.d. data \cite{rakhlin2017empirical}.

\subsection{Examples}\label{sec: examples}
In this section, we provide several examples to illustrate \pref{thm: upper-bound} and \pref{corr: entropy}. We consider the example of linear class (Hilbert ball class) and the class of one-dimensional Lipschitz Functions.

\paragraph{Hilbert Ball}\label{sec: hilbert ball}
Consider $\calX = B_2(1)$ to be the infinite dimensional unit ball, and function class $\calF\subseteq [0, 1]^{\calX}$ defined as 
\begin{equation}\label{eq: def-f-hilbert}
	\calF = \left\{f: f(x) = \frac{1 + \langle w, x\rangle}{2}\text{ for some } w\in B_2(1)\right\}.
\end{equation}
This class is generally viewed as `hard case' in existing literature. \cite{rakhlin2015sequential} proposed an ad hoc  follow-the-regularized-leader (FTRL) algorithm with log-barrier regularizer, which achieves the optimal regret $\tilde{\mathcal{O}}(\sqrt{n})$. In terms of entropy characterizations, the same paper provided a loose upper bound of $\mathcal{O}(n^{3/4})$ and \cite{bilodeau2020tight,wu2022precise} provided an upper bound of $\mathcal{O}(n^{2/3})$ using their versions of sequential entropies. The present work is the first to define an appropriate version of sequential entropy (and a corresponding regret bound) to derive a matching $\tilde O(\sqrt{n})$ regret bound. 

We first truncate the function class $\calF$ as follows:
\begin{equation}\label{eq: def-f-n-hilbert}
	\calF_{1/n} = \left\{f: f(x) = \frac{1 + \langle w, x\rangle}{2} \text{ for some } w\in B_2(1-1/n)\right\}.
\end{equation}
The following lemma indicates that minimax regret of $\calF_{1/n}$ is similar to that of $\calF$.
\begin{lemma}\label{lem: truncation-hilbert}
	For $\calF$ and $\calF_{1/n}$ defined in \pref{eq: def-f-hilbert} and \pref{eq: def-f-n-hilbert}, the minimax regret satisfies
	$$\calR_n(\calF)\le \calR_n(\calF_{1/n}) + 2.$$
\end{lemma}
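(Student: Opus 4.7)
The plan is a standard truncation/shrinkage argument: for each $f\in\calF$ I will exhibit a surrogate $\tilde f\in\calF_{1/n}$ whose cumulative log-loss exceeds that of $f$ by at most $2$ on every sequence, which then upgrades to the claimed comparison of minimax regrets.

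Given $f\in\calF$ parameterized by $w\in B_2(1)$, the natural choice is $\tilde w=(1-1/n)w$, which satisfies $\|\tilde w\|\le 1-1/n$, so $\tilde f\in\calF_{1/n}$. A one-line computation then gives the key identities
$$\tilde f(x)=(1-1/n)\,f(x)+\frac{1}{2n},\qquad 1-\tilde f(x)=(1-1/n)(1-f(x))+\frac{1}{2n}.$$
In particular $\tilde f(x)\ge (1-1/n)f(x)$ and $1-\tilde f(x)\ge (1-1/n)(1-f(x))$, so the ratios $f(x)/\tilde f(x)$ and $(1-f(x))/(1-\tilde f(x))$ are each bounded above by $n/(n-1)$. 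The symmetric form for $1-\tilde f$ is the only mildly non-trivial piece of algebra and is what allows both branches $y=0$ and $y=1$ to be handled uniformly.

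Plugging these ratios into the definition \eqref{eq: logloss}, I get the pointwise bound
$$\ell(\tilde f(x),y)-\ell(f(x),y)=y\log\frac{f(x)}{\tilde f(x)}+(1-y)\log\frac{1-f(x)}{1-\tilde f(x)}\le \log\frac{n}{n-1},$$
which remains valid even in the degenerate case where $f(x)\in\{0,1\}$ (the left side is then $-\infty$). Summing over $t=1,\ldots,n$ and using $-n\log(1-1/n)\le n/(n-1)\le 2$ for $n\ge 2$ (the case $n=1$ is trivial since then $\calF_{1/n}=\{1/2\}$), I obtain
$$\sum_{t=1}^n \ell(\tilde f(x_t),y_t)\le \sum_{t=1}^n \ell(f(x_t),y_t)+2.$$

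To finish, I take the infimum over $f\in\calF$ on the right (via an $\epsilon$-approximate minimizer, since the infimum may not be attained) to conclude that for every $\hp_{1:n}$ and every $(x_{1:n},y_{1:n})$,
$$\calR_n(\calF,\hp_{1:n},x_{1:n},y_{1:n})\le \calR_n(\calF_{1/n},\hp_{1:n},x_{1:n},y_{1:n})+2.$$
Applying the minimax operators in \eqref{eq: def-minimax-regret} to both sides in the prescribed order preserves the inequality and yields the lemma. There is no real obstacle here; the whole argument is routine once the shrinkage $\tilde w=(1-1/n)w$ is written down, and the only place requiring any care is the symmetric decomposition of $1-\tilde f(x)$ that controls the $y=0$ branch of the loss.
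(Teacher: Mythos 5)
Your proof is correct and follows essentially the same route as the paper's: the same shrinkage $w\mapsto(1-1/n)w$, the same per-round bound $\log\frac{n}{n-1}$ on the loss increase, and the same summation $n\log\frac{n}{n-1}\le\frac{n}{n-1}\le 2$. The only cosmetic difference is that you apply the minimax operators directly to the pointwise regret inequality, whereas the paper routes through the dual-form representation of \pref{lem: dual-form}; both are valid and the substance is identical.
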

The proof of \pref{lem: truncation-hilbert} is deferred to \pref{sec: example-hilbert-app}. Equipped with this lemma, we only need to bound the sequential square-root entropy of function class $\calF_{1/n}$.
\begin{proposition}\label{prop: tree-fat}
	It holds that
	$$\sup_{\bx} \calHsq(\calF_{1/n}, \alpha, n, \bx) = \mathcal{O}\left(\frac{\log n}{\alpha^2}\cdot \log\left(\frac{n}{\alpha}\right)\right).$$
\end{proposition}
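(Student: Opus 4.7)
The plan is to bound $\sup_{\bx}\calHsq(\calF_{1/n},\alpha,n,\bx)$ by combining the known sequential $\ell_\infty$ entropy bound for the Hilbert ball with a multi-scale covering construction that controls the non-linearity of the square root near $0$ and $1$. The starting observation is that $\calF_{1/n}$ consists of affine functions $f_w(x)=(1+\langle w,x\rangle)/2$ with $\|w\|\le 1-1/n$ and $\|x\|\le 1$, so $f_w(x)\in[1/(2n),1-1/(2n)]$ and is uniformly bounded away from the endpoints.

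First, I would invoke the standard sequential $\ell_\infty$ entropy bound for the linear class $\calL=\{x\mapsto\langle w,x\rangle:\|w\|,\|x\|\le 1\}$, namely $\sup_{\bx}\calH_\infty(\calL,\beta,n,\bx)=\tO(1/\beta^2)$; this follows from the sequential fat-shattering bound for the Hilbert ball in \cite{rakhlin2014online} combined with a sequential Haussler-type inequality. A direct reduction via \pref{prop: entropy-connection-1} with $\delta=1/(2n)$ yields only $\tO(n/\alpha^2)$, because the uniform Lipschitz constant of $\sqrt{\cdot}$ on $[1/(2n),1]$ is $\Theta(\sqrt n)$; similarly \pref{prop: entropy-connection} gives $\tO(1/\alpha^4)$. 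Both losses stem from using a single global scale, so I would build a scale-adaptive construction instead.

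Second, I would construct a multi-scale cover by partitioning the range of $u=\langle w,x\rangle$ into $K=\lceil\log_2 n\rceil$ dyadic shells $E_k=\{u\in[-1,1]:1-|u|\in[2^{-k-1},2^{-k}]\}$ for $k=0,1,\ldots,K$. On shell $E_k$ the local Lipschitz constant of $u\mapsto\sqrt{(1\pm u)/2}$ is $\Theta(2^{k/2})$, so to ensure $\max(|\sqrt f-\sqrt g|,|\sqrt{1-f}-\sqrt{1-g}|)\le\alpha$ on the shell it suffices to cover $\langle w,x\rangle$ in sequential $\ell_\infty$ at scale $\beta_k=\Theta(\alpha\cdot 2^{-k/2})$, which has per-shell entropy $\tO(\beta_k^{-2})=\tO(2^k/\alpha^2)$.

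The main obstacle is the combination step: the naive union of the $K$ per-shell covers has total size $\sum_k\tO(2^k/\alpha^2)=\tO(n/\alpha^2)$, which is too loose by a factor of $n/\log n$. To recover the claimed bound one must exploit the sequential structure: on any path $\by$ the ``active'' shell at time $t$ is determined by $\langle w,x_t(\by)\rangle$ and generally varies with $t$, so the cover element on that path may adaptively refine across nodes rather than being fixed to a single shell in advance. A sequential chaining argument in the tree, tailored to the setup of \pref{def: sequential-covering-f}, bounds the total entropy by the maximum per-shell contribution times the number of shells $K=O(\log n)$, giving the claimed $\tO(\alpha^{-2}\log n\cdot\log(n/\alpha))$; the extra $\log(n/\alpha)$ factor is inherited from the sequential Haussler conversion for the underlying linear class.
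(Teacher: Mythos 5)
There is a genuine gap, and it sits exactly where you flag the ``main obstacle.'' Your per-shell estimate is $\tO(2^k/\alpha^2)$, and the finest shell ($k\approx\log_2 n$) therefore contributes $\tO(n/\alpha^2)$ on its own; a combination rule of the form ``maximum per-shell contribution times $K$'' would then give $\tO(n\log n/\alpha^2)$, not the claimed $\tO(\alpha^{-2}\log n\log(n/\alpha))$. To reach the stated bound each shell must contribute only $\tO(\alpha^{-2})$, and your Lipschitz-rescaling of the linear class cannot deliver that: it ignores the fact that membership in a near-boundary shell, $|\langle w,x\rangle|\ge 1-2^{-k}$ with $\|w\|,\|x\|\le 1$, forces $w$ and $x$ to be nearly aligned. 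The paper's proof exploits precisely this: it decomposes $w$ and the tree nodes into components parallel and perpendicular to the root direction, shows these components are $O(e^{-k})$ and $O(e^{-k/2})$ respectively, and then \emph{rescales} them (by $e^{k}$ and $e^{k/2}$) to produce a new tree inside $B_2(1)$ on which the original Hellinger gap $\alpha$ becomes an $\Omega(\alpha)$ margin for the plain linear class. This reduces each shell to a standard sequential fat-shattering bound of $O(1/\alpha^2)$, independent of $k$. Without this geometric step your shells near the boundary are simply too expensive.

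The second missing ingredient is the combination across shells. The paper does not combine covers directly; it works at the level of a scale-sensitive shattering dimension $\frakd$ (\pref{def: def-fat-shattering}), colors the nodes of a shattered tree by which unit interval of $(-\log(2n),0)$ the witness value falls into, and applies a pigeonhole lemma for tree colorings (\pref{lem: pigeonhole}) to extract a monochromatic \emph{skipping} tree of depth $d/\lceil\log(2n)\rceil$ --- this is where the first $\log n$ factor comes from. Only afterwards is the dimension converted back to sequential square-root entropy via the combinatorial counting bound of \pref{prop: covering-fat}, which supplies the $\log(n/\alpha)$ factor (your attribution of that factor to a ``sequential Haussler conversion'' is correct in spirit). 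Your proposed ``adaptive refinement along paths'' at the level of covers is not a substitute for this: the set of functions active on a given shell changes from node to node along a path, and I see no way to control the size of a cover assembled that way without passing through a shattering/pigeonhole argument of the kind the paper uses.
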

The proof of \pref{prop: tree-fat} is deferred to \pref{sec: example-hilbert-app}. As a consequence, in view of \pref{corr: entropy}, we conclude:
\begin{corollary}
	The minimax regret $\calR_n(\calF)$ of Hilbert ball class $\calF$ satisfies
	$$\calR_n(\calF) = \tilde{\mathcal{O}}\left(\sqrt{n}\right).$$
\end{corollary}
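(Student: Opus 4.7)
The plan is to assemble the three ingredients already developed in this subsection. First I would apply \pref{lem: truncation-hilbert} to replace the Hilbert ball class $\calF$ by its truncation $\calF_{1/n}$; this costs only an additive constant of $2$ in the regret, which is absorbed by $\tilde{\mathcal{O}}(\sqrt{n})$. Second, I would invoke \pref{prop: tree-fat}, which gives
\[
\sup_{\bx}\calHsq(\calF_{1/n},\alpha,n,\bx) \;=\; \tilde{\mathcal{O}}\!\left(\alpha^{-2}\right),
\]
so the truncated class satisfies the polynomial entropy-growth hypothesis of \pref{corr: entropy} with exponent $p = 2$, placing it exactly on the boundary between the two regimes of that corollary.

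Third, I would apply \pref{corr: entropy} with $p = 2$: since $0 \le p \le 2$, the corollary yields
\[
\calR_n(\calF_{1/n}) \;=\; \tilde{\mathcal{O}}\!\left(n^{p/(p+2)}\right) \;=\; \tilde{\mathcal{O}}\!\left(n^{1/2}\right),
\]
and combining with the truncation bound gives $\calR_n(\calF) \le \calR_n(\calF_{1/n}) + 2 = \tilde{\mathcal{O}}(\sqrt{n})$, which is the claim.

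All the substantive work already lives in \pref{lem: truncation-hilbert} (which tames the blow-up of the logarithmic loss near the boundary of $[0,1]$ by a mild shrinkage toward $1/2$) and in \pref{prop: tree-fat} (where the sequential square-root entropy of the linear class is controlled at the critical $\alpha^{-2}$ rate). The present corollary is purely an algebraic consequence of plugging $p = 2$ into the already-established rate. The only bookkeeping I would double-check is that the logarithmic factors hidden in the $\tilde{\mathcal{O}}$ of \pref{prop: tree-fat} are compatible with those absorbed in the $\tilde{\mathcal{O}}$ of \pref{corr: entropy}, which they are by construction since the latter is stated in $\tilde{\mathcal{O}}$-notation. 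Consequently there is no genuine technical obstacle at this final step; it is also worth noting that $p = 2$ is precisely the value at which the two branches of the rate in \pref{corr: entropy} meet, so the $\sqrt{n}$ rate emerges cleanly from either branch and the Donsker/non-Donsker dichotomy is immaterial here.
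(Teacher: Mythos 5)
Your proposal is correct and is exactly the paper's argument: the paper derives this corollary by combining \pref{lem: truncation-hilbert}, the entropy bound $\sup_{\bx}\calHsq(\calF_{1/n},\alpha,n,\bx)=\mathcal{O}\bigl(\tfrac{\log n}{\alpha^2}\log(n/\alpha)\bigr)=\tO(\alpha^{-2})$ from \pref{prop: tree-fat}, and \pref{corr: entropy} with $p=2$. Your bookkeeping remark about the hidden logarithmic factors is also consistent with how the paper's $\tO$ notation absorbs them.
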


\paragraph{One-Dimensional Lipschitz Function Class}
We consider the example of one-dimensional Lipschitz function class, which has been studied in \cite{bilodeau2020tight, wu2022precise, foster2021efficient}, among others. In this case, the context set is $\calX=[0, 1]$, and the function class $\calF$ is defined to be 
\begin{equation}\label{eq: def-Lipschitz}
	\calF = \{f: [0, 1]\to [0, 1], f\text{ is }1\text{-Lipschitz}\}.
\end{equation}
In \cite{bilodeau2020tight}, the minimax regret is shown to be upper bounded by $\tilde{\mathcal{O}}(\sqrt{n})$, which matches the lower bound. We now recover this rate using sequential square-root entropy. According to \pref{prop: entropy-connection}, the characterization $\sup_{\bx}\calH_\infty(\calF, \alpha, n, \bx) = \Theta(\alpha^{-1})$ for one-dimensional Lipschitz function class in \cite[Theorem 3]{bilodeau2020tight} directly indicates that for square-root entropy,
$$\sup_{\bx}\calHsq(\calF, \alpha, n, \bx) = \mathcal{O}\left(\alpha^{-2}\right).$$
Similarly to the Hilbert ball example, we conclude:
\begin{corollary}
	For $\calX = [0, 1]$ and Lipschitz function class $\calF$ defined in \pref{eq: def-Lipschitz}, 
	$$\calR_n(\calF) = \tilde{\mathcal{O}}(\sqrt{n}).$$
\end{corollary}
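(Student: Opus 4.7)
The proof will be essentially an application of the machinery already assembled in the paper. The plan is to import the sequential (sup-norm) entropy estimate for one-dimensional Lipschitz functions from \cite{bilodeau2020tight}, convert it into a bound on sequential square-root entropy via \pref{prop: entropy-connection}, and then invoke \pref{corr: entropy} with the appropriate polynomial exponent.

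More concretely, first I would recall the estimate
\[
\sup_{\bx}\calH_\infty(\calF, \alpha, n, \bx) = \Theta(\alpha^{-1})
\]
established in \cite[Theorem 3]{bilodeau2020tight} for the class $\calF$ of $1$-Lipschitz maps $[0,1]\to[0,1]$; this is the statement already cited in the paragraph preceding the corollary. Next, I would apply \pref{prop: entropy-connection}, which tells us that $\calHsq(\calF, 2\alpha, n, \bx) \le \calH_\infty(\calF, \alpha^2, n, \bx)$. Substituting the sup-norm rate above and rescaling $\alpha\mapsto \alpha/2$ yields
\[
\sup_{\bx}\calHsq(\calF, \alpha, n, \bx) = \mathcal{O}(\alpha^{-2}),
\]
which is precisely the polynomial entropy growth of \pref{corr: entropy} with parameter $p=2$.

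Finally, I would apply \pref{corr: entropy} directly: since $p=2$ falls in the first regime ($0\le p\le 2$), it produces
\[
\calR_n(\calF) = \tilde{\mathcal{O}}\left(n^{\frac{p}{p+2}}\right) = \tilde{\mathcal{O}}\left(n^{2/4}\right) = \tilde{\mathcal{O}}(\sqrt{n}),
\]
which is the desired conclusion. There is no real obstacle to this proof; all the work has been done in \pref{thm: upper-bound}, \pref{corr: entropy}, and \pref{prop: entropy-connection}, and the only ingredient contributed from outside is the classical Lipschitz sequential entropy bound. The main thing to check carefully is the rescaling of $\alpha$ when combining \pref{prop: entropy-connection} with the $\alpha^{-1}$ sup-norm rate, so that the exponent in the resulting square-root entropy comes out to exactly $p=2$ rather than $p<2$, which would give a strictly better (but incorrect) rate.
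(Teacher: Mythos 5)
Your proposal is correct and matches the paper's own argument exactly: the paper likewise takes the $\Theta(\alpha^{-1})$ sup-norm sequential entropy from \cite[Theorem 3]{bilodeau2020tight}, converts it via \pref{prop: entropy-connection} into the bound $\sup_{\bx}\calHsq(\calF,\alpha,n,\bx)=\mathcal{O}(\alpha^{-2})$, and applies \pref{corr: entropy} with $p=2$ to get $\tilde{\mathcal{O}}(n^{p/(p+2)})=\tilde{\mathcal{O}}(\sqrt{n})$. Your care about the $\alpha\mapsto\alpha^2$ rescaling landing on exactly $p=2$ is well placed and consistent with the paper.
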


\section{Proof Sketch of \pref{thm: general-upper-bound}}\label{sec: proof-sketch-main}
In this section, we sketch the proof of \pref{thm: general-upper-bound}.  The detailed proof is deferred to \pref{sec: proof-upper-bound-app}. We break up the proof into the following key steps: 

\paragraph{Transform minimax regret into the dual form. } Our first step of analyzing the minimax regret $\calR_n(\calQ)$ is to transform it into the dual form \cite{bilodeau2020tight,rakhlin2015sequential}:  
$$\calR_n(\calQ) = \sup_{\bp} \EE_{\by\sim \bp}\calR_n(\calQ, \bp, \by), ~~\text{where}~~\calR_n(\calQ, \bp, \by) := \sup_{\bq\in \calQ} \log\left(\frac{\bq(\by)}{\bp(\by)}\right).$$
where the first supremum is over all joint distributions $\bp\in \Delta(\calY^n)$. In the following, we upper bound $\EE_{\by\sim \bp}\calR_n(\calQ, \bp, \by)$ for any $\bp$.

\paragraph{Truncating the distributions. } We first show that by truncating the distribution $\bp$ and every $\bq\in\calQ$ so that all conditional probabilities $p_t(y_t\mid \bw)$ and $q_t(y_t\mid \bw)$  take values in the interval $[\delta, 1 - \delta]$, for an appropriate $\delta$, we pay an additional constant factor in regret.

\paragraph{Construct offset Rademacher processes. } After truncation, we proceed to introduce the offset Rademacher processes through a symmetrization argument. To do this, we define $\zeta: (0, \infty)\to \RR$ satisfying the following three properties: for some appropriately chosen positive number $c$,
\begin{enumerate}[label=(\roman*)]
	\item \textbf{Transformation of logarithm: } $\log x\le \zeta(x) - c\cdot \zeta(x)^2$ for any $\delta\le x\le 1/\delta$, where $\delta$ is the truncation scale. This property is inspired by the transformation in \cite{bilodeau2020tight}.
	\item \textbf{Nonnegativity of divergence: } 
    $\EE_{y\sim p} \left\{-\zeta(f(y)/p(y)) -c\cdot  \zeta(f(y)/p(y))^2\right\}\geq 0$ for any $f, p\in \Delta(\calY)$.
    This property is inspired by the proof of \cite{cesa1999minimax} where nonnegativity of KL was used. Here we ensure that $- \zeta(x) - c\cdot \zeta(x)^2$ is convex with respect to $x$ and takes on the value $0$ at $x=1$, inducing an $f$-divergence.
	\item \textbf{Lipschitz property: } for any $p, q\in [0, \infty)$, $|\zeta(p) - \zeta(q)|\le 2|\sqrt{p} - \sqrt{q}|$.
\end{enumerate}
The explicit form of $\zeta$ with these three conditions is given in \pref{sec: proof-upper-bound-2}. As a consequence, we obtain the following inequality after a sequential symmetrization argument:
\begin{align*}
	\EE_{\by\sim \bp}\calR_n(\calQ, \bp, \by) & = \EE_{\bw\sim \bp}\left[\sup_{\bq\in\calQ}\sum_{t=1}^{n}\left(\log q_t(w_{t}\mid \bw) - \log p_{t}(w_t\mid \bw)\right)\right]\\
		&\le \EE_{\bw\sim \bp}\left[\sup_{\bq\in\calQ}\sum_{t=1}^{n}\left\{\zeta\left(\frac{q_t(w_{t}\mid \bw)}{p_{t}(w_t\mid \bw)}\right) - c\cdot\zeta\left(\frac{q_t(w_{t}\mid \bw)}{p_{t}(w_t\mid \bw)}\right)^2\right\}\right]\\
     &\le \EE\left[\sup_{\bq\in\calQ}\sum_{t=1}^{n}\epsilon_{t}\zeta\left(\frac{q_t(y_{t}\mid \bw)}{p_t(y_t\mid \bw)}\right) - c\cdot \zeta\left(\frac{q_t(y_{t}\mid \bw)}{p_t(y_t\mid \bw)}\right)^{2}\right]\\
	&\quad + \EE\left[\sup_{\bq\in\calQ}\sum_{t=1}^{n} (- \epsilon_{t})\zeta\left(\frac{q_t(z_t\mid \bw)}{p_t(z_t\mid \bw)}\right) - c\cdot \zeta\left(\frac{q_t(z_t\mid \bw)}{p_t(z_t\mid \bw)}\right)^{2}\right], \numberthis \label{eq: proof-sketch-symmetrization}
\end{align*}
where $\epsilon_t$ are Rademacher random variables, i.e. $\epsilon_{1:n}\simiid \unif\{-1, 1\}$, and $\by = (y_{1:n}), \bz = (z_{1:n}), \bw = (z_{1:n})$ have a specific coupling:  $y_t, z_t\simiid p_t(\cdot\mid w_{1:t-1})$, and $w_t = y_t$ if $\epsilon_t = 1$ or $w_t = z_t$ if $\epsilon_t = -1$. The scheme that $w_t$ chooses $y_t$ or $z_t$ based on the value of $\epsilon_t$ is a variant of the ``selectors'' approach of \cite{rakhlin2011online}.

\paragraph{Analysis through chaining technique} Finally, to upper bound the right hand side of \pref{eq: proof-sketch-symmetrization}, we adopt the chaining technique \cite{dudley1978central,rakhlin2014online,rakhlin2015sequentialcomplexity,rakhlin2015sequential}. We sketch the beginning of the argument. The first term (and, analogously, the second term) in \eqref{eq: proof-sketch-symmetrization} can be decomposed through a chain of $N$ approximating representatives as
\begin{align}
	& \EE\left[\sup_{\bq\in\calQ}\sum_{t=1}^{n}\epsilon_{t}\left\{\zeta\left(\frac{q_t(y_{t}\mid \bw)}{p_t(y_t\mid \bw)}\right) - c\cdot \zeta\left(\frac{q_t(y_{t}\mid \bw)}{p_t(y_t\mid \bw)}\right)^{2}\right\}\right]\\
	& \le \EE\left[\sup_{\bq\in \calQ}\sum_{t=1}^n\epsilon_{t}\left\{\zeta\left(\frac{q_t(y_{t}\mid \bw)}{p_t(y_t\mid \bw)}\right) - \zeta\left(\frac{v_{t}[\bq, \bp, \bw, \by, \alpha_1](y_{t}\mid \bw)}{p_t(y_t\mid \bw)}\right)\right\}\right] \notag\\
	&\qquad + \sum_{i=1}^{N-1} \EE\left[\sup_{\bq\in \calQ}\sum_{t=1}^n\epsilon_{t}\left\{\zeta\left(\frac{v_{t}[\bq, \bp, \bw, \by, \alpha_i](y_{t}\mid \bw)}{p_t(y_t\mid \bw)}\right) - \zeta\left(\frac{v_{t}[\bq, \bp, \bw, \by, \alpha_{i+1}](y_{t}\mid \bw)}{p_t(y_t\mid \bw)}\right)\right\}\right]\notag\\
	& \qquad + \EE\left[\sup_{\bv\in\calV(\alpha_N)\cup\{\bp\}}\sum_{t=1}^n\left\{\epsilon_{t}\zeta\left(\frac{v_t(y_{t}\mid \bw)}{p_t(y_t\mid \bw)}\right) - \frac{c}{4}\cdot \zeta\left(\frac{v_t(y_{t}\mid \bw)}{p_t(y_t\mid \bw)}\right)^{2}\right\}\right]. \notag
\end{align}
where $\bv[\bq, \bp, \bw, \by, \alpha_i]$ is an element of an $\alpha_i$-cover $\calV(\alpha_i)$ of $\calQ$.  The three terms in the above decomposition give rise to the corresponding three terms in the bound of \pref{thm: general-upper-bound}: the approximation at the finest scale (term 1), the Dudley-style term (term 2), and the finite cover at the coarsest scale (term 3).

We will use Cauchy-Schwarz inequality to bound the first term. The second term is a form of sequential Rademacher process. The third term is an offset sequential Rademacher process. However, the key difficulty in dealing with the second and third terms is that the coefficients of the Rademacher random variables are not uniformly bounded by a constant, and directly applying prior techniques does not provide the desired upper bounds. To overcome this issue, we establish upper bounds on offset and non-offset sequential Rademacher processes with unbounded coefficients (\pref{lem: offset-lemma}, \pref{lem: non-offset-lemma}), heavily relying on the properties of the function $\zeta$. Since the latter has $\sqrt{p_t}$-type terms in the denominator in the relevant range of behavior, the squared increments of the process, \emph{under the expectation over $p_t$}, are controlled.  The formal proofs are deferred to the appendix.

\section*{Acknowledgements}
ZJ and AR acknowledge support of the Simons Foundation and the NSF through awards DMS-2031883 and PHY-2019786, as well as from the ARO through award W911NF-21-1-0328.

\arxiv{
	\bibliographystyle{alpha}
}
\bibliography{References.bib}

\newpage
\appendix
\renewcommand{\contentsname}{}
\addtocontents{toc}{\protect\setcounter{tocdepth}{2}}
{\hypersetup{hidelinks}
\tableofcontents
}

\section{Finite Class Lemmas}
We first provide a version of \cite[Lemma 10]{rakhlin2014online}. 
\begin{lemma}\label{lem: offset-lemma}
	Suppose $\epsilon_{1:n}$ are $n$ i.i.d. Rademacher random variables, i.e. $\epsilon_{1:n}\stackrel{i.i.d.}{\sim} \unif(\{-1, 1\})$, and $\calG_{1:n}$ is a filtration which satisfies that $\EE[\epsilon_t\mid \calG_{t}] = 0$ for any $t\in [n]$. Given $n$ sets $\calS_1, \ldots, \calS_n$, we suppose $s_1, s_2, \ldots, s_n$ are $\calS_1, \calS_2, \ldots, \calS_n$-valued random variables such that $s_t$ is $\calG_t$-measurable, i.e. $\sigma$-algebra $\sigma(s_t)\subseteq \calG_t$. For class $\calA$ of tuples $\ba = (a_1, a_2, \ldots, a_n)$ with $a_t: \calS_t\to \RR$ for all $t\in [n]$, we have for any $\lambda > 0$,
	$$\left\{\EE_{s_t}\EE_{\epsilon_t}\right\}_{t=1}^n\left[\sup_{\ba\in \mathcal{A}}\sum_{t=1}^{n}a_{t}(s_t)\epsilon_{t} - \lambda a_{t}(s_t)^{2}\right]\le \frac{\log|\calA|}{2\lambda},$$
	where we denote $\ba = (a_1, a_2, \ldots, a_n)$.
\end{lemma}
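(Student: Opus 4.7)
The plan is to use a standard Chernoff-style argument with parameter $\eta=2\lambda$, chosen precisely so that the offset term $\lambda a_t(s_t)^2$ absorbs the Rademacher MGF upper bound. First, by Jensen's inequality applied to $\exp(\eta\,\cdot)$, followed by the union bound $\sup_{\ba} \le \sum_{\ba}$ on the positive integrand,
\begin{align*}
	\eta \cdot \left\{\EE_{s_t}\EE_{\epsilon_t}\right\}_{t=1}^n\left[\sup_{\ba\in \calA}\sum_{t=1}^{n}\left(a_t(s_t)\epsilon_t - \lambda a_t(s_t)^2\right)\right]
	\le \log \sum_{\ba\in \calA}\left\{\EE_{s_t}\EE_{\epsilon_t}\right\}_{t=1}^n \prod_{t=1}^n \exp\!\left(\eta a_t(s_t)\epsilon_t - \eta \lambda a_t(s_t)^2\right).
\end{align*}

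Next, for each fixed $\ba\in \calA$, I would compute the iterated expectation by peeling off layers from $t=n$ down to $t=1$, exploiting the sequential structure. Since $s_n$ is $\calG_n$-measurable and $\epsilon_n\in\{\pm 1\}$ with $\EE[\epsilon_n\mid \calG_n]=0$, Hoeffding's lemma gives $\EE_{\epsilon_n}\exp(\eta a_n(s_n)\epsilon_n) \le \exp(\eta^2 a_n(s_n)^2/2)$. Tuning $\eta = 2\lambda$ forces $\eta^2/2 = \eta\lambda = 2\lambda^2$, so the Gaussian-type MGF bound exactly cancels against the offset:
\begin{align*}
	\EE_{\epsilon_n}\exp\!\left(2\lambda a_n(s_n)\epsilon_n - 2\lambda^2 a_n(s_n)^2\right) \le 1.
\end{align*}
Taking $\EE_{s_n}$ preserves this pointwise bound, so the remaining product depends only on $(s_{1:n-1}, \epsilon_{1:n-1})$, and an identical argument applies inductively at layers $t=n-1,\ldots,1$. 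Hence the inner iterated expectation is at most $1$ for every $\ba \in \calA$.

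Summing over $\calA$ contributes a factor of $|\calA|$, and after dividing by $\eta = 2\lambda$ and taking logarithms we obtain the claimed bound $\log|\calA|/(2\lambda)$. I expect no substantive obstacle here: this is the familiar Chernoff-plus-union-bound argument for sub-Gaussian-type processes, with the only twist being the exact tuning $\eta = 2\lambda$ that matches the offset coefficient. The single point requiring care is the measurability structure --- the fact that $s_t$ is $\calG_t$-measurable while $\epsilon_t$ is centered conditional on $\calG_t$ is precisely what allows the conditional Rademacher MGF bound to be applied at each layer in the order $\EE_{s_1}\EE_{\epsilon_1}\cdots \EE_{s_n}\EE_{\epsilon_n}$ before integrating out the next $s_{t+1}$.
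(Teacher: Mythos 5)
Your proposal is correct and follows essentially the same route as the paper's proof: Jensen's inequality to pass to the exponential moment, bounding the supremum by the sum over the finite class, and then peeling off the layers backward using the conditional Rademacher MGF bound $\EE[\exp(\eta a \epsilon)\mid\calG_t]\le\exp(\eta^2a^2/2)$ with the tuning $\eta=2\lambda$ so the offset exactly absorbs the sub-Gaussian term. The paper writes this last step as the elementary inequality $\tfrac{1}{2}e^{x}+\tfrac{1}{2}e^{-x}\le e^{x^2/2}$, which is the same as your appeal to Hoeffding's lemma.
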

\begin{proof}
	We observe that
	\begin{align*} 
		&\hspace{-0.5cm} \left\{\EE_{s_t}\EE_{\epsilon_t}\right\}_{t=1}^n\left[\sup_{\ba\in \mathcal{A}}\sum_{t=1}^{n} a_{t}(s_t)\epsilon_{t} - \lambda a_{t}(s_t)^{2}\right]\\
		& \stackrel{(i)}{\le} \frac{1}{2\lambda}\log\left\{\EE_{s_t}\EE_{\epsilon_t}\right\}_{t=1}^n\sup_{\ba\in \mathcal{A}}\left[\exp\left(2\lambda\sum_{t=1}^{n}a_{t}(s_t)\epsilon_{t} - 2\lambda^2 a_{t}(s_t)^{2}\right)\right]\\
		& \stackrel{(ii)}{\le} \frac{1}{2\lambda}\log\sum_{\ba\in \mathcal{A}}\left\{\EE_{s_t}\EE_{\epsilon_t}\right\}_{t=1}^n \exp\left(2\lambda\sum_{t=1}^{n}a_{t}(s_t)\epsilon_{t} - 2\lambda^2 a_{t}(s_t)^{2}\right)\\
		& = \frac{1}{2\lambda}\log\sum_{\ba\in \mathcal{A}}\left\{\EE_{s_t}\EE_{\epsilon_t}\right\}_{t=1}^{n-1}\Bigg[\exp\left(2\lambda\sum_{t=1}^{n-1}a_{t}(s_t)\epsilon_{t} - 2\lambda^2 a_{t}(s_t)^{2}\right)\\
		&\qquad \cdot \EE_{s_n}\left[\exp\left(-2\lambda^2a_n(s_{n})^2\right)\left(\frac{\exp(2\lambda a_n(s_n))}{2} + \frac{\exp(-2\lambda a_n(s_n))}{2}\right)\mid \calG_n\right]\Bigg]\\
		&\stackrel{(iii)}{\le} \frac{1}{2\lambda}\log\sum_{\ba\in \mathcal{A}}\left\{\EE_{s_t}\EE_{\epsilon_t}\right\}_{t=1}^{n-1}\exp\left(2\lambda\sum_{t=1}^{n-1}a_{t}(s_t)\epsilon_{t} - 2\lambda^2 a_{t}(s_t)^{2}\right),
	\end{align*}
	where in $(i)$ we use the Jensen's inequality, in $(ii)$ we use replace the $\sup$ by the sum since the terms inside $\sup$ are always positive, and in $(iii)$ we use the inequality $\exp(x^2/2)\ge \exp(x)/2 + \exp(-x)/2$ for any $x\in\RR$. 
    By repeating the argument $n$ times we obtain that
	$$\left\{\EE_{s_t}\EE_{\epsilon_t}\right\}_{t=1}^n\left[\sup_{\ba\in \mathcal{A}}\sum_{t=1}^{n}a_{t}(s_t)\epsilon_{t} - \lambda a_{t}(s_t)^{2}\right]\le \frac{1}{2\lambda}\log\sum_{\ba\in \mathcal{A}} 1 = \frac{\log|\calA|}{2\lambda}.$$
\end{proof}

\pref{lem: offset-lemma} implies the following upper bound for non-offset Rademacher processes, which enables us to bound the Rademacher process with random coefficients that are only small on average.
\begin{lemma}\label{lem: non-offset-lemma}
	Suppose $\epsilon_{1:n}$ are $n$ i.i.d. Rademacher random variables, i.e. $\epsilon_{1:n}\stackrel{i.i.d.}{\sim} \unif(\{-1, 1\})$, and $\calG_{1:n}$ is a filtration which satisfies that $\EE[\epsilon_t\mid \calG_{t}] = 0$ for any $t\in [n]$. Given $n$ sets $\calS_1, \ldots, \calS_n$, we suppose $s_1, s_2, \ldots, s_n$ are $\calS_1, \calS_2, \ldots, \calS_n$-valued random variables such that $s_t$ is $\calG_t$-measurable, i.e. $\sigma$-algebra $\sigma(s_t)\subset \calG_t$. For class $\calA$ of tuples $\ba = (a_1, a_2, \ldots, a_n)$ with $a_t: \calS_t\to \RR$ for all $t\in [n]$, we have
	$$\left\{\EE_{s_t}\EE_{\epsilon_t}\right\}_{t=1}^n\left[\sup_{\ba\in \mathcal{A}}\sum_{t=1}^{n}a_{t}(s_t)\epsilon_{t}\right]\le \sqrt{2\log|\calA|}\cdot \sqrt{\EE\left[ \sup_{\ba\in\calA}\sum_{t=1}^n a_t(s_t)^2\right]}.$$
	In particular, for $\calS_t=\{\pm1\}^{t-1}$ and $s_t = (\epsilon_{1:t-1})\in \calS_t$,
     \begin{align}
    \label{eq:finite_class}
    \EE_{\epsilon_{1:n}}\left[\sup_{\ba\in \mathcal{A}}\sum_{t=1}^{n}a_{t}(\epsilon_{1:t-1})\epsilon_{t}\right]\le \sqrt{2\log|\calA|}\cdot \sqrt{\EE\left[ \sup_{\ba\in\calA}\sum_{t=1}^n a_t(\epsilon_{1:t-1})^2\right]}.
    \end{align}
\end{lemma}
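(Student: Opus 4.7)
The plan is to reduce this non-offset bound to the offset bound of \pref{lem: offset-lemma} by a standard AM-GM / subtract-and-add argument, and then optimize the free parameter $\lambda$. Concretely, for every $\lambda > 0$ and every tuple $\ba \in \calA$, write the trivial identity
\begin{align*}
    \sum_{t=1}^n a_t(s_t)\epsilon_t \;=\; \Bigl(\sum_{t=1}^n a_t(s_t)\epsilon_t - \lambda a_t(s_t)^2\Bigr) + \lambda \sum_{t=1}^n a_t(s_t)^2.
\end{align*}
Taking the supremum over $\ba \in \calA$ and using subadditivity of the supremum,
\begin{align*}
    \sup_{\ba \in \calA}\sum_{t=1}^n a_t(s_t)\epsilon_t \;\le\; \sup_{\ba \in \calA}\Bigl(\sum_{t=1}^n a_t(s_t)\epsilon_t - \lambda a_t(s_t)^2\Bigr) + \lambda \sup_{\ba \in \calA}\sum_{t=1}^n a_t(s_t)^2.
\end{align*}

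Next, I take the iterated expectation $\{\EE_{s_t}\EE_{\epsilon_t}\}_{t=1}^n$ of both sides. The first term on the right is exactly the offset Rademacher complexity controlled by \pref{lem: offset-lemma}, which bounds it by $\tfrac{\log|\calA|}{2\lambda}$. The second term becomes $\lambda \cdot \EE[\sup_{\ba \in \calA}\sum_{t=1}^n a_t(s_t)^2]$. Hence, for every $\lambda > 0$,
\begin{align*}
    \{\EE_{s_t}\EE_{\epsilon_t}\}_{t=1}^n\Bigl[\sup_{\ba \in \calA}\sum_{t=1}^n a_t(s_t)\epsilon_t\Bigr] \;\le\; \frac{\log|\calA|}{2\lambda} + \lambda \cdot \EE\Bigl[\sup_{\ba \in \calA}\sum_{t=1}^n a_t(s_t)^2\Bigr].
\end{align*}
Optimizing the right side by choosing $\lambda^* = \sqrt{\log|\calA|\,/\,\bigl(2\,\EE[\sup_{\ba} \sum_{t} a_t(s_t)^2]\bigr)}$ yields the claimed bound $\sqrt{2 \log|\calA|}\cdot\sqrt{\EE[\sup_{\ba \in \calA} \sum_{t=1}^n a_t(s_t)^2]}$ (with the edge case that the bound is vacuous if the second expectation is zero or infinite, handled by taking $\lambda \to 0$ or $\lambda \to \infty$ respectively).

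For the specialization in \eqref{eq:finite_class}, I set $\calG_t = \sigma(\epsilon_{1:t-1})$ and $s_t = \epsilon_{1:t-1}$; this is $\calG_t$-measurable by construction, and since $\epsilon_t$ is independent of $\epsilon_{1:t-1}$ we have $\EE[\epsilon_t \mid \calG_t]=0$, so the general statement applies verbatim. There is no real obstacle here: the only subtle point is that one must take the supremum before the expectation on the second term (which is automatic from the subadditivity step), and then the $\lambda$-optimization is standard AM-GM.
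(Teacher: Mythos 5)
Your proof is correct and follows essentially the same route as the paper's: decompose the supremum via subadditivity into the offset process plus $\lambda$ times the squared sum, bound the offset term by $\tfrac{\log|\calA|}{2\lambda}$ using \pref{lem: offset-lemma}, and optimize over $\lambda$ with exactly the choice $\lambda = \sqrt{\log|\calA|/(2\beta)}$ used in the paper. No gaps.
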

\begin{proof}
	According to \pref{lem: offset-lemma}, we have for any $\lambda > 0$,
	$$\left\{\EE_{s_t}\EE_{\epsilon_t}\right\}_{t=1}^n\left[\sup_{\ba\in \mathcal{A}}\sum_{t=1}^{n}a_{t}(s_t)\epsilon_{t} - \lambda a_{t}(s_t)^{2}\right] \le \frac{\log|\calA|}{2\lambda}.$$
	We let
	$$\beta = \left\{\EE_{s_t}\EE_{\epsilon_t}\right\}_{t=1}^n\left[\sup_{\ba\in\calA} \sum_{t=1}^n a_t(s_t)^2\right] = \EE\left[\sup_{\ba\in\calA} \sum_{t=1}^n a_t(s_t)^2\right].$$
	By choosing $\lambda = \sqrt{\frac{\log|\calA|}{2 \beta}} > 0$, we obtain that
	\begin{align*}
		& \hspace{-0.5cm}\left\{\EE_{s_t}\EE_{\epsilon_t}\right\}_{t=1}^n\left[\sup_{\ba\in \mathcal{A}}\sum_{t=1}^{n}a_{t}(s_t)\epsilon_{t}\right]\\
		& \le \left\{\EE_{s_t}\EE_{\epsilon_t}\right\}_{t=1}^n\left[\sup_{\ba\in \mathcal{A}}\sum_{t=1}^{n}a_{t}(s_t)\epsilon_{t} - \lambda a_{t}(s_t)^{2}\right] + \lambda\cdot \left\{\EE_{s_t}\EE_{\epsilon_t}\right\}_{t=1}^n\left[\sup_{\ba\in\calA} \sum_{t=1}^n a_t(s_t)^2\right]\\
		& \le \frac{\log |\calA|}{2\lambda} + \lambda \beta = \sqrt{2 \beta\log|\calA|} = \sqrt{2\log|\calA|}\cdot \sqrt{\EE\left[ \sup_{\ba\in\calA}\sum_{t=1}^n a_t(s_t)^2\right]}.
	\end{align*}
\end{proof}
\pref{lem: non-offset-lemma} is an improvement on the finite class lemma in \cite[Lemma 1]{rakhlin2015sequentialcomplexity}; the latter result was proved with the supremum (rather than the expected value) over ${\epsilon_{1:n}}$ under the square root in \eqref{eq:finite_class}.

\section{Proof of \pref{thm: general-upper-bound}}\label{sec: proof-upper-bound-app}

\subsection{Proof Outline}\label{sec: proof-sketch}

The proof has the following structure. Our first step is to write the minimax regret in the dual form using the minimax theorem. This technique is widely used in the analysis of minimax regret of online learning \cite{abernethy2009stochastic, rakhlin2014online, rakhlin2015online,rakhlin2015sequential, foster2018logistic, bilodeau2020tight}. The next step is to truncate the functions and forecaster's strategies away from $0$. This analysis technique is also used in \cite{cesa1999minimax, rakhlin2015online}. 
Our main steps in the proof include constructing an offset Rademacher process using a symmetrization argument \cite{gine1984some} and using chaining techniques \cite{dudley1967sizes, geer2000empirical} to analyze the offset Rademacher process. The analysis of the chaining steps involves complex dependence of the Rademacher variables and the coefficients, and this is one of the technical hurdles. 

\subsubsection{Conversion to Dual Form Game}
We have the following standard result (see e.g. \cite[Lemma 6]{bilodeau2020tight} or \cite[Eq. 27] {rakhlin2015sequential})):
\begin{lemma}\label{lem: dual-form}
	For any $\calQ\in \Delta(\calY^n)$, the minimax regret $\calR_n(\calQ)$ has the following dual form representation
    $$\calR_n(\calQ) = \sup_{\bp} \EE_{\by\sim \bp}\calR_n(\calQ, \bp, \by),$$
	where the supremum is over all joint distributions $\bp\in \Delta(\calY^n)$ and
	\begin{equation}\label{eq: r-n-formula}
		\calR_n(\calQ, \bp, \by) \coloneqq \sup_{\bq\in \calQ} \log\left(\frac{\bq(\by)}{\bp(\by)}\right).
	\end{equation}
\end{lemma}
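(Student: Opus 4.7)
\textbf{Proof plan for \pref{lem: dual-form}.} The plan is to convert the pointwise minimax into a distributional minimax, apply a minimax swap, and then use Gibbs' inequality to identify the optimal forecaster for a fixed adversary distribution.

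First, starting from the definition \pref{eq: minimax-regret-Q}, I would replace the supremum over sequences $\by \in \calY^n$ by a supremum over distributions $\bp \in \Delta(\calY^n)$ via the elementary identity $\sup_{\by} g(\by) = \sup_{\bp \in \Delta(\calY^n)} \EE_{\by\sim \bp} g(\by)$, valid whenever $g$ is upper bounded on a finite set (a small truncation argument handles the case when the supremum is infinite due to $\bhp(\by) = 0$). This rewrites
\begin{align*}
\calR_n(\calQ) = \inf_{\bhp \in \Delta(\calY^n)} \sup_{\bp \in \Delta(\calY^n)} \EE_{\by \sim \bp}\left[\sup_{\bq \in \calQ} \log\frac{\bq(\by)}{\bhp(\by)}\right].
\end{align*}

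Next, I would apply a minimax theorem (Sion's theorem, or the classical finite-dimensional result since $\Delta(\calY^n)$ is a compact convex subset of $\RR^{|\calY|^n}$) to exchange the $\inf$ and the $\sup$. The objective $F(\bhp, \bp) := \EE_{\by\sim\bp}[\sup_{\bq} \log(\bq(\by)/\bhp(\by))]$ is linear (hence concave and upper semicontinuous) in $\bp$, and convex and lower semicontinuous in $\bhp$ because it is a $\bp$-expectation of a supremum of affine-in-$(-\log \bhp)$ functions, and $-\log$ is convex. The only delicate point is that $F$ may take the value $+\infty$ when $\bhp$ does not have full support; this is handled by restricting attention to $\bhp$ supported on all of $\calY^n$ (which does not increase the infimum) and taking a limit, or equivalently by noting that the minimizing $\bhp$ will always place positive mass everywhere $\sup_{\bq} \bq(\by) > 0$.

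After the swap, the problem reduces to computing, for each fixed $\bp$,
\begin{align*}
\inf_{\bhp} \EE_{\by\sim\bp}\!\left[\sup_{\bq}\log\frac{\bq(\by)}{\bhp(\by)}\right] = \EE_{\by\sim\bp}\!\left[\sup_{\bq}\log\bq(\by)\right] - \sup_{\bhp} \EE_{\by\sim\bp}\log\bhp(\by).
\end{align*}
By Gibbs' inequality (equivalently, nonnegativity of $\KL(\bp \,\|\, \bhp)$), the maximum of $\EE_{\by\sim\bp} \log \bhp(\by)$ over $\bhp \in \Delta(\calY^n)$ is attained at $\bhp = \bp$. Substituting $\bhp = \bp$ gives exactly $\EE_{\by\sim\bp} \calR_n(\calQ, \bp, \by)$, which finishes the identity.

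\textbf{Main obstacle.} The substantive step is the minimax swap; the rest is bookkeeping. The technical nuisance is that $F(\bhp,\bp)$ is unbounded when $\bhp$ has zeros, so one must either restrict to the relatively open simplex and approximate, or invoke a version of Sion's theorem that accommodates extended-real-valued objectives with the required semicontinuity. Once the swap is justified, identifying $\bhp = \bp$ as the optimal response via Gibbs' inequality is immediate.
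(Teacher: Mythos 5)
Your proposal is correct and follows essentially the same route as the paper's proof: rewrite the pointwise supremum over $\by$ as a supremum over mixtures $\bp$, swap $\inf$ and $\sup$ by a minimax theorem using compactness of $\Delta(\calY^n)$ together with convexity in $\bhp$ and linearity in $\bp$, and then identify $\bhp=\bp$ as the optimal inner response via nonnegativity of the KL divergence. The only difference is that you explicitly flag the extended-real-valued issue when $\bhp$ has zeros, which the paper's proof glosses over.
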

\begin{proof}[Proof of \pref{lem: dual-form}]
	We notice that 
	$$\calR_n(\calQ) = \inf_{\bhp}\sup_{\by} \calR_n(\calQ, \bhp, \by) = \inf_{\bhp}\sup_{\bp}\EE_{\by\sim \bp}[\calR_n(\calQ, \bhp, \by)].$$
	Since $\Delta(\calY^n)$ is compact, and $\EE_{\by\sim \bp}[\calR_n(\calQ, \bhp, \by)]$ is convex with respect to $\bhp$ and concave with respect to $\bp$, von Neumann minimax theorem \cite{v1928theorie} gives 
	$$\calR_n(\calQ) = \sup_{\bp}\inf_{\bhp}\EE_{\by\sim \bp}[\calR_n(\calQ, \bhp, \by)] = \sup_{\bp}\EE_{\by\sim \bp}[\calR_n(\calQ, \bp, \by)],$$
	where the last inequality uses the fact that the infimum of 
	$\inf_{\bhp}\EE_{\by\sim \bp}[\calR_n(\calQ, \bhp, \by)]$ is attained when $\bhp = \bp$.
\end{proof}

In the remainder, we upper bound the minimax regret for $\EE_{\by\sim \bp}\calR_n(\calQ, \bp, \by)$ for any fixed $\bp\in\Delta(\calY^n)$.

\subsubsection{Truncation of Functions and Probabilities}\label{sec: proof-upper-bound-1}

For $\bp\in \Delta(\calY^n)$ with $\bp(\by) = \prod_{t=1}^n p_t(y_t\mid \by)$ for every $\by\in \calY^n$, and $\delta < 1/(4|\calY|)$, we define distribution $\bp^\delta\in \Delta(\calY^n)$ with $\bp^\delta(\by) = \prod_{t=1}^n p_t^\delta(y_t\mid \by)$ for every $\by\in \calY^n$, where
\begin{align*}
	p_t^\delta(y_t\mid \by) = \begin{cases}
		\delta &\quad \text{if }p_t(y_t\mid \by) < \delta\\
		p_t(y_t\mid \by) &\quad \text{if }\delta\le p_t(y_t\mid \by)\le 2\delta,\\
		p_t(y_t\mid \by)\cdot \frac{1 - \sum_{y\in \calY}p_t(y\mid \by)\mathbb{I}[\delta\le p_t(y\mid \by) < 2\delta] - \delta\sum_{y\in \calY} \mathbb{I}[p_t(y\mid \by) < \delta]}{1 - \sum_{y\in \calY} p_t(y\mid \by)\mathbb{I}[p_t(y\mid \by) < 2\delta]} & \quad \text{if }p_t(y\mid \by) \ge 2\delta.
	\end{cases}. \numberthis \label{eq: def-truncation}
\end{align*}
It holds that $p_t^\delta(\cdot\mid \by)\in \Delta(\calY)$ for any $\by\in \calY^n$ and $t\in [n]$. Additionally, we notice that 
$$1 - \sum_{y\in \calY}p_t(y\mid \by)\mathbb{I}[\delta\le p_t(y_t\mid \by) < 2\delta] - \delta\sum_{y\in \calY} \mathbb{I}[p_t(y\mid \by) < \delta] \ge 1 - |\calY|\cdot 2\delta\ge \frac{1}{2},$$
which implies that $p_t^\delta(y_t\mid \by)\ge \frac{1}{2}p_t(y_t\mid \by)$ if $p_t(y_t\mid \by)\ge 2\delta$. Hence, for any $\by\in \calY^n$, we always have
$$p_t^\delta(y_t\mid \by) > \delta.$$

For class $\calQ\subseteq \Delta(\calY^n)$, we define $\calQ^\delta = \{\bq^\delta\mid \bq\in \calQ\}$.
Then we have the following lemmas:
\begin{lemma}\label{lem: q-delta}
	Suppose $\delta\le \frac{1}{4|\calY|}$. For any $\bp\in \Delta(\calY^n)$, $\by\in \calY^n$ and $\calQ\subseteq \Delta(\calY^n)$, we have
	$$\calR_n(\calQ, \bp, \by)\le \calR_n(\calQ^\delta, \bp, \by) + 4n\delta\cdot |\calY|.$$
\end{lemma}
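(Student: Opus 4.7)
Since the distribution $\bp$ appears identically on both sides of the inequality, I would reduce the lemma to the following per-distribution statement: for every $\bq \in \calQ$,
\[
\log\frac{\bq(\by)}{\bq^\delta(\by)} \le 4n\delta\,|\calY|.
\]
This suffices because $\bq^\delta \in \calQ^\delta$ for each $\bq \in \calQ$, so $\calR_n(\calQ^\delta,\bp,\by) \ge \log(\bq^\delta(\by)/\bp(\by))$, and adding $4n\delta|\calY|$ dominates $\log(\bq(\by)/\bp(\by))$. Using the chain rule factorization $\bq(\by) = \prod_{t=1}^n q_t(y_t \mid \by)$, it is in turn enough to prove the per-step bound $\log(q_t(y_t\mid\by)/q_t^\delta(y_t\mid\by)) \le 4\delta|\calY|$ for each $t \in [n]$, and then sum.

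\textbf{Case analysis.} I would split according to the three cases in the definition of the truncation. If $q_t(y_t\mid\by) < \delta$, then $q_t^\delta(y_t\mid\by) = \delta > q_t(y_t\mid\by)$, so the ratio is less than $1$ and its log is negative; if $\delta \le q_t(y_t\mid\by) \le 2\delta$, then $q_t^\delta(y_t\mid\by) = q_t(y_t\mid\by)$ and the log vanishes. The only substantive case is $q_t(y_t\mid\by) \ge 2\delta$, where $q_t^\delta(y_t\mid\by)/q_t(y_t\mid\by) = C$ with
\[
C \;=\; \frac{1 - A - B}{1 - D}, \qquad A \coloneqq \textstyle\sum_{y} q_t(y\mid\by)\mathbb{I}[\delta \le q_t(y\mid\by) < 2\delta],
\]
$B \coloneqq \delta\sum_y \mathbb{I}[q_t(y\mid\by) < \delta]$ and $D \coloneqq \sum_y q_t(y\mid\by)\mathbb{I}[q_t(y\mid\by) < 2\delta]$. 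The task reduces to showing $-\log C \le 4\delta|\calY|$.

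\textbf{Key estimate.} Let $E \coloneqq \sum_y q_t(y\mid\by)\mathbb{I}[q_t(y\mid\by) < \delta]$, so that $D = A + E$ and $E \le B$ (each summand of $E$ is strictly below $\delta$). Then
\[
C - 1 \;=\; \frac{(1-A-B)-(1-D)}{1-D} \;=\; \frac{E - B}{1 - D},
\]
so $C \le 1$ and $|C-1| \le B/(1-D)$. I would bound $B \le \delta|\calY|$ trivially, and $D \le 2\delta|\calY|$ since each of at most $|\calY|$ summands is strictly less than $2\delta$; combined with the hypothesis $\delta \le 1/(4|\calY|)$, this gives $1 - D \ge 1/2$ and hence $C \ge 1 - 2\delta|\calY| \ge 1/2$. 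Applying the elementary inequality $-\log(1-x) \le 2x$ for $x \in [0,1/2]$ then yields
\[
-\log C \;\le\; -\log(1 - 2\delta|\calY|) \;\le\; 4\delta|\calY|,
\]
as required. Summing over $t$ and taking the supremum over $\bq \in \calQ$ completes the proof.

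\textbf{Anticipated difficulty.} Nothing here is deep; the only care needed is in bookkeeping the three sums $A,B,D$, verifying $E \le B$ (which underlies $C \le 1$), and confirming that the constant $4$ is the right one after passing through $-\log(1-x) \le 2x$ on $[0,1/2]$. The assumption $\delta \le 1/(4|\calY|)$ is used precisely to keep $D$ in the regime where this linear log bound is tight.
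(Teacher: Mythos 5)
Your proposal is correct and follows essentially the same route as the paper: reduce to the per-step bound $\log\bigl(q_t(y_t\mid\by)/q_t^\delta(y_t\mid\by)\bigr)\le -\log(1-2\delta|\calY|)\le 4\delta|\calY|$ and sum over $t$. The paper states this key per-coordinate inequality without the explicit case analysis, so your bookkeeping with $A,B,D,E$ simply fills in the details the paper leaves implicit, arriving at the same constants.
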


\begin{lemma}\label{lem: p-delta}
	Suppose $\delta\le \frac{1}{4|\calY|}$. For any $\calQ\subseteq \Delta(\calY^n)$ and $\bp\in \Delta(\calY^n)$, we have
	$$\EE_{\by\sim \bp}\left[\calR_n(\calQ^\delta, \bp, \by)\right]\le \EE_{\by\sim \bp^\delta}\left[\calR_n(\calQ^\delta, \bp^\delta, \by)\right] + 2n^2|\calY|\delta\log\frac{1}{\delta}.$$
\end{lemma}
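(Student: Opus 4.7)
The plan is to insert the hybrid quantity $\EE_{\by\sim\bp}[\calR_n(\calQ^\delta,\bp^\delta,\by)]$ and split the gap
$\EE_{\by\sim\bp}[\calR_n(\calQ^\delta,\bp,\by)]-\EE_{\by\sim\bp^\delta}[\calR_n(\calQ^\delta,\bp^\delta,\by)]$
into two pieces $A+B$. Here $A$ swaps only the reference distribution inside the regret (from $\bp$ to $\bp^\delta$) while the sampling measure stays at $\bp$, and $B$ swaps only the sampling distribution (from $\bp$ to $\bp^\delta$) while the reference stays at $\bp^\delta$. For $A$, the key observation is that $\sup_{\bq\in\calQ^\delta}\log\bq(\by)$ appears in both summands, so pointwise $\calR_n(\calQ^\delta,\bp,\by)-\calR_n(\calQ^\delta,\bp^\delta,\by)=\log(\bp^\delta(\by)/\bp(\by))$, and therefore $A=\EE_{\by\sim\bp}[\log(\bp^\delta(\by)/\bp(\by))]=-\KL(\bp\|\bp^\delta)\le 0$. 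Hence this piece is automatically non-positive and needs no error bound.

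For $B$, write $\phi(\by)=\calR_n(\calQ^\delta,\bp^\delta,\by)$, so that $B=\sum_\by(\bp(\by)-\bp^\delta(\by))\phi(\by)$ and H\"older's inequality yields $|B|\le\|\bp-\bp^\delta\|_1\cdot\|\phi\|_\infty$. The sup-norm of $\phi$ is immediate from the truncation lower bound: every $\bq\in\calQ^\delta$ as well as $\bp^\delta$ have conditional probabilities bounded below by $\delta$ (by the construction in \pref{eq: def-truncation}), so each ratio $q_t(y_t|y_{<t})/p_t^\delta(y_t|y_{<t})$ lies in $[\delta,1/\delta]$, its logarithm lies in $[-\log(1/\delta),\log(1/\delta)]$, and telescoping over $n$ rounds plus passing to the supremum yields $\|\phi\|_\infty\le n\log(1/\delta)$. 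What remains is to show $\|\bp-\bp^\delta\|_1\le 2n\delta|\calY|$, at which point the claimed error term $2n^2|\calY|\delta\log(1/\delta)$ falls out.

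To obtain the total-variation bound, I first establish the per-round estimate $\|p_t(\cdot|y_{<t})-p_t^\delta(\cdot|y_{<t})\|_1\le 2\delta|\calY|$ directly from \pref{eq: def-truncation}: the mass added to $\{y:p_t(y|y_{<t})<\delta\}$ equals the mass removed from $\{y:p_t(y|y_{<t})\ge 2\delta\}$, and each of these two contributions is at most $\delta|\calY|$. (The hypothesis $\delta\le 1/(4|\calY|)$ guarantees that the largest probability exceeds $2\delta$, so the renormalization is well-posed and non-negative.) I then apply the telescoping identity $\prod_{t=1}^n p_t - \prod_{t=1}^n p_t^\delta = \sum_{t=1}^n(\prod_{s<t}p_s)(p_t-p_t^\delta)(\prod_{s>t}p_s^\delta)$, take absolute values, and sum over $\by$. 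For each fixed $t$, the marginalization over $y_{>t}$ collapses the right factor $\prod_{s>t}p_s^\delta$ to $1$, while the left factor $\prod_{s<t}p_s(y_s|y_{<s})$ realizes an expectation $\EE_{y_{<t}\sim\bp}$ of the per-round $\ell_1$ distance, giving a total of at most $2n\delta|\calY|$. The main obstacle is bookkeeping rather than a deep idea: one must orient the telescoping so that untruncated factors sit before time $t$ and truncated factors sit after (so that both one-sided marginalizations collapse cleanly), and verify that the per-round $\ell_1$ bound is uniform in $y_{<t}$ so the outer expectation step is lossless.
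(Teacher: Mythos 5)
Your proof is correct, and it delivers exactly the claimed constant $2n^2|\calY|\delta\log(1/\delta)$. The ingredients are the same as the paper's — pointwise cancellation of the $\sup_{\bq}$ term so that swapping the reference measure contributes $-\KL(\bp\,\|\,\bp^\delta)\le 0$; a total-variation-times-sup-norm bound for swapping the sampling measure; the uniform bound $\left|\calR_n(\calQ^\delta,\bp^\delta,\by)\right|\le n\log(1/\delta)$ from double truncation; and the per-round estimate $\|p_t(\cdot\mid\by)-p_t^\delta(\cdot\mid\by)\|_1\le 2\delta|\calY|$ — but the bookkeeping is organized differently. The paper runs a coordinate-by-coordinate hybrid argument on the whole functional, introducing the interpolating laws $\bp^{s,\delta}$ and performing the KL/TV split at each of the $n$ steps, so the factor $n$ in $n^2$ comes from summing $n$ single-coordinate errors of size $2n|\calY|\delta\log(1/\delta)$. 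You instead perform the split once globally and push the coordinate-wise telescoping down to the product decomposition of $\bp(\by)-\bp^\delta(\by)$, obtaining $\|\bp-\bp^\delta\|_1\le 2n\delta|\calY|$ and then applying H\"older once. The two routes are arithmetically equivalent, but yours isolates the KL step in one shot and confines the hybridization to a purely measure-theoretic $\ell_1$ estimate, which is arguably cleaner; the paper's per-coordinate version keeps the conditional structure explicit, which is the form it reuses elsewhere (e.g., in the symmetrization). One small point worth making explicit if you write this up: on paths with $\bp(\by)=0$ the pointwise identity $\calR_n(\calQ^\delta,\bp,\by)-\calR_n(\calQ^\delta,\bp^\delta,\by)=\log(\bp^\delta(\by)/\bp(\by))$ involves $\infty-\infty$, but such paths carry zero weight under $\EE_{\by\sim\bp}$, so the term $A$ is still $-\KL(\bp\,\|\,\bp^\delta)$ (finite since $\bp^\delta\ge\delta^n$ everywhere).
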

The proofs of \pref{lem: q-delta} and \pref{lem: p-delta} are deferred to \pref{sec: proof-upper-bound-1-app}. In the following, we use $\Delta_n(\calY^n)$ to denote the following joint distribution set: 
\begin{equation}\label{eq: def-delta-n}
	\Delta_n(\calY^n)\coloneqq \left\{\bq\in \Delta(\calY^n): q_t(y_t\mid \by)\ge 1/(n^2|\calY|), \forall \by\in \calY^n\right\}.
\end{equation}

\subsubsection{Symmetrization and Construction of Offset Rademacher Process} \label{sec: proof-upper-bound-2}
To facilitate the symmetrization argument, we define the following function $\zeta: \RR^+\to \RR$: for any $t\ge 0$, 
\begin{equation}\label{eq: def-zeta}
	\zeta(t) = \begin{cases}2\left(\sqrt{t}-1\right), & t\le 1,\\
	2\log\left(\frac{t + 1}{2}\right), & t>1.\end{cases}
\end{equation}
For the justification of this choice of $\zeta$ see \pref{sec: proof-sketch-main}. We next introduce the following three properties of the function $\zeta$, whose proofs are deferred to \pref{sec: proof-upper-bound-2-app}. 
\begin{proposition}\label{prop: log-inequality}
	For every $0 < x\le n^2|\calY|$, 
	\begin{equation}\label{eq: log-inequality}
		\log x\le \zeta(x) - \frac{1}{4\log (n|\calY|)}\cdot \zeta(x)^2.
	\end{equation}
\end{proposition}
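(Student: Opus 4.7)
The plan is to split along the piecewise definition of $\zeta$ and handle the two cases $x\in(0,1]$ and $x\in(1,n^2|\calY|]$ separately.

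\emph{Case $0<x\le 1$.} Substitute $u=\sqrt{x}\in(0,1]$ and $v=u-1\in(-1,0]$; the target inequality becomes $2\log(1+v)\le 2v - v^2/\log(n|\calY|)$. I would invoke the Taylor series $\log(1+v)=\sum_{k\ge 1}(-1)^{k+1}v^k/k$, noting that for $v\le 0$ every term with $k\ge 2$ equals $-|v|^k/k\le 0$. Truncating after the linear term therefore gives $\log(1+v)\le v - v^2/2$, so $2\log(1+v)\le 2v - v^2$, and since $\log(n|\calY|)\ge 1$ (which holds because $n\ge 7$), the coefficient $1$ on the quadratic term may be weakened to $1/\log(n|\calY|)$, closing this case.

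\emph{Case $1<x\le n^2|\calY|$.} Set $s=\log((x+1)/2)>0$, so $\zeta(x)=2s$ and $x=2e^s-1$. The inequality reduces to showing
\[
h(s)\;:=\;2s - \log(2e^s-1)\;\ge\;\frac{s^2}{\log(n|\calY|)}
\]
on $(0,s_{\max}]$, with $s_{\max}=\log((n^2|\calY|+1)/2)$. A direct computation gives $h(0)=h'(0)=0$ and $h''(s)=2e^s/(2e^s-1)^2>0$, so $h$ is convex on $[0,\infty)$, vanishes to second order at the origin, and is asymptotic to $s^2$ as $s\to 0^+$---which matches the right-hand side to leading order and shows the inequality is tight near the seam $x=1$. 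The plan for finite $s$ is to introduce $\psi(s):=\log(n|\calY|)\cdot h(s) - s^2$, verify $\psi(0)=\psi'(0)=0$, and argue $\psi(s)\ge 0$ by analyzing the sign of
\[
\psi'(s)\;=\;\log(n|\calY|)\cdot\frac{2e^s-2}{2e^s-1} - 2s
\]
together with the explicit endpoint constraint $s\le s_{\max}$.

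\emph{Main obstacle.} The second case is the delicate one: near the upper endpoint $s_{\max}$ the quadratic correction $s^2/\log(n|\calY|)$ is comparable in size to the available slack $h(s)\sim s-\log 2$, so the bound is tight up to constants. This forces the argument to exploit the precise shape of $\zeta$ on $(1,\infty)$---in particular the identity $\zeta(x)-\log x=\log((x+1)^2/(4x))$, whose growth in $x$ is exactly what is needed to absorb the quadratic penalty---rather than any cruder majorant such as $\zeta(x)\approx\log x$. This is where the bulk of the analytic work lies, and it is also the place where the truncation scale implicit in the range $x\le n^2|\calY|$ enters the proof in an essential way.
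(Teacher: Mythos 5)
Your first case is fine and is exactly the paper's argument: from $\log(1+v)\le v-v^2/2$ on $(-1,0]$ you get $\log x\le \zeta(x)-\tfrac14\zeta(x)^2$, and $\tfrac14\ge\tfrac{1}{4\log(n|\calY|)}$ since $n\ge 7$. (One small slip: near the seam the inequality is not tight — there $h(s)\sim s^2$ while the right-hand side is $s^2/\log(n|\calY|)$, smaller by a factor $\log(n|\calY|)>2$. The tightness, and in fact the failure, is at the other end of the range.)

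The second case is the real problem: it is only a plan, and the plan cannot be completed, because the inequality $h(s)\ge s^2/\log(n|\calY|)$ that you reduce to is \emph{false} at $s=s_{\max}$. Concretely, $h(s)=2s-\log(2e^s-1)<2s-\log(e^s)=s$ for every $s>0$, whereas $s_{\max}=\log\frac{n^2|\calY|+1}{2}>\log(n|\calY|)$ for $n\ge 2$, so that $s_{\max}^2/\log(n|\calY|)>s_{\max}>h(s_{\max})$; hence $\psi(s_{\max})<0$ and no sign analysis of $\psi'$ can save the argument. Numerically, with $n=100$, $|\calY|=2$, $x=n^2|\calY|=2\cdot 10^4$ one has $\log x\approx 9.90$ while $\zeta(x)-\zeta(x)^2/(4\log(n|\calY|))\approx 18.42-16.01=2.41$. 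So the "main obstacle" you identified is not a technical hurdle but an actual obstruction: the statement with the constant $\tfrac{1}{4\log(n|\calY|)}$ does not hold on the whole range $x\le n^2|\calY|$. For comparison, the paper's proof of this case works with $\xi(x)=\frac{2\log((x+1)/2)-\log x}{\log^2((x+1)/2)}$ and shows $\xi(x)\ge\frac{1}{4\log(n|\calY|)}$; since $\zeta(x)^2=4\log^2((x+1)/2)$, that argument actually delivers only $\log x\le\zeta(x)-\frac{1}{16\log(n|\calY|)}\zeta(x)^2$, i.e.\ the constant must be degraded by a factor of $4$ (which is harmless downstream, where any constant of order $1/\log(n|\calY|)$ suffices). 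To repair your write-up you would need to both prove the correct target ($h(s)\ge s^2/(4\log(n|\calY|))$, say) rather than sketch it, and flag that the exponent/constant in the stated proposition needs this adjustment.
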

\begin{proposition}\label{prop: symmetrization-inverse}
	For any distribution $f, p\in \Delta(\calY)$, we have 
	$$\EE_{y\sim p}\left[-\zeta\left(\frac{f(y)}{p(y)}\right) - \frac{1}{4\log (n|\calY|)}\cdot \zeta\left(\frac{f(y)}{p(y)}\right)^2\right]\ge 0.$$
\end{proposition}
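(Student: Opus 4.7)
}
The plan is to show that the function
\[
\phi(x) \;:=\; -\zeta(x) \;-\; \frac{1}{4\log(n|\calY|)}\,\zeta(x)^2
\]
is convex on $[0,\infty)$ with $\phi(1)=0$, and then invoke Jensen's inequality. Granted these two facts, the random variable $X := f(y)/p(y)$ has mean $\sum_y f(y) = 1$ under $y\sim p$, so
\[
\EE_{y\sim p}\!\left[\phi\!\left(\frac{f(y)}{p(y)}\right)\right] \;\ge\; \phi(\EE[X]) \;=\; \phi(1) \;=\; 0,
\]
which is exactly the desired inequality. Structurally, this is just the statement that $\phi$ generates an $f$-divergence, and the claim is its standard nonnegativity.

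The identity $\phi(1)=0$ is immediate from $\zeta(1)=0$. For convexity, I would split at the breakpoint $x=1$ of the piecewise definition \pref{eq: def-zeta}, write $c := 1/(4\log(n|\calY|))$, and use that the hypothesis $n\ge 7$ forces $c<1/4$. On $(0,1]$, substituting $\zeta(x)=2\sqrt{x}-2$ makes $\phi$ a linear combination of $\sqrt{x}$, $x$, and constants, and differentiating twice yields $\phi''(x) = (2-8c)/(4x^{3/2}) > 0$. On $(1,\infty)$, setting $u := \log((x+1)/2)\ge 0$ reduces $\phi$ to $-2u - 4cu^2$, and two derivatives in $x$ give $\phi''(x) = (2 + 8c(u-1))/(x+1)^2 \ge (2-8c)/(x+1)^2 > 0$. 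The two convex pieces glue smoothly at $x=1$ because $\phi$ and its one-sided derivatives agree there (both $\phi'(1^\pm) = -1$), so $\phi$ is convex on $(0,\infty)$; continuity of $\phi$ at $0$, with $\phi(0) = 2-4c$, then extends convexity to $[0,\infty)$ via a standard limiting argument in the convexity inequality.

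The main subtlety, and the reason for the specific form of $\zeta$, lies in the positivity of $\phi''$ on $(1,\infty)$: the quantity $2 + 8c(u-1)$ would become negative for large $u$ if $c$ were a fixed constant independent of $n|\calY|$. The scaling $c=1/(4\log(n|\calY|))$ in \pref{eq: log-inequality} is therefore not arbitrary, but rather the exact tuning that lets both \pref{prop: log-inequality} (the logarithm transformation) and the present convexity statement hold simultaneously. Notably, unlike \pref{prop: log-inequality}, no restriction on the range of $f(y)/p(y)$ enters here, since convexity of $\phi$ on the entire half-line makes Jensen's inequality applicable to arbitrary pairs $f,p\in\Delta(\calY)$.
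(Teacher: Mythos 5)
Your proof is correct, and it is a genuinely different route from the one the paper actually writes out --- though it is precisely the alternative the authors flag in the remark immediately following the proposition (``can also be obtained by noticing that $-\zeta(x)-\tfrac{1}{4\log(n|\calY|)}\zeta(x)^2$ is convex \ldots and the result follows from the property of $f$-divergences''). The paper's written proof avoids any convexity analysis: for $x\ge 1$ it uses $0\le \log\tfrac{x+1}{2}\le \sqrt{x}-1$ together with the fact that $-\zeta - c\zeta^2$ is decreasing in $\zeta\ge 0$ to replace $\zeta(x)$ by $2(\sqrt{x}-1)$ everywhere, then drops the factor $\tfrac{1}{2\log(n|\calY|)}\le \tfrac12$ and expands $\EE_{y\sim p}[-\sqrt{f/p}+1-\tfrac12(\sqrt{f/p}-1)^2]$ termwise, where the cross terms $\pm\sqrt{f(y)p(y)}$ cancel and the sum collapses to exactly $0$. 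That computation is shorter and entirely elementary, but it is specific to this $\zeta$; your argument establishes the stronger structural fact that $\phi$ generates an $f$-divergence, and your second-derivative computations (both pieces, the matching one-sided derivatives $\phi'(1^\pm)=-1$, and the observation that $2+8c(u-1)\ge 2-8c>0$ is exactly where the $1/\log(n|\calY|)$ tuning is needed) all check out. Two small points worth stating explicitly if you write this up: (i) the hypothesis you actually need is $c<1/4$, i.e.\ $\log(n|\calY|)>1$, which holds under the paper's standing assumption $n\ge 7$; (ii) Jensen gives $\EE[X]=\sum_{y\in\mathrm{supp}(p)}f(y)\le 1$ rather than $=1$ when $f$ puts mass outside $\mathrm{supp}(p)$, but since $\phi'<0$ everywhere, $\phi(\EE[X])\ge\phi(1)=0$ still follows (the paper's telescoping computation silently absorbs the same edge case).
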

The above proposition can also be obtained by noticing that function $ - \zeta(x) - \frac{1}{4\log(n|\calY|)}\cdot \zeta(x)^2$ is convex in $x$, and the result follows from the property of $f$-divergences \cite[Theorem 7.5]{polyanskiy2014lecture}.
\begin{proposition}\label{prop: lipschitz-property}
	For any $p, q\in [0, \infty)$, we have 
	$$|\zeta(p) - \zeta(q)|\le 2\left|\sqrt{p} - \sqrt{q}\right|.$$
\end{proposition}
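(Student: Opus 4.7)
The plan is to reduce the claim to showing that the function $g(u) := \zeta(u^2)$, viewed as a function of the nonnegative variable $u$, is $2$-Lipschitz on $[0,\infty)$. Once this is established, the proposition follows immediately from the substitution $u = \sqrt{p}$, $u' = \sqrt{q}$, since
$$|\zeta(p)-\zeta(q)| = |g(\sqrt{p})-g(\sqrt{q})| \le 2|\sqrt{p}-\sqrt{q}|.$$

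To show $g$ is $2$-Lipschitz, I would write it piecewise from the definition of $\zeta$:
$$g(u) = \begin{cases} 2(u-1), & 0 \le u \le 1, \\ 2\log\!\left(\dfrac{u^2+1}{2}\right), & u > 1.\end{cases}$$
First I would verify continuity at the junction $u=1$: both branches evaluate to $0$, so $g$ is continuous. Next I would compute the derivative on each open piece: on $(0,1)$ one gets $g'(u)=2$, and on $(1,\infty)$ one gets $g'(u)=\tfrac{4u}{u^2+1}$. The first piece is manifestly $2$-Lipschitz. For the second, the key inequality to verify is $\tfrac{4u}{u^2+1}\le 2$ for all $u>0$, which rearranges to $2(u-1)^2\ge 0$ and is therefore immediate. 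Also, $g'(u)>0$ for $u>0$, so $g$ is increasing, and the absolute value of the derivative is bounded by $2$ throughout $(0,1)\cup(1,\infty)$.

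Since $g$ is continuous on $[0,\infty)$ and differentiable with $|g'|\le 2$ on the complement of the single point $\{1\}$, the mean value theorem applied separately on each subinterval (or absolute continuity of $g$) yields $|g(u)-g(u')|\le 2|u-u'|$ for all $u,u'\ge 0$. Substituting $u=\sqrt{p}$, $u'=\sqrt{q}$ completes the proof.

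The argument is essentially a direct computation, and I do not anticipate a genuine obstacle. The only small care needed is handling the junction point $u=1$: one should either observe that $g$ is continuous there and apply the mean value theorem on the two pieces, or equivalently note that the left and right derivatives at $u=1$ both equal $2$, so $g$ is actually $C^1$ at the seam. The substantive content is entirely the pointwise bound $\tfrac{4u}{u^2+1}\le 2$, which is AM-GM applied to $u^2$ and $1$.
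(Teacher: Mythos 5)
Your proposal is correct and is essentially identical to the paper's own proof: the paper likewise substitutes $t=\sqrt{x}$, writes $h(t)=\zeta(t^2)$ piecewise, and bounds $|h'(t)|\le 2$ using $\tfrac{4t}{1+t^2}\le 2$. Your treatment of the junction at $u=1$ is slightly more careful than the paper's, but the argument is the same.
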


Next, we state the symmetrization argument. The symmetrization argument will use the following circle-dot product distributions.

\begin{definition}[Circle-dot Product Distributions]\label{def: circle-dot}
	For label set $\calY$ and any distribution $\bp\in \Delta(\calY^n)$, we define the Circle-dot product distribution $\odot\bp\in \Delta(\{-1, 1\}^n)\times \Delta(\calY^n)\times \Delta(\calY^n)\times \Delta(\calY^n)$ such that  $(\bepsilon, \bw, \by, \bz)\sim \odot \bp$ are sampled according to the following process: first sample $\bepsilon = (\epsilon_{1:n})\simiid \unif\{-1, 1\}$, then repeat the following process for sampling $\bw = (w_{1:t}), \by = (y_{1:t})$ and $\bz = (z_{1:t})$ from $t = 1$ to $n$: sample $y_t, z_t\simiid p_t(\cdot\mid w_{1:t-1})$, and set $w_t = y_t$ if $\epsilon_t = 1$ or $w_t = z_t$ if $\epsilon_t = -1$.
\end{definition}

\begin{lemma}[Symmetrization]\label{lem: symmetrization}
	For any joint distribution $\bp\in \Delta_n(\calY^n)$ where $\Delta_n(\calY^n)$ is defined in \pref{eq: def-delta-n}, suppose $(\bepsilon, \bw, \by, \bz)\sim \odot \bp$.
    Then for any joint distribution class $\calQ\subseteq \Delta_n(\calY^n)$, we have the following upper bound:
	\begin{align*}
		&\hspace{-0.5cm} \EE_{\by\sim \bp}\calR_n(\calQ, \bp, \by)\\
		& \le \EE\left[\sup_{\bq\in\calQ}\sum_{t=1}^{n}\epsilon_{t}\zeta\left(\frac{q_t(y_{t}\mid \bw)}{p_t(y_t\mid \bw)}\right) - \frac{1}{4\log (n|\calY|)}\zeta\left(\frac{q_t(y_{t}\mid \bw)}{p_t(y_t\mid \bw)}\right)^{2}\right]\\
		&\quad + \EE\left[\sup_{\bq\in\calQ}\sum_{t=1}^{n} (- \epsilon_{t})\zeta\left(\frac{q_t(z_t\mid \bw)}{p_t(z_t\mid \bw)}\right) - \frac{1}{4\log (n|\calY|)}\zeta\left(\frac{q_t(z_t\mid \bw)}{p_t(z_t\mid \bw)}\right)^{2}\right], \numberthis\label{eq: symmetrization}
	\end{align*}
	where the expectation is with respect to $(\bepsilon, \bw, \by, \bz)\sim \odot \bp$.
\end{lemma}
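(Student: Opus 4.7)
The plan is to combine the logarithm-to-$\zeta$ transform (\pref{prop: log-inequality}) with a ``ghost sample'' symmetrization tailored to the circle-dot construction. First, since $\bp,\bq\in\Delta_n(\calY^n)$, every ratio $q_t(y_t\mid \by)/p_t(y_t\mid \by)$ lies in $(0,\, n^2|\calY|]$, so applying \pref{prop: log-inequality} termwise inside the sum gives
\[
\EE_{\by\sim \bp}\calR_n(\calQ,\bp,\by) \;\le\; \EE_{\by\sim\bp}\sup_{\bq\in\calQ}\sum_{t=1}^n \left[\zeta\!\left(\frac{q_t(y_t\mid\by)}{p_t(y_t\mid\by)}\right) - c\,\zeta\!\left(\frac{q_t(y_t\mid\by)}{p_t(y_t\mid\by)}\right)^{2}\right],
\]
where $c = 1/(4\log(n|\calY|))$. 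Under the circle-dot law $\odot\bp$, the marginal of $\bw$ is exactly $\bp$ (both $y_t$ and $z_t$ are drawn from $p_t(\cdot\mid \bw_{1:t-1})$, so $w_t$ is too, regardless of $\epsilon_t$), so I may rewrite this right-hand side as $\EE_{(\bepsilon,\bw,\by,\bz)\sim\odot\bp}\sup_{\bq\in\calQ}\sum_t [h_t(w_t)-c\,h_t(w_t)^2]$, where I abbreviate $h_t(y)\coloneqq \zeta(q_t(y\mid\bw)/p_t(y\mid\bw))$.

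The core step is an algebraic identity that exploits $w_t=\mathbb{I}[\epsilon_t=1]\,y_t+\mathbb{I}[\epsilon_t=-1]\,z_t$. Setting
\[
F^y_t(\bq)\coloneqq \epsilon_t h_t(y_t)-c\,h_t(y_t)^2,\qquad F^z_t(\bq)\coloneqq -\epsilon_t h_t(z_t)-c\,h_t(z_t)^2,
\]
a case analysis on $\epsilon_t\in\{\pm1\}$ yields pointwise in $(\bq,\bw,\bepsilon,\by,\bz)$
\[
F^y_t(\bq)+F^z_t(\bq) \;=\; \bigl[h_t(w_t)-c\,h_t(w_t)^2\bigr] \;+\; \bigl[-h_t(u_t)-c\,h_t(u_t)^2\bigr],
\]
where the ``ghost'' $u_t$ equals $z_t$ when $\epsilon_t=1$ and $y_t$ when $\epsilon_t=-1$. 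By the definition of $\odot\bp$, conditional on $(\bw,\bepsilon)$ the ghost $u_t$ is distributed as $p_t(\cdot\mid\bw_{1:t-1})$ and independent across $t$, so \pref{prop: symmetrization-inverse} applied with $f=q_t(\cdot\mid\bw)$ and $p=p_t(\cdot\mid\bw)$ gives $\EE[-h_t(u_t)-c\,h_t(u_t)^2\mid\bw,\bepsilon]\ge 0$ for every $\bq$. Summing over $t$ and taking this conditional expectation then yields, for each $\bq$,
\[
\sum_{t=1}^n \bigl[h_t(w_t)-c\,h_t(w_t)^2\bigr] \;\le\; \EE_{\by,\bz\mid \bw,\bepsilon}\!\left[\sum_{t=1}^n F^y_t(\bq)+\sum_{t=1}^n F^z_t(\bq)\right].
\]
To finish, I take $\sup_{\bq}$ on both sides, pull it inside the conditional expectation (using that the supremum of an expectation is at most the expectation of the supremum), apply $\sup_\bq(A+B)\le\sup_\bq A+\sup_\bq B$, and take expectation over $(\bw,\bepsilon)$. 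This reproduces exactly the two terms on the right-hand side of \pref{eq: symmetrization}.

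The main obstacle is bookkeeping the signs so that the residual ``ghost'' term lands in the correct half-space: a naive symmetrization argument would introduce Rademacher signs by swapping $y_t$ and $z_t$, but here $w_t$ itself depends on $\epsilon_t$, and the identity above—together with \pref{prop: symmetrization-inverse} supplying the crucial nonnegativity $\EE_{y\sim p}[-\zeta(f(y)/p(y))-c\,\zeta(f(y)/p(y))^2]\ge 0$—is the right accounting device. Everything else (Jensen and splitting the supremum) is routine, and the applicability of \pref{prop: log-inequality} to all ratios in sight is ensured by the truncation $\bp,\bq\in\Delta_n(\calY^n)$ carried over from the previous step.
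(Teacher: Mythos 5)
Your proposal is correct and follows essentially the same route as the paper's proof: apply \pref{prop: log-inequality} to pass from $\log$ to $\zeta-c\zeta^2$ along the $\bw$-path, introduce the ghost coordinate (your $u_t$ is the paper's $v_t$), use \pref{prop: symmetrization-inverse} for the conditional nonnegativity of the subtracted divergence term, exploit the sign identity tying $(w_t,u_t)$ to $(\epsilon_t,y_t,z_t)$, and finish with Jensen and splitting the supremum. The only difference is cosmetic bookkeeping (you state the pointwise identity first and then condition, while the paper adds the nonnegative term and then applies Jensen), so no further comment is needed.
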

The proof of \pref{lem: symmetrization} is deferred to \pref{sec: proof-upper-bound-2-app}. It is based on the aforementioned properties of function $\zeta$, and the symmetrization technique in \cite{rakhlin2011online}.

\subsubsection{Chaining}\label{sec: proof-upper-bound-3}
We next analyze the right hand side of \pref{eq: symmetrization} using a chaining argument. For simplicity we only upper bound the first term, and the bound on the second term is similar.

To adopt the chaining argument to the Rademacher process defined in the right hand side of \pref{eq: symmetrization} while keeping the offset term, we need to establish certain properties of the sequential cover of the function class. Specifically, for any joint distribution $\bq\in \calQ$, we are required to have some instance $\bv$ in the cover, such that the $\ell_2$-norm of the coefficients with $\bq$ is lower bounded by the $\ell_2$-norm of the coefficients with $\bv$, as is the following lemma, whose proof is deferred to \pref{sec: proof-upper-bound-3-app}:
\begin{lemma}\label{lem: another-cover}
	Fix joint distribution $\bp\in \Delta(\calY^n)$ and  class $\calQ\subseteq \Delta(\calY^n)$. Let $\calV(\alpha)$  be a sequential square-root cover of $\calQ$ at scale $\alpha > 0$. Then for any $\bq\in\calQ$, $\bv'\in \calV(\alpha)$ and $\bw, \by\in \calY^n$, there exists  $\bv\in\calV(\alpha)\cup\{\bp\}$ such that
	\begin{align*} 
		\sum_{t=1}^n\left(\zeta\left(\frac{q_t(y_t\mid \bw)}{p_t(y_t\mid \bw)}\right) - \zeta\left(\frac{v_t(y_t\mid \bw)}{p_t(y_t\mid \bw)}\right)\right)^2\le \sum_{t=1}^n\left(\zeta\left(\frac{q_t(y_t\mid \bw)}{p_t(y_t\mid \bw)}\right) - \zeta\left(\frac{v_t'(y_t\mid \bw)}{p_t(y_t\mid \bw)}\right)\right)^2, \numberthis \label{eq: condition-one}
	\end{align*}
	and 
	\begin{align*} 
		\sum_{t=1}^n \zeta\left(\frac{q_t(y_{t}\mid \bw)}{p_t(y_t\mid \bw)}\right)^{2}\ge \frac{1}{4}\sum_{t=1}^n \zeta\left(\frac{v_t(y_t\mid \bw)}{p_t(y_t\mid \bw)}\right)^{2}. \numberthis \label{eq: condition-two}
	\end{align*}
\end{lemma}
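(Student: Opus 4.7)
The plan is a straightforward two-case argument in which, given $\bv' \in \calV(\alpha)$, I either keep $\bv = \bv'$ or replace it with $\bv = \bp$, according to which of the two $\ell_2$-norms dominates. To streamline notation, introduce the shorthand $a_t \coloneqq \zeta(q_t(y_t\mid\bw)/p_t(y_t\mid\bw))$ and $b_t \coloneqq \zeta(v'_t(y_t\mid\bw)/p_t(y_t\mid\bw))$, so the two desired inequalities become statements about $\|a\|_2$, $\|b\|_2$, and $\|a-b\|_2$.

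In the first case, assume $\|a\|_2^2 \ge \tfrac{1}{4}\|b\|_2^2$. Take $\bv = \bv'$. Then \eqref{eq: condition-one} holds with equality, and \eqref{eq: condition-two} is exactly the case hypothesis, so there is nothing to prove. In the second case, assume $\|a\|_2^2 < \tfrac{1}{4}\|b\|_2^2$, equivalently $\|a\|_2 < \tfrac{1}{2}\|b\|_2$. Here the plan is to take $\bv = \bp$. Since $\zeta(1) = 0$ by definition of $\zeta$ in \eqref{eq: def-zeta}, the ``offset'' coefficients $\zeta(p_t/p_t)$ all vanish; consequently the right-hand side of \eqref{eq: condition-two} is $0$ and \eqref{eq: condition-two} is trivial, while the left-hand side of \eqref{eq: condition-one} collapses to $\|a\|_2^2$. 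The triangle inequality then gives
\begin{align*}
\|a-b\|_2 \;\ge\; \|b\|_2 - \|a\|_2 \;>\; \tfrac{1}{2}\|b\|_2 \;>\; \|a\|_2,
\end{align*}
so $\|a-b\|_2^2 > \|a\|_2^2$, which is precisely \eqref{eq: condition-one}.

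The argument is essentially elementary once the dichotomy is identified, and I do not expect any technical obstacle. The only conceptual point worth flagging is the role of appending $\bp$ itself to the candidate set $\calV(\alpha)$: this is valuable precisely in the regime where the cover element $\bv'$ is ``too large'' in the $\zeta$-transformed sense relative to $\bq$, because then swapping to $\bv = \bp$ zeros out the offset term in \eqref{eq: condition-two} while the triangle inequality automatically delivers \eqref{eq: condition-one}. Notably, the particular cover scale $\alpha$ is never used in the argument itself; the lemma holds for any $\bv' \in \calV(\alpha)$ purely as a consequence of the elementary $\ell_2$ geometry of the $a_t, b_t$ vectors together with $\zeta(1)=0$.
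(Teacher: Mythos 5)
Your proof is correct and follows essentially the same dichotomy as the paper: keep $\bv=\bv'$ when \eqref{eq: condition-two} already holds, and otherwise switch to $\bv=\bp$ so that $\zeta(1)=0$ makes \eqref{eq: condition-two} trivial. The only (cosmetic) difference is that you verify \eqref{eq: condition-one} in the second case via the reverse triangle inequality, whereas the paper expands $\|a-b\|_2^2$ and uses Cauchy--Schwarz to show the cross term is dominated; both are one-line arguments from the same hypothesis $\|a\|_2<\tfrac12\|b\|_2$.
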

\begin{remark}
	The above lemma is similar to \cite[Eq. (40)]{rakhlin2015sequential}, \cite[Eq. (46)]{rakhlin2014online}. The additional atom $\bp$ serves as the `zero' element in \cite[Eq. (40)]{rakhlin2015sequential}.
\end{remark}

This lemma enables us to keep the offset terms during the chaining process. We now detail these steps. We fix $N$ scales $0 < \alpha_1 < \alpha_2 < \cdots < \alpha_N$, and let $\calV(\alpha_i)$ to be the smallest cover of $\calQ$ at scale $\alpha_i$ under \pref{def: sequential-covering}. Then we have the following lemma.
\begin{lemma}\label{lem: chaining}
    For any $i\in [N-1]$, we fix $\bv[\bq, \bp, \bw, \by, \alpha_i]\in \calV(\alpha_i)$. Suppose $\bv[\bq, \bp, \bw, \by, \alpha_N]\in \calV(\alpha_N)\cup\{\bp\}$ satisfies \pref{eq: condition-two} with $\bv = \bv[\bq, \bp, \bw, \by, \alpha_N]$. We then hwave 
    \begin{align}
    \label{eq:three_term_decomp}
	& \hspace{-0.5cm} \EE\left[\sup_{\bq\in\calQ}\sum_{t=1}^{n}\left\{\epsilon_{t}\zeta\left(\frac{q_t(y_{t}\mid \bw)}{p_t(y_t\mid \bw)}\right) - \frac{1}{4\log (n|\calY|)}\zeta\left(\frac{q_t(y_{t}\mid \bw)}{p_t(y_t\mid \bw)}\right)^{2}\right\}\right]\\
	& \le \EE\left[\sup_{\bq\in \calQ}\sum_{t=1}^n\epsilon_{t}\left\{\zeta\left(\frac{q_t(y_{t}\mid \bw)}{p_t(y_t\mid \bw)}\right) - \zeta\left(\frac{v_{t}[\bq, \bp, \bw, \by, \alpha_1](y_{t}\mid \bw)}{p_t(y_t\mid \bw)}\right)\right\}\right] \notag\\
	&\qquad + \sum_{i=1}^{N-1} \EE\left[\sup_{\bq\in \calQ}\sum_{t=1}^n\epsilon_{t}\left\{\zeta\left(\frac{v_{t}[\bq, \bp, \bw, \by, \alpha_i](y_{t}\mid \bw)}{p_t(y_t\mid \bw)}\right) - \zeta\left(\frac{v_{t}[\bq, \bp, \bw, \by, \alpha_{i+1}](y_{t}\mid \bw)}{p_t(y_t\mid \bw)}\right)\right\}\right]\notag\\
	& \qquad + \EE\left[\sup_{\bv\in\calV(\alpha_N)\cup\{\bp\}}\sum_{t=1}^n\left\{\epsilon_{t}\zeta\left(\frac{v_t(y_{t}\mid \bw)}{p_t(y_t\mid \bw)}\right) - \frac{1}{16\log (n|\calY|)}\cdot \zeta\left(\frac{v_t(y_{t}\mid \bw)}{p_t(y_t\mid \bw)}\right)^{2}\right\}\right], \notag
    \end{align}
    where the expectation is with respect to $(\bepsilon, \bw, \by, \bz)\sim \odot\bp$.
\end{lemma}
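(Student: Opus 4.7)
The plan is to telescope the quantity $\zeta(q_t(y_t\mid\bw)/p_t(y_t\mid\bw))$ through the chain of representatives at scales $\alpha_1,\ldots,\alpha_N$, then absorb the quadratic offset $-\tfrac{1}{4\log(n|\calY|)}\,\zeta(q_t/p_t)^2$ into a term involving only the coarsest representative $\bv_N := \bv[\bq,\bp,\bw,\by,\alpha_N]$ by means of \pref{lem: another-cover}. Writing $v_i^t := v_t[\bq,\bp,\bw,\by,\alpha_i](y_t\mid\bw)$ for brevity, the telescoping identity
\[
\zeta\!\left(\frac{q_t}{p_t}\right)
 = \left[\zeta\!\left(\frac{q_t}{p_t}\right)-\zeta\!\left(\frac{v_1^t}{p_t}\right)\right]
 + \sum_{i=1}^{N-1}\left[\zeta\!\left(\frac{v_i^t}{p_t}\right)-\zeta\!\left(\frac{v_{i+1}^t}{p_t}\right)\right]
 + \zeta\!\left(\frac{v_N^t}{p_t}\right),
\]
multiplied by $\epsilon_t$ and summed over $t$, gives a decomposition of the linear part of the integrand into three groups: the innermost approximation error at scale $\alpha_1$, the chain of consecutive differences, and the residual involving only $\bv_N$.

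Next, by the assumption that $\bv_N$ satisfies \pref{eq: condition-two} of \pref{lem: another-cover}, we have
\[
-\frac{1}{4\log(n|\calY|)}\sum_{t=1}^n \zeta\!\left(\frac{q_t}{p_t}\right)^{2}
\;\le\;
-\frac{1}{16\log(n|\calY|)}\sum_{t=1}^n \zeta\!\left(\frac{v_N^t}{p_t}\right)^{2}.
\]
Thus the quadratic offset, originally tied to $\bq$, can be relocated to the coarsest representative $\bv_N$ at the price of shrinking the constant from $1/4$ to $1/16$. Combining this with the linear telescoping yields, pointwise in all random variables, an upper bound on the summand of the form (increment at scale $\alpha_1$) $+$ (telescoped chain) $+$ (offset process in $\bv_N$).

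Now I take $\sup_{\bq\in\calQ}$ on both sides and use the subadditivity $\sup_{\bq}(A+B+C)\le \sup_\bq A+\sup_\bq B+\sup_\bq C$ to split the supremum across the three groups. For the third group, note that for every $\bq\in\calQ$ the corresponding $\bv[\bq,\bp,\bw,\by,\alpha_N]$ lies in $\calV(\alpha_N)\cup\{\bp\}$, so the sup over $\bq$ is dominated by the sup over the finite set $\calV(\alpha_N)\cup\{\bp\}$. Taking expectation with respect to $(\bepsilon,\bw,\by,\bz)\sim\odot\bp$ finishes the decomposition, producing exactly the three terms of \pref{eq:three_term_decomp}.

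The only real subtlety is making sure the choice of representatives $\bv[\bq,\bp,\bw,\by,\alpha_i]$ is consistent enough for the telescoping to be a pointwise identity in $(\bw,\by)$ and in $\bq$; since the lemma merely fixes these as arbitrary selectors from $\calV(\alpha_i)$ (with the additional constraint (\ref{eq: condition-two}) at $i=N$ provided by \pref{lem: another-cover}), the telescoping identity holds deterministically and the rest is straightforward bookkeeping. I do not anticipate a further technical obstacle beyond the application of \pref{lem: another-cover} to ensure the quadratic term survives the switch from $\bq$ to $\bv_N$.
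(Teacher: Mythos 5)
Your proposal is correct and follows essentially the same argument as the paper's proof: the same telescoping identity through the chain of representatives, the same use of \pref{eq: condition-two} to relocate the quadratic offset from $\bq$ to $\bv[\bq,\bp,\bw,\by,\alpha_N]$ at the cost of shrinking $1/4$ to $1/16$, and the same subadditivity-of-supremum plus linearity/Jensen step to split into the three terms. The only (immaterial) difference is the order of operations—the paper first peels off the $\alpha_N$-level offset term and then telescopes the remaining difference, whereas you telescope first—but the resulting bound is identical.
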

The proof of \pref{lem: chaining} is deferred to \pref{sec: proof-upper-bound-3-app}. Next, we further upper bound the three terms in \pref{eq:three_term_decomp}. Specifically, we will use Cauchy-Schwarz inequality to bound the first term. The second term is a form of sequential Rademacher process. The third term is an offset Rademacher process. However, the key difficulty in dealing with the second and third terms is that the coefficients of the Rademacher random variables are not uniformly bounded by a constant, and directly applying prior techniques does not provide the desired upper bounds. To overcome this issue, we employ a technique that uses offset complexities instead, as in the proof of \pref{lem: non-offset-lemma} (see \pref{rem:offset} in the proof of \pref{thm: general-upper-bound} for a discussion). The formal proof of these arguments, together with the full proof of \pref{thm: general-upper-bound}, is deferred to \pref{sec: proof-upper-bound-4-app}.

\subsection{Missing Proofs in \pref{sec: proof-upper-bound-1}}\label{sec: proof-upper-bound-1-app}

\begin{proof}[Proof of \pref{lem: q-delta}]
	Given the formula of $\calR_n(\calQ, \bp, \by)$ in \pref{eq: r-n-formula}, we only need to verify
	\begin{equation}\label{eq: truncation-Q-1}
		\sup_{\bq\in \calQ}\sum_{t=1}^n \frac{1}{q_t(y_t\mid \by)}\ge \sup_{\bq\in \calQ^\delta}\sum_{t=1}^n \frac{1}{q_t(y_t\mid \by)} - 4n|\calY|\delta.
	\end{equation}
	Notice that according to our construction of truncation in \pref{eq: def-truncation}, we have for any $y\in \calY$ and $p\in \Delta(\calY)$, 
	\begin{equation}\label{eq: p-p-delta} 
		\log p(y) - \log p^\delta(y) = \log\frac{p(y)}{p^\delta(y)}\le - \log\left(1 - 2\delta\cdot |\calY|\right)\le 4\delta\cdot |\calY|,
	\end{equation}
	where the last inequality uses the fact that $\delta\le \frac{1}{4|\calY|}$ and $-\log(1 - t)\le 2t$ for any $t\le 1/2$. Hence after noticing that $\calQ^\delta = \{\bq^\delta:\bq\in \calQ\}$, we obtain \pref{eq: truncation-Q-1}.
\end{proof}
\begin{proof}[Proof of \pref{lem: p-delta}]
	For any $s\in [n+1]$, we use notation $\bp^{s, \delta} = (p_1^{s, \delta}, \ldots, p_n^{s, \delta})$ to denote a joint distribution such that  
	$$p_t^{s, \delta}(y_t\mid \by) = \begin{cases}
		p_t(y_t\mid \by) &\quad \text{if }t < s,\\
		p_t^{\delta}(y_t\mid \by) &\quad \text{if }t\ge s.
	\end{cases}\qquad \forall \by\in \calY^n.$$
	Then we have $\bp^{1, \delta} = \bp^\delta$ and $\bp^{n+1, \delta} = \bp$, and we can decompose
	\begin{align*}
		&\EE_{\by\sim \bp}\calR_n(\calQ^\delta, \bp, \by) - \EE_{\by\sim \bp^\delta}\calR_n(\calQ^\delta, \bp^\delta, \by) \\
        &= \sum_{s=1}^n\left[\EE_{\by\sim \bp^{s+1, \delta}}\calR_n(\calQ^\delta, \bp^{s+1, \delta}, \by) - \EE_{\by\sim \bp^{s, \delta}}\calR_n(\calQ^\delta, \bp^{s, \delta}, \by)\right]. \numberthis \label{eq: p-decomposition-0}
	\end{align*}
	We expand the right hand side of \pref{eq: p-decomposition-0} for each $s\in [n]$:
	\begin{align*}
		& \hspace{-0.5cm} \EE_{\by\sim \bp^{s+1, \delta}}\calR_n(\calQ^\delta, \bp^{s+1, \delta}, \by) - \EE_{\by\sim \bp^{s, \delta}}\calR_n(\calQ^\delta, \bp^{s, \delta}, \by)\\
		& = \left\{\EE_{y_t\sim p_t(\cdot\mid \by)}\right\}_{t=1}^{s-1}\big[\EE_{y_s\sim p_s(\cdot\mid \by)}\{\EE_{y_t\sim p_t^\delta(\cdot\mid \by)}\}_{t=s+1}^{n}\calR_n(\calQ^\delta, \bp^{s+1, \delta}, \by)\\
		&\qquad - \EE_{y_s\sim p_s^\delta(\cdot\mid \by)}\{\EE_{y_t\sim p_t^\delta(\cdot\mid \by)}\}_{t=s+1}^{n}\calR_n(\calQ^\delta, \bp^{s, \delta}, \by)\big]. \numberthis\label{eq: p-decomposition}
	\end{align*}
	Next, we fix $y_{1:s-1}$ and upper bound the expression inside the expectation: 
	\begin{align*}
		& \hspace{-0.5cm}\EE_{y_s\sim p_s(\cdot\mid \by)}\{\EE_{y_t\sim p_t^\delta(\cdot\mid \by)}\}_{t=s+1}^{n}\calR_n(\calQ^\delta, \bp^{s+1, \delta}, \by) - \EE_{y_s\sim p_s^\delta(\cdot\mid \by)}\{\EE_{y_t\sim p^\delta_t(\cdot\mid \by)}\}_{t=s+1}^{n}\calR_n(\calQ^\delta, \bp^{s, \delta}, \by)\\
		& = \EE_{y_s\sim p_s(\cdot\mid \by)}\{\EE_{y_t\sim p_t^\delta(\cdot\mid \by)}\}_{t=s+1}^{n}\left[\calR_n(\calQ^\delta, \bp^{s+1, \delta}, \by) - \calR_n(\calQ^\delta, \bp^{s, \delta}, \by)\right]\\
		&\ + \big[\EE_{y_s\sim p_s(\cdot\mid \by)}\{\EE_{y_t\sim p_t^\delta(\cdot\mid \by)}\}_{t=s+1}^{n}\calR_n(\calQ^\delta, \bp^{s, \delta}, \by) - \EE_{y_s\sim p_s^\delta(\cdot\mid \by)}\{\EE_{y_t\sim p_t^\delta(\cdot\mid \by)}\}_{t=s+1}^{n}\calR_n(\calQ^\delta, \bp^{s, \delta}, \by)\big] \numberthis \label{eq: p-decomposition-1}
	\end{align*}
	For the first term in the right hand side of \pref{eq: p-decomposition-1}, when fixing $\by\in \calY^n$, we have
	\begin{align*}
		& \hspace{-0.5cm} \calR_n(\calQ^\delta, \bp^{s+1, \delta}, \by) - \calR_n(\calQ^\delta, \bp^{s, \delta}, \by) = \sum_{t=1}^n \log\left(\frac{p_t^{s, \delta}(y_t\mid \by)}{p_t^{s+1, \delta}(y_t\mid \by)}\right) = \log\left(\frac{p_s^\delta(y_s\mid \by)}{p_s(y_s\mid \by)}\right),
	\end{align*}
	which implies that
	\begin{align*}
		&\hspace{-0.5cm} \EE_{y_s\sim p_s(\cdot\mid \by)}\{\EE_{y_t\sim p_t(\cdot\mid \by)}\}_{t=s+1}^{n}\left[\calR_n(\calQ^\delta, \bp^{s+1, \delta}, \by) - \calR_n(\calQ^\delta, \bp^{s, \delta}, \by)\right]\\
		& = \EE_{y_s\sim p_s(\cdot\mid \by)}\left[\log\left(\frac{p_s^\delta(y_s\mid \by)}{p_s(y_s\mid \by)}\right)\right] = - D_{\mathrm{KL}}(p_s(y_s\mid \by)\|p_s^\delta(y_s\mid \by))\le 0.
	\end{align*}

	For the second term in \pref{eq: p-decomposition-1}, when fixing $y_{1:s-1}$, we have
	\begin{align*}
		&\hspace{-0.5cm}\EE_{y_s\sim p_s(\cdot\mid \by)}\{\EE_{y_t\sim p_t^\delta(\cdot\mid \by)}\}_{t=s+1}^{n}\calR_n(\calQ^\delta, \bp^{s, \delta}, \by) - \EE_{y_s\sim p_s^\delta(\cdot\mid \by)}\{\EE_{y_t\sim p_t^\delta(\cdot\mid \by)}\}_{t=s+1}^{n}\calR_n(\calQ^\delta, \bp^{s, \delta}, \by)\\
		& \stackrel{(i)}{=} \EE_{y_s\sim p_s(\cdot\mid \by)}\{\EE_{y_t\sim p_t^\delta(\cdot\mid \by)}\}_{t=s+1}^{n}\left[\sup_{\bq\in\calQ^\delta}\left\{\sum_{t=1}^{s-1} \log\frac{q_t(y_t\mid \by)}{p_t(y_t\mid \by)} + \sum_{t=s}^{n} \log\frac{q_t(y_t\mid \by)}{p_t^\delta(y_t\mid \by)}\right\}\right]\\
		&\quad - \EE_{y_s\sim p_s^\delta(\cdot\mid \by)}\{\EE_{y_t\sim p_t^\delta(\cdot\mid \by)}\}_{t=s+1}^{n}\left[\sup_{\bq\in \calQ^\delta}\left\{\sum_{t=1}^{s-1} \log\frac{q_t(y_t\mid \by)}{p_t(y_t\mid \by)} + \sum_{t=s}^{n} \log\frac{q_t(y_t\mid \by)}{p_t^\delta(y_t\mid \by)}\right\}\right]\\
            & \stackrel{(ii)}{=} \EE_{y_s\sim p_s(\cdot\mid \by)}\{\EE_{y_t\sim p_t^\delta(\cdot\mid \by)}\}_{t=s+1}^{n}\left[\sup_{\bq\in\calQ^\delta}\sum_{t=1}^{n} \log\frac{q_t(y_t\mid \by)}{p_t^\delta(y_t\mid \by)}\right]\\
		&\quad - \EE_{y_s\sim p_s^\delta(\cdot\mid \by)}\{\EE_{y_t\sim p_t^\delta(\cdot\mid \by)}\}_{t=s+1}^{n}\left[\sup_{\bq\in \calQ^\delta}\sum_{t=1}^{n} \log\frac{q_t(y_t\mid \by)}{p_t^\delta(y_t\mid \by)}\right], \numberthis \label{eq: p-decomposition-2}
	\end{align*}
	where $(i)$ uses the formula of $\calR_n(\calQ, \bp, \by)$ in \pref{eq: r-n-formula} and the form of $\bp^{s, \delta}$, and $(ii)$ uses the fact that for $t\le s-1$, $p_t(y_t\mid \by)$ cancels out in both terms, hence we can replace them by $p_t^\delta(y_t\mid \by)$ at no additional cost. Notice that $\delta\le p_t^\delta(y_t\mid \by)\le 1$ and $\delta\le q_t^\delta(y_t\mid \by)\le 1$ hold for any $\bq\in \calQ^\delta$, $\by\in \calY^n$ and $t\in [n]$. Hence,
	$$\left|\sup_{\bq\in\calQ^\delta}\sum_{t=1}^{n} \log\frac{q_t(y_t\mid \by)}{p_t^\delta(y_t\mid \by)}\right|\le n\log\frac{1}{\delta},\quad\forall \by\in \calY^n$$
	which implies that when fixed $y_{1:s-1}$,
	$$\text{RHS of \pref{eq: p-decomposition-2}}\le 2\mathrm{TV}\left(p_s(\cdot\mid \by), p_s^\delta(\cdot\mid \by)\right)\cdot n\log\frac{1}{\delta}.$$
	Based on \pref{eq: def-truncation}, we can calculate that
	$$\mathrm{TV}\left(p_s(\cdot \mid \by), p_s^\delta(\cdot \mid \by)\right) = \sum_{y\in \calY} \left(\delta - p_s(y\mid \by)\right)\vee 0\le |\calY|\delta,$$
	which implies that
	$$\EE_{y_s\sim p_s(\cdot\mid \by)}\{\EE_{y_t\sim p_t^\delta(\cdot\mid \by)}\}_{t=s+1}^{n}\calR_n(\calQ^\delta, \bp^{s, \delta}, \by) - \EE_{y_s\sim p_s^\delta(\cdot\mid \by)}\{\EE_{y_t\sim p_t^\delta(\cdot\mid \by)}\}_{t=s+1}^{n}\calR_n(\calQ^\delta, \bp^{s, \delta}, \by)\le 2n|\calY|\delta\log\frac{1}{\delta.}$$

	Bringing this upper bound back to \pref{eq: p-decomposition-1} and then further back to \pref{eq: p-decomposition}, we obtain that
	$$\EE_{\by\sim \bp^{s+1, \delta}}\calR_n(\calQ^\delta, \bp^{s+1, \delta}, \by) - \EE_{\by\sim \bp^{s, \delta}}\calR_n(\calQ^\delta, \bp^{s, \delta}, \by)\le 2n|\calY|\delta\log\frac{1}{\delta}.$$
	Hence, according to \pref{eq: p-decomposition-0}, we have 
	$$\EE_{\by\sim \bp}\calR_n(\calQ^\delta, \bp, \by)\le \EE_{\by\sim \bp^\delta}\calR_n(\calQ^\delta, \bp^\delta, \by) + 2n^2|\calY|\delta\log\frac{1}{\delta}.$$
\end{proof}

\subsection{Missing Proofs in \pref{sec: proof-upper-bound-2}}\label{sec: proof-upper-bound-2-app}
\begin{proof}[Proof of \pref{prop: log-inequality}]
	We first verify the upper bounds part in \pref{eq: log-inequality}. When $0 < x\le 1$, using the inequality $\log(1 + t)\le t - t^2/2$ which holds for any $-1 < t\le 0$, we have for $n\ge 7$,
	$$\log x = 2\log \left(\sqrt{x}\right)\le 2\left(\sqrt{x} - 1\right) - \left(\sqrt{x} - 1\right)^2 = \zeta(x) - \frac{1}{4}\zeta(x)^{2}\le \zeta(x) - \frac{1}{2\log (n|\calY|)}\zeta(x)^{2}.$$
	For $x > 1$, we first notice that function
	$$\xi(x) = \frac{2\log((x+1)/2) - \log(x)}{\log^2((x+1)/2)}.$$
	is a monotonically decreasing function on $[0, \infty)$, and for every $n\ge 7$ we have
	$$\xi(n^2|\calY|) = \frac{2\log((n^2|\calY|+1)/2) - \log(n^2|\calY|)}{\log^2((n^2|\calY|+1)/2)}\ge \frac{1}{2\log(n^2|\calY|)}\ge \frac{1}{4\log (n|\calY|)},$$
	which implies
	$$\xi(x)\ge \xi(n^2|\calY|)\ge \frac{1}{4\log (n|\calY|)}, \quad \forall x\le n^2|\calY|.$$
	Hence we obtain for any $0 < x\le n^2|\calY|$, 
	\begin{align*}
		\log x & \le \zeta(x) - \frac{1}{4\log (n|\calY|)}\zeta(x)^2.
	\end{align*}
\end{proof}

\begin{proof}[Proof of \pref{prop: symmetrization-inverse}]
	First notice that for any $x\ge 1$ we have $\zeta(x)\ge 0$, and 
	$$\log\left(\frac{x + 1}{2}\right)\le \sqrt{x} - 1.$$
	Hence, we only need to verify
	$$\EE_{y\sim p}\left[- 2\cdot \left(\sqrt{\frac{f(y)}{p(y)}} - 1\right) - \frac{1}{4\log (n|\calY|)}\cdot \left(2\cdot \left(\sqrt{\frac{f(y)}{p(y)}} - 1\right)\right)^2\right]\ge 0.$$
	This can be verified by
	\begin{align*} 
		& \hspace{-0.5cm}\EE_{y\sim p}\left[- \sqrt{\frac{f(y)}{p(y)}} + 1 - \frac{1}{2\log (n|\calY|)}\cdot \left(\sqrt{\frac{f(y)}{p(y)}} - 1\right)^2\right] \\
		& \ge \EE_{y\sim p}\left[- \sqrt{\frac{f(y)}{p(y)}} + 1 - \frac{1}{2}\cdot \left(\sqrt{\frac{f(y)}{p(y)}} - 1\right)^2\right]\\
		& = \sum_{y\in \calY}\left[-\sqrt{f(y)p(y)} + p(y) - \frac{1}{2}f(y) + \sqrt{f(y)p(y)} - \frac{1}{2}p(y)\right]\\
		& = 0.
	\end{align*}
\end{proof}

\begin{proof}[Proof of \pref{prop: lipschitz-property}]
	We only need to verify that the function 
	$$h(t) = \begin{cases}
		2(t-1) &\quad \text{if }0 < t\le 1\\
		2\log\left(\frac{1+t^2}{2}\right) &\quad \text{if }t > 1
	\end{cases}$$
	is a Lipschitz function with Lipschitz constant $2$. This can be seen from
	$$\frac{d h(t)}{dt} = \begin{cases} 2 &\quad \text{if } 0 < t\le 1,\\
		\frac{4t}{1+t^2} &\quad \text{if }t > 1,\end{cases}$$
	which satisfies $\left|\frac{dh(t)}{dt}\right|\le 2$ for any $t > 0$.
\end{proof}

\begin{proof}[Proof of \pref{lem: symmetrization}]
	Fix distribution $\bp\in \Delta_n(\calY^n)$, and suppose random variables $(\bepsilon, \bw, \by, \bz)\sim \odot \bp$. Noticing that the marginal distribution of $\bw$ is $\bp$, we have
	\begin{align*}
		\EE_{\by\sim \bp}\calR_n(\calQ, \bp, \by) & = \EE_{\bw\sim \bp}\left[\sup_{\bq\in\calQ}\sum_{t=1}^{n}\left(\log q_t(w_{t}\mid \bw) - \log p_{t}(w_t\mid \bw)\right)\right]\\
		&\le \EE_{\bw\sim \bp}\left[\sup_{\bq\in\calQ}\sum_{t=1}^{n}\left\{\zeta\left(\frac{q_t(w_{t}\mid \bw)}{p_{t}(w_t\mid \bw)}\right) - \frac{1}{4\log (n|\calY|)}\zeta\left(\frac{q_t(w_{t}\mid \bw)}{p_{t}(w_t\mid \bw)}\right)^2\right\}\right], \numberthis \label{eq: first-step}
	\end{align*}
	where the last steps follows from \pref{prop: log-inequality}, and the fact that $\bp\in \Delta_n(\calY^n)$ and $\calQ\subseteq \Delta(\calY^n)$. Next, we define random variables $\bv = (v_{1:n})$ coupled with random variables $(\bepsilon, \bw, \by, \bz)\sim \odot \bp$, in the way that 
	$$v_t = z_t \text{ if }\epsilon_t = 1\quad\text{and}\quad v_t = y_t \text{ if }\epsilon_t = -1.$$
	Then the marginal distribution of $v_t$ conditioned on $w_{1:t-1}$ is $p_t(\cdot\mid \bw)$. Hence \pref{prop: symmetrization-inverse} gives that for any $\bq\in \calQ$ and $t\in [n]$, 
	\begin{equation}\label{eq: z-t}\EE\left[-\zeta\left(\frac{q_t(v_{t}\mid \bw)}{p_{t}(v_t\mid \bw)}\right) - \frac{1}{4\log (n|\calY|)}\zeta\left(\frac{q_t(v_{t}\mid \bw)}{p_{t}(v_t\mid \bw)}\right)^2\ \Big{|}\ w_{1:t-1}\right]\ge 0.
	\end{equation}
	Hence we can further upper bound
	\begin{align*}
		&\hspace{-0.2cm} \EE\left[\sup_{\bq\in\calQ}\sum_{t=1}^{n}\left\{\zeta\left(\frac{q_t(w_{t}\mid \bw)}{p_{t}(w_t\mid \bw)}\right) - \frac{1}{4\log (n|\calY|)}\zeta\left(\frac{q_t(w_{t}\mid \bw)}{p_{t}(w_t\mid \bw)}\right)^2\right\}\right]\\
		&\stackrel{(i)}{\le} \EE\left[\sup_{\bq\in\calQ}\sum_{t=1}^{n}\left\{\zeta\left(\frac{q_t(w_{t}\mid \bw)}{p_{t}(w_t\mid \bw)}\right) - \frac{1}{4\log (n|\calY|)}\zeta\left(\frac{q_t(w_{t}\mid \bw)}{p_{t}(w_t\mid \bw)}\right)^{2}\right\}\right]\\
		&\quad + \sum_{t=1}^n\EE\left[-\zeta\left(\frac{q_t(v_{t}\mid \bw)}{p_{t}(v_t\mid \bw)}\right) - \frac{1}{4\log (n|\calY|)}\zeta\left(\frac{q_t(v_{t}\mid \bw)}{p_{t}(v_t\mid \bw)}\right)^2\ \Bigg{|}\ w_{1:t-1}\right]\Bigg]\\
		&\stackrel{(ii)}{\le} \EE\Bigg[\sup_{\bq\in\calQ}\sum_{t=1}^{n}\Bigg\{\zeta\left(\frac{q_t(w_{t}\mid \bw)}{p_{t}(w_t\mid \bw)}\right) - \zeta\left(\frac{q_t(v_{t}\mid \bw)}{p_{t}(v_t\mid \bw)}\right) \\
		&\qquad- \frac{1}{4\log (n|\calY|)}\zeta\left(\frac{q_t(w_{t}\mid \bw)}{p_{t}(w_t\mid \bw)}\right)^{2} - \frac{1}{4\log (n|\calY|)}\zeta\left(\frac{q_t(v_{t}\mid \bw)}{p_{t}(v_t\mid \bw)}\right)^2\Bigg\}\Bigg], \numberthis \label{eq: second-step}
	\end{align*}
	where in $(i)$ we use \pref{eq: z-t}, in $(ii)$ we use the Jensen's inequality. According to the construction of random variables $\bepsilon, \by, \bz, \bw, \bv$, we have
	\begin{align*} 
		\zeta\left(\frac{q_t(w_{t}\mid \bw)}{p_{t}(w_t\mid \bw)}\right) - \zeta\left(\frac{q_t(v_{t}\mid \bw)}{p_{t}(v_t\mid \bw)}\right) = \epsilon_t \zeta\left(\frac{q_t(y_{t}\mid \bw)}{p_{t}(y_t\mid \bw)}\right) - \epsilon_t\zeta\left(\frac{q_t(z_{t}\mid \bw)}{p_{t}(z_t\mid \bw)}\right)
	\end{align*}
	and 
	\begin{align*} 
		\zeta\left(\frac{q_t(w_{t}\mid \bw)}{p_{t}(w_t\mid \bw)}\right)^2 + \zeta\left(\frac{q_t(v_{t}\mid \bw)}{p_{t}(v_t\mid \bw)}\right)^2 = \zeta\left(\frac{q_t(y_{t}\mid \bw)}{p_{t}(y_t\mid \bw)}\right)^2 + \zeta\left(\frac{q_t(z_{t}\mid \bw)}{p_{t}(z_t\mid \bw)}\right)^2.
	\end{align*}
	Bringing this back to \pref{eq: second-step} and further back to \pref{eq: first-step}, we obtain that 
	\begin{align*}
		&\hspace{-0.5cm} \EE_{\by\sim \bp}\calR_n(\calQ, \bp, \by)\\
		&\le \EE\Bigg[\sup_{\bq\in\calQ}\sum_{t=1}^{n}\Bigg\{\epsilon_t\zeta\left(\frac{q_t(y_{t}\mid \bw)}{p_{t}(y_t\mid \bw)}\right) - \epsilon_t\zeta\left(\frac{q_t(z_{t}\mid \bw)}{p_{t}(z_t\mid \bw)}\right) \\
		&\qquad- \frac{1}{4\log (n|\calY|)}\zeta\left(\frac{q_t(y_{t}\mid \bw)}{p_{t}(y_t\mid \bw)}\right)^{2} - \frac{1}{4\log (n|\calY|)}\zeta\left(\frac{q_t(z_{t}\mid \bw)}{p_{t}(z_t\mid \bw)}\right)^2\Bigg\}\Bigg]\\
		&\le \EE\left[\sup_{\bq\in\calQ}\sum_{t=1}^{n}\left\{\epsilon_t\zeta\left(\frac{q_t(y_{t}\mid \bw)}{p_{t}(y_t\mid \bw)}\right) - \frac{1}{4\log (n|\calY|)}\zeta\left(\frac{q_t(y_{t}\mid \bw)}{p_{t}(y_t\mid \bw)}\right)^{2}\right\}\right]\\
		&\quad + \EE\left[\sup_{\bq\in\calQ}\sum_{t=1}^{n}\left\{(- \epsilon_{t})\zeta\left(\frac{q_t(z_{t}\mid \bw)}{p_{t}(z_t\mid \bw)}\right) - \frac{1}{4\log (n|\calY|)}\zeta\left(\frac{q_t(z_{t}\mid \bw)}{p_{t}(z_t\mid \bw)}\right)^{2}\right\}\right],
	\end{align*}
	where the last inequality uses Jensen's inequality.
\end{proof}

\subsection{Missing Proofs in \pref{sec: proof-upper-bound-3}}\label{sec: proof-upper-bound-3-app}
\begin{proof}[Proof of \pref{lem: another-cover}]
	We fix $\bp\in \Delta(\calY^n)$. For $\bq\in\calQ$, $\bv'\in \calV(\alpha)$ and $\bw, \by\in\calY^n$, if we have
    $$\sum_{t=1}^n\zeta\left(\frac{q_t(y_{t}\mid \bw)}{p_t(y_t\mid \bw)}\right)^{2}\ge \frac{1}{4}\sum_{t=1}^{n}\zeta\left(\frac{v_t'(y_t\mid \bw)}{p_t(y_t\mid \bw)}\right)^{2}$$
	then we let $\bv = \bv'$ and it is easy to see that \pref{eq: condition-one} and \pref{eq: condition-two} both hold. Next we assume
	\begin{equation}\label{eq: p-q-v}
		\sum_{t=1}^n \zeta\left(\frac{q_t(y_{t}\mid \bw)}{p_t(y_t\mid \bw)}\right)^{2} < \frac{1}{4}\sum_{t=1}^n \zeta\left(\frac{v_t'(y_t\mid \bw)}{p_t(y_t\mid \bw)}\right)^{2}.
	\end{equation}
	With $\bv = \bp\in \calV(\alpha)\cup\{\bp\}$ we will verify \pref{eq: condition-one} and \pref{eq: condition-two}. First, since $\zeta(1) = 0$, we have 
	$$\sum_{t=1}^n \zeta\left(\frac{q_t(y_{t}\mid \bw)}{p_t(y_t\mid \bw)}\right)^{2}\ge 0 = \frac{1}{4}\sum_{t=1}^n \zeta\left(\frac{v_t(y_t\mid \bw)}{p_t(y_t\mid \bw)}\right)^{2},$$
	hence \pref{eq: condition-two} holds. Next, according to \pref{eq: p-q-v} and Cauchy-Schwarz inequality, 
	$$\left(\sum_{t=1}^n \zeta\left(\frac{q_t(y_{t}\mid \bw)}{p_t(y_t\mid \bw)}\right)^{2}\right)\left(\sum_{t=1}^n \zeta\left(\frac{v_t'(y_{t}\mid \bw)}{p_t(y_t\mid \bw)}\right)^{2}\right)\ge \left(\sum_{t=1}^n \zeta\left(\frac{q_t(y_{t}\mid \bw)}{p_t(y_t\mid \bw)}\right)\zeta\left(\frac{v_t'(y_{t}\mid \bw)}{p_t(y_t\mid \bw)}\right)\right)^2,$$
	we have
	$$\sum_{t=1}^n \zeta\left(\frac{v_t'(y_{t}\mid \bw)}{p_t(y_t\mid \bw)}\right)^2 \ge 2\sum_{t=1}^n\zeta\left(\frac{q_t(y_{t}\mid \bw)}{p_t(y_t\mid \bw)}\right)\zeta\left(\frac{v_t'(y_{t}\mid \bw)}{p_t(y_t\mid \bw)}\right),$$
	which implies that
	\begin{align*}
		\sum_{t=1}^n\left(\zeta\left(\frac{q_t(y_t\mid \bw)}{p_t(y_t\mid \bw)}\right) - \zeta\left(\frac{v_t'(y_t\mid \bw)}{p_t(y_t\mid \bw)}\right)\right)^2 & \ge \sum_{t=1}^n\zeta\left(\frac{q_t(y_t\mid \bw)}{p_t(y_t\mid \bw)}\right)^2\\
		& = \sum_{t=1}^n\left(\zeta\left(\frac{q_t(y_t\mid \bw)}{p_t(y_t\mid \bw)}\right) - \zeta\left(\frac{v_t(y_t\mid \bw)}{p_t(y_t\mid \bw)}\right)\right)^2,
	\end{align*}
	hence \pref{eq: condition-one} holds. 
\end{proof}

\begin{proof}[Proof of \pref{lem: chaining}]
    According to our choice of $\bv[\bq, \bp, \bw, \by, \alpha_N]\in \calV(\alpha_N)\cup\{\bp\}$, \pref{eq: condition-two} holds with $\bv = \bv[\bq, \bp, \bw, \by, \alpha_N]$. Hence, we can upper bound the left hand side of \pref{eq:three_term_decomp} as follows:
    \begin{align*}
		& \hspace{-0.5cm} \sup_{\bq\in\calQ}\sum_{t=1}^{n}\left\{\epsilon_{t}\zeta\left(\frac{q_t(y_{t}\mid \bw)}{p_t(y_t\mid \bw)}\right) - \frac{1}{4\log (n|\calY|)}\zeta\left(\frac{q_t(y_{t}\mid \bw)}{p_t(y_t\mid \bw)}\right)^{2}\right\}\\
		& = \sup_{\bq\in \calQ}\Bigg\{\sum_{t=1}^n\epsilon_{t}\left\{\zeta\left(\frac{q_t(y_{t}\mid \bw)}{p_t(y_t\mid \bw)}\right) - \zeta\left(\frac{v_{t}[\bq, \bp, \bw, \by, \alpha_N](y_{t}\mid \bw)}{p_t(y_t\mid \bw)}\right)\right\}\\
		&\qquad + \sum_{t=1}^n\left\{\epsilon_t \zeta\left(\frac{v_{t}[\bq, \bp, \bw, \by, \alpha_N](y_{t}\mid \bw)}{p_t(y_t\mid \bw)}\right) - \frac{1}{16\log (n|\calY|)}\zeta\left(\frac{v_{t}[\bq, \bp, \bw, \by, \alpha_N](y_{t}\mid \bw)}{p_t(y_t\mid \bw)}\right)^2\right\}\\
		&\qquad + \frac{1}{16\log (n|\calY|)}\sum_{t=1}^n \zeta\left(\frac{v_{t}[\bq, \bp, \bw, \by, \alpha_N](y_{t}\mid \bw)}{p_t(y_t\mid \bw)}\right)^2 - \frac{1}{4\log (n|\calY|)}\sum_{t=1}^n \zeta\left(\frac{q_t(y_t\mid \bw)}{p_t(y_t\mid \bw)}\right)^2\Bigg\}\\
		& \stackrel{(i)}{\le} \sup_{\bq\in \calQ}\Bigg\{\sum_{t=1}^n\epsilon_{t}\left\{\zeta\left(\frac{q_t(y_{t}\mid \bw)}{p_t(y_t\mid \bw)}\right) - \zeta\left(\frac{v_{t}[\bq, \bp, \bw, \by, \alpha_N](y_{t}\mid \bw)}{p_t(y_t\mid \bw)}\right)\right\}\\
		&\qquad + \sum_{t=1}^n\left\{\epsilon_t \zeta\left(\frac{v_{t}[\bq, \bp, \bw, \by, \alpha_N](y_{t}\mid \bw)}{p_t(y_t\mid \bw)}\right) - \frac{1}{16\log (n|\calY|)}\zeta\left(\frac{v_{t}[\bq, \bp, \bw, \by, \alpha_N](y_{t}\mid \bw)}{p_t(y_t\mid \bw)}\right)^2\right\}\Bigg\}\\
		& \stackrel{(ii)}{\le} \sup_{\bq\in \calQ}\left\{\sum_{t=1}^n\epsilon_{t}\left\{\zeta\left(\frac{q_t(y_{t}\mid \bw)}{p_t(y_t\mid \bw)}\right) - \zeta\left(\frac{v_{t}[\bq, \bp, \bw, \by, \alpha_N](y_{t}\mid \bw)}{p_t(y_t\mid \bw)}\right)\right\}\right\}\\
		&\qquad + \sup_{\bv\in \calV(\alpha_N)\cup\{\bp\}}\left\{\sum_{t=1}^n\left\{\epsilon_t \zeta\left(\frac{v_{t}(y_{t}\mid \bw)}{p_t(y_t\mid \bw)}\right) - \frac{1}{16\log (n|\calY|)}\zeta\left(\frac{v_{t}(y_{t}\mid \bw)}{p_t(y_t\mid \bw)}\right)^2\right\}\right\}, \numberthis\label{eq: chaining-proof-eq}
	\end{align*}
	where $(i)$ uses the condition \pref{eq: condition-two}, and $(ii)$ uses the Jensen's inequality and the chioce $\bv[\bq, \bp, \bw, \by, \alpha_N]\in \calV(\alpha_N)\cup\{\bp\}$. Next, we introduce $\bv[\bq, \bp, \bw, \by, \alpha_i]$, and further upper bound the first term above via telescoping:
    \begin{align*}
        & \hspace{-0.5cm} \EE\left[\sup_{\bq\in \calQ}\sum_{t=1}^n\epsilon_{t}\left\{\zeta\left(\frac{q_t(y_{t}\mid \bw)}{p_t(y_t\mid \bw)}\right) - \zeta\left(\frac{v_{t}[\bq, \bp, \bw, \by, \alpha_N](y_{t}\mid \bw)}{p_t(y_t\mid \bw)}\right)\right\}\right]\\
        & = \EE\Bigg[\sup_{\bq\in \calQ}\Bigg\{\sum_{t=1}^n\epsilon_{t}\left\{\zeta\left(\frac{q_t(y_{t}\mid \bw)}{p_t(y_t\mid \bw)}\right) - \zeta\left(\frac{v_{t}[\bq, \bp, \bw, \by, \alpha_1](y_{t}\mid \bw)}{p_t(y_t\mid \bw)}\right)\right\} \notag\\
	&\qquad + \sum_{i=1}^{N-1} \sum_{t=1}^n\epsilon_{t}\left\{\zeta\left(\frac{v_{t}[\bq, \bp, \bw, \by, \alpha_i](y_{t}\mid \bw)}{p_t(y_t\mid \bw)}\right) - \zeta\left(\frac{v_{t}[\bq, \bp, \bw, \by, \alpha_{i+1}](y_{t}\mid \bw)}{p_t(y_t\mid \bw)}\right)\right\}\Bigg\}\Bigg]\\
        & \le \EE\left[\sup_{\bq\in \calQ}\sum_{t=1}^n\epsilon_{t}\left\{\zeta\left(\frac{q_t(y_{t}\mid \bw)}{p_t(y_t\mid \bw)}\right) - \zeta\left(\frac{v_{t}[\bq, \bp, \bw, \by, \alpha_1](y_{t}\mid \bw)}{p_t(y_t\mid \bw)}\right)\right\}\right] \notag\\
	&\qquad + \sum_{i=1}^{N-1} \EE\left[\sup_{\bq\in \calQ}\sum_{t=1}^n\epsilon_{t}\left\{\zeta\left(\frac{v_{t}[\bq, \bp, \bw, \by, \alpha_i](y_{t}\mid \bw)}{p_t(y_t\mid \bw)}\right) - \zeta\left(\frac{v_{t}[\bq, \bp, \bw, \by, \alpha_{i+1}](y_{t}\mid \bw)}{p_t(y_t\mid \bw)}\right)\right\}\right],
    \end{align*}
    where the last inequality is due to Jensen's inequality. Bringing this back to \pref{eq: chaining-proof-eq}, we obtain \pref{eq:three_term_decomp}.
\end{proof}

\subsection{Proof of \pref{thm: general-upper-bound}}\label{sec: proof-upper-bound-4-app}
\begin{proof}[Proof of \pref{thm: general-upper-bound}]
    First of all, according to \pref{lem: dual-form}, for any joint distribution class $\calQ\subseteq \Delta(\calY^n)$, we have 
    $$\calR_n(\calQ) = \sup_{\bp} \EE_{\by\sim \bp}\calR_n(\calQ, \bp, \by),$$
    where the supreme is taken over all $\bp\in \Delta(\calY^n)$. According to \pref{lem: q-delta} and \pref{lem: p-delta}, we have
    $$\EE_{\by\sim \bp}\left[\calR_n(\calQ, \bp, \by)\right]\le \EE_{\by\sim \bp^\delta}\left[\calR_n(\calQ^\delta, \bp^\delta, \by)\right] + 6n^2|\calY|\delta\log\frac{1}{\delta}.$$
    Choosing $\delta = 1/(n^2|\calY|)$ for $n\ge 2$, we conclude that
    $$\EE\left[\calR_n(\calQ, \bp, \by)\right]\le \EE\left[\calR_n(\calQ^{\delta}, \bp^{\delta}, \by)\right] + 12\log (n|\calY|).$$
    Hence in order to prove \pref{thm: general-upper-bound}, we only need to prove that
    \begin{equation}\label{eq: objective-after-truncation}
    	\EE_{\by\sim \bp}\left[\calR_n(\calQ, \bp, \by)\right] = \tilde{\mathcal{O}}\left(\inf_{\gamma > \delta > 0}\left\{n\delta\sqrt{|\calY|} + \sqrt{n|\calY|}\int_{\delta}^\gamma\sqrt{\calHsq(\calQ, \alpha, n)}d\alpha + \calHsq(\calQ, \gamma, n)\right\}\right)
    \end{equation}
    holds for any $\bp\in \Delta_n(\calY^n)$ and $\calQ\subseteq \Delta_n(\calY^n)$, where $\Delta_n(\calY^n)$ is defined in \pref{eq: def-delta-n}. To prove this, we first notice that according to \pref{lem: symmetrization}, for $\bp\in \Delta_n(\calY^n)$ and $\calQ\subseteq \Delta_n(\calY^n)$, we have
    \begin{align*}
		&\hspace{-0.5cm} \EE_{\by\sim \bp}\calR_n(\calQ, \bp, \by)\\
		& \le \EE\left[\sup_{\bq\in\calQ}\sum_{t=1}^{n}\epsilon_{t}\zeta\left(\frac{q_t(y_{t}\mid \bw)}{p_t(y_t\mid \bw)}\right) - \frac{1}{4\log (n|\calY|)}\zeta\left(\frac{q_t(y_{t}\mid \bw)}{p_t(y_t\mid \bw)}\right)^{2}\right]\\
		&\quad + \EE\left[\sup_{\bq\in\calQ}\sum_{t=1}^{n} (- \epsilon_{t})\zeta\left(\frac{q_t(z_t\mid \bw)}{p_t(z_t\mid \bw)}\right) - \frac{1}{4\log (n|\calY|)}\zeta\left(\frac{q_t(z_t\mid \bw)}{p_t(z_t\mid \bw)}\right)^{2}\right],
    \end{align*}
    In the following, we will upper bound the right hand side in the above formula. For convenience, we only provide upper bounds to the first term in the right hand side. The upper bound to the second term in the right hand side can be obtained similarly.
    
    Next, we choose $N$ positive real numbers $\alpha_1 < \cdots < \alpha_N$ (values to be specified later). We let $\calV(\alpha_i)$ be a smallest sequential square-root cover (as per \pref{def: sequential-covering}) at scale $\alpha_i$. For any $\bq\in\calQ$, $\bw, \by\in \calY^n$ and $t\in [n]$, we let 
	\begin{equation}\label{eq: def-v-prime}
		\bv'[\bq, \bp, \bw, \by, \alpha_N] = \argmin_{\bv\in \calV(\alpha_N)}\left\{\sum_{t=1}^n\left(\zeta\left(\frac{q_t(y_t\mid \bw)}{p_t(y_t\mid \bw)}\right) - \zeta\left(\frac{v_t(y_t\mid \bw)}{p_t(y_t\mid \bw)}\right)\right)^2\right\}.
	\end{equation}
	According to \pref{lem: another-cover}, there exists some $\bv[\bq, \bp, \bw, \by, \alpha_N]\in \calV(\alpha_N)\cup\{\bp\}$ such that 
    \begin{align*}
        &\hspace{-0.5cm} \sum_{t=1}^n\left(\zeta\left(\frac{q_t(y_t\mid \bw)}{p_t(y_t\mid \bw)}\right) - \zeta\left(\frac{v_t[\bq, \bp, \bw, \by, \alpha_N](y_t\mid \bw)}{p_t(y_t\mid \bw)}\right)\right)^2\\
        & \le \sum_{t=1}^n\left(\zeta\left(\frac{q_t(y_t\mid \bw)}{p_t(y_t\mid \bw)}\right) - \zeta\left(\frac{v_t'[\bq, \bp, \bw, \by, \alpha_N](y_t\mid \bw)}{p_t(y_t\mid \bw)}\right)\right)^2, \numberthis \label{eq: condition-one-v}
    \end{align*}
    and 
    \begin{equation}
        \sum_{t=1}^n \zeta\left(\frac{q_t(y_{t}\mid \bw)}{p_t(y_t\mid \bw)}\right)^{2}\ge \frac{1}{4}\sum_{t=1}^n \zeta\left(\frac{v_t[\bq, \bp, \bw, \by, \alpha_N](y_t\mid \bw)}{p_t(y_t\mid \bw)}\right)^{2}. \numberthis \label{eq: condition-two-v}
    \end{equation}
    both hold. For $1\le i\le N-1$, we use $\calV(\alpha_i)$ to denote the sequential cover of $\calQ$ at scale $\alpha_i$. For every $\bq\in \calQ$ and $\bw, \by\in \calY^n$, we let
	\begin{equation}\label{eq: v-f-w}
		\bv[\bq, \bp, \bw, \by, \alpha_i] = \argmin_{\bv\in \calV(\alpha_i)}\left\{\sum_{t=1}^n\left(\zeta\left(\frac{q_t(y_t \mid \bw)}{p_t(y_t\mid \bw)}\right) - \zeta\left(\frac{v_t(y_t \mid \bw)}{p_t(y_t\mid \bw)}\right)\right)^2\right\}.
	\end{equation}
    Then according to \pref{lem: chaining}, we have
    \begin{align*}
		& \hspace{-0.5cm} \EE\left[\sup_{\bq\in\calQ}\sum_{t=1}^{n}\epsilon_{t}\left\{\zeta\left(\frac{q_t(y_{t}\mid \bw)}{p_t(y_t\mid \bw)}\right) - \frac{1}{4\log (n|\calY|)}\zeta\left(\frac{q_t(y_{t}\mid \bw)}{p_t(y_t\mid \bw)}\right)^{2}\right\}\right]\\
		& \le \EE\left[\sup_{\bq\in \calQ}\left\{\sum_{t=1}^n\epsilon_{t}\left\{\zeta\left(\frac{q_t(y_{t}\mid \bw)}{p_t(y_t\mid \bw)}\right) - \zeta\left(\frac{v_{t}[\bq, \bp, \bw, \by, \alpha_1](y_{t}\mid \bw)}{p_t(y_t\mid \bw)}\right)\right\}\right\}\right]\\
		& \qquad + \EE\left[\sum_{i=1}^{N-1}\sup_{\bq\in \calQ}\left\{\sum_{t=1}^n\epsilon_{t}\left\{\zeta\left(\frac{v_{t}[\bq, \bp, \bw, \by, \alpha_i](y_{t}\mid \bw)}{p_t(y_t\mid \bw)}\right) - \zeta\left(\frac{v_{t}[\bq, \bp, \bw, \by, \alpha_{i+1}](y_{t}\mid \bw)}{p_t(y_t\mid \bw)}\right)\right\}\right\}\right]\\
		&\qquad + \EE\left[\sup_{\bv\in \calV(\alpha_N)\cup\{\bp\}}\left\{\sum_{t=1}^n\left\{\epsilon_t \zeta\left(\frac{v_{t}(y_{t}\mid \bw)}{p_t(y_t\mid \bw)}\right) - \frac{1}{16\log (n|\calY|)}\zeta\left(\frac{v_{t}(y_{t}\mid \bw)}{p_t(y_t\mid \bw)}\right)^2\right\}\right\}\right]. \numberthis\label{eq: decomposition}
    \end{align*}
    In the following, we upper bound the three terms in \pref{eq: decomposition} respectively. 
    
    We let 
	\begin{equation}\label{eq: def-v-bar}
		\bar{\bv}[\bq, \bp, \bw, \alpha_N] = \argmin_{\bv\in \calV(\alpha_N)}\left\{\max_{t\in [n]}\max_{y\in \calY} \left|\sqrt{q_t(y \mid \bw)} - \sqrt{v_t(y \mid \bw)}\right|\right\}.
	\end{equation}
	Then for $(\bepsilon, \bw, \by, \bz)\sim \odot \bp$, we have 
	\begin{align*}
		& \hspace{-0.5cm} \EE\left[\sup_{\bq\in \calQ}\sum_{t=1}^n\left(\zeta\left(\frac{q_t(y_t\mid \bw)}{p_t(y_t\mid \bw)}\right) - \zeta\left(\frac{v_t[\bq, \bp, \bw, \by, \alpha_N](y_t\mid \bw)}{p_t(y_t\mid \bw)}\right)\right)^2\right]\\
		& \stackrel{(i)}{\le} \EE\left[\sup_{\bq\in \calQ}\sum_{t=1}^n\left(\zeta\left(\frac{q_t(y_t\mid \bw)}{p_t(y_t\mid \bw)}\right) - \zeta\left(\frac{v_t'[\bq, \bp, \bw, \by, \alpha_N](y_t\mid \bw)}{p_t(y_t\mid \bw)}\right)\right)^2\right]\\
		& \stackrel{(ii)}{\le} \EE\left[\sup_{\bq\in \calQ}\sum_{t=1}^n\left(\zeta\left(\frac{q_t(y_t\mid \bw)}{p_t(y_t\mid \bw)}\right) - \zeta\left(\frac{\bar{v}_t[\bq, \bp, \bw, \alpha_N](y_t\mid \bw)}{p_t(y_t\mid \bw)}\right)\right)^2\right]\\
		& \stackrel{(iii)}{\le} 4\EE\left[\sup_{\bq\in \calQ}\sum_{t=1}^n\left(\sqrt{\frac{q_t(y_t\mid \bw)}{p_t(y_t\mid \bw)}} - \sqrt{\frac{\bar{v}_t[\bq, \bp, \bw, \alpha_N](y_t\mid \bw)}{p_t(y_t\mid \bw)}}\right)^2\right]\\
		& \stackrel{(iv)}{\le} 4\EE\left[\sup_{\bq\in \calQ}\sum_{t=1}^n \frac{\alpha_N^2}{p_t(y_t\mid \bw)}\right] = 4\alpha_N^2\cdot \EE\left[\sum_{t=1}^n \frac{1}{p_t(y_t\mid \bw)}\right]\\
		& \stackrel{(v)}{\le} 4n\alpha_N^2 |\calY|, \numberthis \label{eq: property-v-n}
	\end{align*}
	where $(i)$ uses \pref{eq: condition-one-v}, $(ii)$ uses the definition of $\bv'[\bq, \bp, \bw, \by, \alpha_N]$ in \pref{eq: def-v-prime}, $(iii)$ uses the Lipschitz property of $\zeta$ function in \pref{prop: lipschitz-property}, $(iv)$ uses the definition of $\bar{\bv}$ in \pref{eq: def-v-bar} and \pref{def: sequential-covering}: for fixed $\bp, \bq$ and $\bw$, for any $t\in [n]$ and $y_t\in \calY$,
	\begin{align*}
		&\hspace{-0.5cm} \left|\sqrt{q_t(y_t\mid \bw)} - \sqrt{\bar{v}_t[\bq, \bp, \bw, \alpha_N](y_t\mid \bw)}\right|\\
		& = \min_{\bv\in \calV(\alpha_N)}\max_{t\in [n]}\max_{y\in \calY} \left|\sqrt{q_t(y_t\mid \bw)} - \sqrt{v_t(y_t\mid \bw)}\right|\\
		& \le \sup_{\bq\in \calQ} \max_{\bw\in \calY^n} \min_{\bv\in \calV(\alpha_N)}\max_{t\in [n]}\max_{y\in \calY} \left|\sqrt{q_t(y_t\mid \bw)} - \sqrt{v_t(y_t\mid \bw)}\right|\le \alpha_N,
	\end{align*}
	and finally $(v)$ uses the fact that for any $t\in [n]$, conditionally on $w_{1:t-1}$, the  distribution of $y_t$ is $p_t(\cdot\mid \bw)$, hence
	$$\EE\left[\frac{1}{p_t(y_t\mid \bw)}\right] = \EE\left[\sum_{y\in \calY} p_t(y\mid \bw)\cdot \frac{1}{p_t(y\mid \bw)}\right] = |\calY|.$$
    Then similar to \pref{eq: property-v-n} (the definition of $\bv[\bq, \bp, \bw, \by, \alpha_i]$ for $1\le i\le N-1$ is similar to the definition of $\bv'[\bq, \bp, \bw, \by, \alpha_N]$, hence the following inequality can be obtained by starting from the second line of \pref{eq: property-v-n}), we can show that with $(\bepsilon, \bw, \by, \bz)\sim \odot\bp$, for any $i\in [N-1]$,
	\begin{align*}
		\EE\left[\sup_{\bq\in \calQ}\sum_{t=1}^n\left(\zeta\left(\frac{q_t(y_t\mid \bw)}{p_t(y_t\mid \bw)}\right) - \zeta\left(\frac{v_t[\bq, \bp, \bw, \by, \alpha_i](y_t\mid \bw)}{p_t(y_t\mid \bw)}\right)\right)^2\right]\le 4n\alpha_i^2|\calY|. \numberthis\label{eq: property-v-i}
	\end{align*}

We are now ready to provide upper bounds for the three terms in \pref{eq: decomposition}. For the first term in \pref{eq: decomposition}, we have
\begin{align*}
	&\hspace{-0.5cm} \EE\left[\sup_{\bq\in \calQ}\left\{\sum_{t=1}^n\epsilon_{t}\left\{\zeta\left(\frac{q_t(y_{t}\mid \bw)}{p_t(y_t\mid \bw)}\right) - \zeta\left(\frac{v_{t}[\bq, \bp, \bw, \by, \alpha_1](y_{t}\mid \bw)}{p_t(y_t\mid \bw)}\right)\right\}\right\}\right]\\
	& \stackrel{(i)}{\le} \sqrt{n}\cdot\EE\left[\sup_{\bq\in \calQ}\left\{\sqrt{\sum_{t=1}^n \left(\zeta\left(\frac{q_t(y_{t}\mid \bw)}{p_t(y_t\mid \bw)}\right) - \zeta\left(\frac{v_{t}[\bq, \bp, \bw, \by, \alpha_1](y_{t}\mid \bw)}{p_t(y_t\mid \bw)}\right)\right)^2}\right\}\right]\\
	& \stackrel{(ii)}{\le} \sqrt{n}\cdot\sqrt{\EE\left[\sup_{\bq\in \calQ}\sum_{t=1}^n \left(\zeta\left(\frac{q_t(y_{t}\mid \bw)}{p_t(y_t\mid \bw)}\right) - \zeta\left(\frac{v_{t}[\bq, \bp, \bw, \by, \alpha_1](y_{t}\mid \bw)}{p_t(y_t\mid \bw)}\right)\right)^2\right]}\\
	& \stackrel{(iii)}{\le} 2n\alpha_1\sqrt{|\calY|}, \numberthis \label{eq: decomposition-1}
\end{align*}
where $(i)$ uses Cauchy-Schwarz inequality, $(ii)$ uses Jensen's inequality and $(iii)$ uses \pref{eq: property-v-i}.

We next analyze the second term in \pref{eq: decomposition}. Notice that for any $i\in [N-1]$ and $\lambda > 0$, we can decompose the second term in \pref{eq: decomposition} as 
\begin{align*}
	& \hspace{-0.5cm} \EE\left[\sup_{\bq\in \calQ}\left\{\sum_{t=1}^n\epsilon_{t}\left\{\zeta\left(\frac{v_{t}[\bq, \bp, \bw, \by, \alpha_i](y_{t}\mid \bw)}{p_t(y_t\mid \bw)}\right) - \zeta\left(\frac{v_{t}[\bq, \bp, \bw, \by, \alpha_{i+1}](y_{t}\mid \bw)}{p_t(y_t\mid \bw)}\right)\right\}\right\}\right]\\
	& \le \EE\Bigg[\sup_{\bq\in \calQ}\Bigg\{\sum_{t=1}^n\epsilon_{t}\left\{\zeta\left(\frac{v_{t}[\bq, \bp, \bw, \by, \alpha_i](y_{t}\mid \bw)}{p_t(y_t\mid \bw)}\right) - \zeta\left(\frac{v_{t}[\bq, \bp, \bw, \by, \alpha_{i+1}](y_{t}\mid \bw)}{p_t(y_t\mid \bw)}\right)\right\}\\
	&\qquad - \lambda\cdot \left\{\zeta\left(\frac{v_{t}[\bq, \bp, \bw, \by, \alpha_i](y_{t}\mid \bw)}{p_t(y_t\mid \bw)}\right) - \zeta\left(\frac{v_{t}[\bq, \bp, \bw, \by, \alpha_{i+1}](y_{t}\mid \bw)}{p_t(y_t\mid \bw)}\right)\right\}^2\Bigg\}\Bigg]\\
	&\quad + \lambda\cdot \EE\left[\sup_{\bq\in \calQ}\sum_{t=1}^n\left\{\zeta\left(\frac{v_{t}[\bq, \bp, \bw, \by, \alpha_i](y_{t}\mid \bw)}{p_t(y_t\mid \bw)}\right) - \zeta\left(\frac{v_{t}[\bq, \bp, \bw, \by, \alpha_{i+1}](y_{t}\mid \bw)}{p_t(y_t\mid \bw)}\right)\right\}^2\right] \numberthis \label{eq: decomposition-lambda}
\end{align*}
For fixed $\bp$ and $i\in [N-1]$, we define set $\calU_i$ as 
\begin{align*}
	\calU_i \coloneqq \{\bu: \bu = \bu[\bv^i, \bv^{i+1}]\text{ for some }\bv^i\in \calV(\alpha_i)\text{ and }\bv^{i+1}\in \calV(\alpha_{i+1})\cup\{\bp\}\},
\end{align*}
where for $\bv^i\in \calV(\alpha_i)$ and $\bv^{i+1}\in \calV(\alpha_{i+1})\cup\{\bp\}$, the element $\bu[\bv^i, \bv^{i+1}]$ is defined as $(u_{1:n})$ with $u_t = u_t(\cdot\mid \cdot): \calY\times\calY^{t-1}\to \RR$ defined as
$$u_t(y_t\mid \bw) = \zeta\left(\frac{v_{t}^i(y_{t}\mid \bw)}{p_t(y_t\mid \bw)}\right) - \zeta\left(\frac{v_{t}^{i+1}(y_{t}\mid \bw)}{p_t(y_t\mid \bw)}\right),\qquad \forall\ \bw, \by\in \calY^n \text{ and }t\in [n].$$
Then we have $|\calU| = |\calV(\alpha_i)|\cdot (|\calV(\alpha_{i+1})| + 1)$, and we can further upper bound \pref{eq: decomposition-lambda} by 
\begin{align*}
	& \hspace{-0.5cm} \EE\left[\sup_{\bq\in \calQ}\left\{\sum_{t=1}^n\epsilon_{t}\left\{\zeta\left(\frac{v_{t}[\bq, \bp, \bw, \by, \alpha_i](y_{t}\mid \bw)}{p_t(y_t\mid \bw)}\right) - \zeta\left(\frac{v_{t}[\bq, \bp, \bw, \by, \alpha_{i+1}](y_{t}\mid \bw)}{p_t(y_t\mid \bw)}\right)\right\}\right\}\right]\\
	& \le \EE\left[\sup_{\bu\in \calU}\sum_{t=1}^n\big\{\epsilon_{t}u_t(y_t\mid \bw) - \lambda\cdot u_t(y_t\mid \bw)^2\big\}\right]\\
	&\quad + \lambda\cdot \EE\left[\sup_{\bq\in \calQ}\sum_{t=1}^n\left\{\zeta\left(\frac{v_{t}[\bq, \bp, \bw, \by, \alpha_i](y_{t}\mid \bw)}{p_t(y_t\mid \bw)}\right) - \zeta\left(\frac{v_{t}[\bq, \bp, \bw, \by, \alpha_{i+1}](y_{t}\mid \bw)}{p_t(y_t\mid \bw)}\right)\right\}^2\right] \numberthis \label{eq: decomposition-lambda-1}
\end{align*}
For the first term in \pref{eq: decomposition-lambda-1}, we adopt \pref{lem: offset-lemma} with 
\begin{align*} 
	s_t = (w_{1:t-1}, y_t), \quad \calG_t = \sigma(y_{1:t}, z_{1:t}, w_{1:t-1})\quad\text{and}\quad a_t(s_t) = u_t(y_t\mid \bw),
\end{align*}
it is easy to see that $\EE[\epsilon_t\mid \calG_t] = 0$, and $\sigma(s_t)\subseteq \calG_t$. Hence we have 
\begin{align*}
	&\hspace{-0.5cm} \EE\left[\sup_{\bu\in \calU}\sum_{t=1}^n\big\{\epsilon_{t}u_t(y_t\mid \bw) - \lambda\cdot u_t(y_t\mid \bw)^2\big\}\right]\\
	& \le \frac{\log |\calU|}{2\lambda}\le \frac{\log (|\calV(\alpha_i)|(|\calV(\alpha_{i+1})|+1))}{2\lambda}\le \frac{2\log |\calV(\alpha_i)|}{\lambda}, \numberthis \label{eq: decomposition-lambda-first}
\end{align*}
where the last inequality uses the fact that $\alpha_i\le \alpha_{i+1}$ hence $|\calV(\alpha_{i+1})|\le |\calV(\alpha_i)|$. For the second term in \pref{eq: decomposition-lambda-1}, we have
\begin{align*}
	&\hspace{-0.5cm} \lambda\cdot \EE\left[\sup_{\bq\in \calQ}\sum_{t=1}^n\left\{\zeta\left(\frac{v_{t}[\bq, \bp, \bw, \by, \alpha_i](y_{t}\mid \bw)}{p_t(y_t\mid \bw)}\right) - \zeta\left(\frac{v_{t}[\bq, \bp, \bw, \by, \alpha_{i+1}](y_{t}\mid \bw)}{p_t(y_t\mid \bw)}\right)\right\}^2\right]\\
	& \stackrel{(i)}{\le} 2\lambda\cdot \EE\left[\sup_{\bq\in \calQ}\sum_{t=1}^n\left(\zeta\left(\frac{q_t(y_t\mid \bw)}{p_t(y_t\mid \bw)}\right) - \zeta\left(\frac{v_t[\bq, \bp, \bw, \by, \alpha_i](y_t\mid \bw)}{p_t(y_t\mid \bw)}\right)\right)^2\right]\\
	&\qquad + 2\lambda\cdot \EE\left[\sup_{\bq\in \calQ}\sum_{t=1}^n\left(\zeta\left(\frac{q_t(y_t\mid \bw)}{p_t(y_t\mid \bw)}\right) - \zeta\left(\frac{v_t[\bq, \bp, \bw, \by, \alpha_{i+1}](y_t\mid \bw)}{p_t(y_t\mid \bw)}\right)\right)^2\right]\\
	& \stackrel{(ii)}{\le} 8\lambda n\alpha_i^2|\calY| + 8\lambda n\alpha_{i+1}^2|\calY| \stackrel{(iii)}{\le} 16\lambda n\alpha_{i+1}^2|\calY|, \numberthis\label{eq: decomposition-lambda-second}
\end{align*}
where $(i)$ we uses Cauchy-Schwarz inequality, $(ii)$ we uses \pref{eq: property-v-n} and \pref{eq: property-v-i}, and $(iii)$ uses $\alpha_i\le \alpha_{i+1}$. Finally we choose $\lambda = \sqrt{\frac{\log|\calV(\alpha_i)|}{8n\alpha_{i+1}^2|\calY|}}$. Then bringing \pref{eq: decomposition-lambda-second} and \pref{eq: decomposition-lambda-first} into \pref{eq: decomposition-lambda-1}, we obtain
\begin{align*} 
	&\hspace{-0.5cm} \EE\left[\sup_{\bq\in \calQ}\left\{\sum_{t=1}^n\epsilon_{t}\left\{\zeta\left(\frac{v_{t}[\bq, \bp, \bw, \by, \alpha_i](y_{t}\mid \bw)}{p_t(y_t\mid \bw)}\right) - \zeta\left(\frac{v_{t}[\bq, \bp, \bw, \by, \alpha_{i+1}](y_{t}\mid \bw)}{p_t(y_t\mid \bw)}\right)\right\}\right\}\right]\\
	& \le \frac{2\log|\calV(\alpha_i)|}{\lambda} + 16\lambda n \alpha_{i+1}^2|\calY| \le 8\alpha_{i+1}\sqrt{2n|\calY|\log |\calV(\alpha_i)|}. \numberthis \label{eq: decomposition-2}
\end{align*}
\begin{remark}
\label{rem:offset}
    Let us briefly discuss the novel and key aspect of the approach used to upper bound the left-hand side of \eqref{eq: decomposition-2}. In the classical case when the coefficients are non-random, one simply observes that the supremum is a maximum over a finite collection. Unfortunately, here the squared increments are themselves random and only small in expectation, due to the presence of the $p_t(y_t\mid\bw)$ term in the denominator. The key technical observation here is that one can alternatively work with the offset process \eqref{eq: decomposition-lambda-first}, which can be controlled for any predictable coefficients \emph{irrespective of their magnitude}, as well as the expected squared distance between the coefficients in \eqref{eq: decomposition-lambda-second}. We believe that this technique, which is summarized in \pref{lem: non-offset-lemma}, will be useful beyond this paper.
\end{remark}

Finally, we analyze the last term (offset term) in \pref{eq: decomposition}. We let the filtration $\calG_t = \sigma(y_{1:t}, z_{1:t}, \epsilon_{1:t-1})$ for $t\in [n]$. Then we have $\EE[\epsilon_t\mid \calG_t] = 0$, and according to the process of getting $w_{1:n}$, $\sigma(w_{1:t-1})\subseteq \calG_t$. We let $s_t$ in \pref{lem: offset-lemma} to be $(w_{1:t-1}, y_t)$, and
$$\calA = \left\{\ba = (a_1, \cdots, a_n)\ \Big{|}\ a_t(w_{1:t-1}, y_t) = \zeta\left(\frac{v_t(y_t\mid w_{1:t-1})}{p_t(y_t\mid w_{1:t-1})}\right)\text{ for any }t\in [n]\text{ for some }\bv\in\calV(\alpha_N)\cup\{\bp\}\right\}$$
Applying \pref{lem: offset-lemma} with $\lambda = \frac{1}{16\log (n|\calY|)}$, we obtain
\begin{align*}
	&\hspace{-0.5cm} \EE\left[\sup_{\bv\in \calV(\alpha_N)\cup\{\bp\}}\left\{\sum_{t=1}^n\left\{\epsilon_t \zeta\left(\frac{v_{t}(y_{t}\mid \bw)}{p_t(y_t\mid \bw)}\right) - \frac{1}{16\log (n|\calY|)}\zeta\left(\frac{v_{t}(y_{t}\mid \bw)}{p_t(y_t\mid \bw)}\right)^2\right\}\right\}\right]\\\
	&\le 8\log (n|\calY|)\cdot \log(|\calV(\alpha_N)| + 1).  \numberthis \label{eq: decomposition-3}
\end{align*}

Finally, we specify the scales $\alpha_i = 2^{i - l}$ for some positive integer $l\ge N$. Bringing \pref{eq: decomposition-1}, \pref{eq: decomposition-2} and \pref{eq: decomposition-3} into \pref{eq: decomposition}, we obtain that
\begin{align*}
	&\hspace{-0.5cm} \EE\left[\sup_{\bq\in\calQ}\sum_{t=1}^{n}\epsilon_{t}\left\{\zeta\left(\frac{q_t(y_{t}\mid \bw)}{p_t(y_t\mid \bw)}\right) - \frac{1}{4\log (n|\calY|)}\zeta\left(\frac{q_t(y_{t}\mid \bw)}{p_t(y_t\mid \bw)}\right)^{2}\right\}\right]\\
	&\le 2n\alpha_1\sqrt{|\calY|} + \sum_{i=1}^{N-1} 8\alpha_{i+1}\sqrt{2n|\calY|}\cdot \sqrt{\log|\calV(\alpha_i)|} + 8\log (n|\calY|)\cdot (\log|\calV(\alpha_N)| + 1)\\
	& \stackrel{(i)}{\lesssim} n\alpha_1\sqrt{|\calY|} + \sum_{i=1}^{N-1} \alpha_{i+1}\sqrt{n|\calY|}\cdot \sqrt{\calHsq(\calQ, \alpha_i, n)} + \log (n|\calY|)\cdot \calHsq(\calQ, \alpha_N, n)\\
	& \stackrel{(ii)}{\lesssim} n\alpha_1\sqrt{|\calY|} + \sum_{i=1}^{N-1} (\alpha_{i} - \alpha_{i-1})\sqrt{n|\calY|}\cdot \sqrt{\calHsq(\calQ, \alpha_i, n)} + \log (n|\calY|)\cdot \calHsq(\calQ, \alpha_N, n)\\
	& \stackrel{(iii)}{\le} \frac{n\sqrt{|\calY|}}{2^{-l}} + \sqrt{n|\calY|}\int_{2^{-l}}^{2^{N-l}}\sqrt{\calHsq(\calQ, \alpha, n)}d\alpha + \log (n|\calY|)\cdot \calHsq(\calQ, 2^{N-l}, n), 
\end{align*}
where $(i)$ uses the definition of the covering $\calV(\alpha_i)$ we have $\log (|\calV(\alpha_i)| + 1)\lesssim \calHsq(\calQ, \alpha, n)$ for any $\bp\in \Delta(\calY^n)$, $(ii)$ uses $\alpha_{i+1} = 2\alpha_i = 4\alpha_{i-1}$, and $(iii)$ uses the fact that for any $\alpha_{i-1}\le \alpha\le \alpha_i$,
$$\calHsq(\calQ, \alpha_i, n)\le \calHsq(\calQ, \alpha, n).$$
For $0 < \delta < \gamma\le 1$, letting $l = \log_2(1/\delta)$ and $N = l + \log_2(\gamma)$, and according to \pref{lem: symmetrization}, we obtain that for any $\bp\in \Delta_n(\calY^n)$,
\begin{align*}
	\EE_{\by\sim \bp}\left[\calR_n(\calQ, \bp, \by)\right] \lesssim n\delta\sqrt{|\calY|} + \sqrt{n|\calY|}\int_{\delta}^\gamma\sqrt{2\calHsq(\calQ, \alpha, n)}d\alpha + \log (n|\calY|)\cdot \calHsq(\calQ, \gamma, n),
\end{align*}
hence \pref{eq: objective-after-truncation} is verified.
\end{proof}

\section{Missing Proofs in \pref{sec: sequential-probability}}

\subsection{Missing Proofs in \pref{sec: upper-bound}}\label{sec: upper-bound-proof}
\begin{proof}[Proof of \pref{lem: tree-transductive}]
    We write the proof for 
    $$\phi(y_{1:n},x_{1:n}) = \inf_{f\in \calF} \sum_{t=1}^n \ell(f(x_t), y_t),$$
    but it will be clear from the following that this particular structure is not used. When $\ell$ is convex with respect to its first argument, we can write
    \begin{align*}
        \calR_n(\calF) & = \left\{\sup_{x_t}\inf_{\hp_t}\sup_{y_t}\right\}_{t=1}^n\left[\sum_{t=1}^n \ell(\hp_t, y_t) - \inf_{f\in \calF}\sum_{t=1}^n \ell(f(x_t), y_t)\right]\\
        & = \left\{\sup_{x_t}\inf_{\hp_t}\sup_{p_t}\EE_{y_t\sim p_t}\right\}_{t=1}^n\left[\sum_{t=1}^n \ell(\hp_t, y_t) - \inf_{f\in \calF}\sum_{t=1}^n \ell(f(x_t), y_t)\right]\\
        & \stackrel{(i)}{=} \left\{\sup_{x_t}\sup_{p_t}\inf_{\hp_t}\EE_{y_t\sim p_t}\right\}_{t=1}^n\left[\sum_{t=1}^n \ell(\hp_t, y_t) - \inf_{f\in \calF}\sum_{t=1}^n \ell(f(x_t), y_t)\right]\\
        & = \left\{\sup_{x_t}\sup_{p_t}\EE_{y_t\sim p_t}\right\}_{t=1}^n\left[\sum_{t=1}^n \inf_{\hp_t}\EE_{y_t\sim p_t} \ell(\hp_t, y_t) - \inf_{f\in \calF}\sum_{t=1}^n \ell(f(x_t), y_t)\right]\\
        & \stackrel{(ii)}{=} \sup_{\bx} \left\{\sup_{p_t}\EE_{y_t\sim p_t}\right\}_{t=1}^n\left[\sum_{t=1}^n \inf_{\hp_t}\EE_{y_t\sim p_t} \ell(\hp_t, y_t) - \inf_{f\in \calF}\sum_{t=1}^n \ell(f(x_t(\by)), y_t)\right]\\
        & = \sup_{\bx} \left\{\sup_{p_t}\inf_{\hp_t}\EE_{y_t\sim p_t}\right\}_{t=1}^n\left[\sum_{t=1}^n  \ell(\hp_t, y_t) - \inf_{f\in \calF}\sum_{t=1}^n \ell(f(x_t(\by)), y_t)\right]\\
        & \stackrel{(iii)}{=} \sup_{\bx} \left\{\inf_{\hp_t}\sup_{y_t}\right\}_{t=1}^n\left[\sum_{t=1}^n  \ell(\hp_t, y_t) - \inf_{f\in \calF}\sum_{t=1}^n \ell(f(x_t(\by)), y_t)\right]\\
        & = \sup_{\bx} \calR_n(\calF\circ \bx),
    \end{align*}
    where $(i)$ and $(iii)$ use the minimax theorem for convex-concave functions \cite[Theorem 1.(ii)]{fan1953minimax}, and also the fact that $\ell$ is convex with respect to its first argument, and $(ii)$ uses the fact that interleaving the supremum of $x_t$ and expectation over $y_t$ is equivalent to taking the supremum over trees $\bx$ first (or, skolemization).
\end{proof}

\begin{proof}[Proof of \pref{thm: upper-bound}]
	For a given function $\calF\subseteq [0, 1]^\calX$ and depth-$n$ $\calX$-valued tree $\bx$, we define a class $\calF(\bx) = \{f(\bx): f\in \calF\}\subseteq \Delta(\{0, 1\}^n)$ of joint distributions over $\{0, 1\}^n$ as follows: for $\by = (y_{1:n})\in \{0, 1\}^n$, the probability of joint distribution $f(\bx)$ takes value $\by$ equals to
	$$f(\bx)(\by) = \prod_{t=1}^n f(\bx)_t(y_t\mid \by),$$
	and 
	\begin{equation}\label{eq: def-f-x-t}
		f(\bx)_t(y_t\mid \by) = \begin{cases}
		f(x_t(\by)) &\quad \text{if } y_t = 1,\\
		1 - f(x_t(\by)) &\quad \text{if } y_t = 0.
	\end{cases}\end{equation}
	
	According to \pref{lem: tree-transductive} (see also \cite[Lemma 6]{bilodeau2020tight}, \cite{liu2024sequential} and \cite[Eq. 27] {rakhlin2015sequential}), we can write
	$$\calR_n(\calF) = \sup_{\bx, \bp} \EE_{\by\sim \bp}\calR_n(\calF(\bx), \bp, \by) = \sup_{\bx}\left[\calR_n(\calF(\bx))\right],$$
	where $\calR_n(\cdot, \bp, \by)$ is defined in \pref{eq: r-n-formula}, and $\calR_n(\calF(\bx))$ is the Shtarkov sum \pref{eq: minimax-regret-Q} for joint distribution class $\calF(\bx)$. 
    
    In order to prove \pref{thm: upper-bound}, we only need to verify that $\calHsq(\calF, \alpha, n, \bx)\le \calHsq(\calF(\bx), n, \alpha)$ for any tree $\bx$. In the following, we verify this by showing that the sequential square-root covering of $\calF\circ \bx$ defined in \pref{def: sequential-covering-f} can form a sequential square-root covering of $\calF(\bx)$ defined in \pref{def: sequential-covering} of the same size. 

	Suppose $\calV(\alpha)$ is the sequential square-root covering of $\calF\circ \bx$ at scale $\alpha$ defined in \pref{def: sequential-covering-f}. We define 
	$\calU(\alpha)\subseteq \Delta(\{0, 1\}^n)$ from $\calV$: 
	$$\calU(\alpha) = \left\{\bu[\bv]\in \Delta(\{0, 1\}^n), \bv\in\calV(\alpha): \bu(\by) = \prod_{t=1}^n u_t[\bv](y_t\mid \by), \forall \by\in \{0, 1\}^n\right\},$$
	where 
	$$u_t[\bv](y_t\mid \by)\coloneqq \begin{cases} v_t(\by)\quad \text{if }y_t = 1,\\1 - v_t(\by)\quad \text{if }y_t = 0.\end{cases}$$
	Then we have $|\calU(\alpha)| = |\calV(\alpha)|$. And according to \pref{def: sequential-covering-f} we have for any $f\in \calF$, and $\bw\in \{0, 1\}^n$, there exists $\bu\in \calU$ such that for any $t\in [n]$,
	$$\max_{y\in \{0, 1\}}\left|\sqrt{u_t(y\mid \bw)} - \sqrt{f(\bx)_t(y \mid \bw)}\right|\le \alpha.$$
	Therefore, $\calU(\alpha)$ is a sequential square-root covering of $\calF(\bx)$ according to \pref{def: sequential-covering}. Noticing that $|\calU(\alpha)| = |\calV(\alpha)|$, \pref{thm: upper-bound} directly follows from \pref{thm: general-upper-bound}.
\end{proof}

\begin{proof}[Proof of \pref{corr: entropy}]
	\pref{corr: entropy} follows from \pref{thm: upper-bound} after replacing $\calHsq(\calF, \alpha, n, \bx)$ with $\tO(\alpha^{-p})$ and the following choices of $\gamma$ and $\delta$:
	\begin{equation}
		(\gamma, \delta) = \begin{cases}
			\left(n^{-\frac{1}{p+2}}, n^{-1}\right) &\quad \text{if }0\le p\le 2,\\
			\left(1, n^{-\frac{1}{p}}\right) &\quad \text{if }p > 2.
		\end{cases}
	\end{equation}
\end{proof}

\subsection{Missing Proofs in \pref{sec: entropy-connection}}\label{sec: entropy-connection-proof}
\begin{proof}[Proof of \pref{prop: entropy-connection-1}]
	This proposition directly follows from the following inequality: for any $p, q\in [0, 1]$ with $p\in [\delta, 1 - \delta]$, 
	$$\max\left\{\left|\sqrt{p} - \sqrt{q}\right|, \left|\sqrt{1 - p} - \sqrt{1 - q}\right|\right\}\le \frac{|p - q|}{\sqrt{\delta}}.$$
	In fact, we have 
	\begin{align*} 
		&\hspace{-0.5cm} \max\left\{\left|\sqrt{p} - \sqrt{q}\right|, \left|\sqrt{1 - p} - \sqrt{1 - q}\right|\right\}\\
		&  = |p - q|\cdot \max\left\{\frac{1}{\sqrt{p} + \sqrt{q}}, \frac{1}{\sqrt{1-p} + \sqrt{1-q}}\right\}\le \frac{|p - q|}{\sqrt{\delta}}.
	\end{align*}
\end{proof}

\begin{proof}[Proof of \pref{prop: entropy-connection}]
	This proposition is a direct corollary of the standard inequality, e.g.~\cite[(7.22)]{polyanskiy2024information}, which shows that the squared Hellinger distance and TV distance satisfy the bound: 
	$$H(p, q)^2\le 2 \TV(p, q).$$
\end{proof}

\section{Missing Proofs in \pref{sec: lower-bound}}\label{sec: lower-bound-proof}
In this section, we will present the formal proof to \pref{thm: lower-bound-entropy} and \pref{thm: lower-bound-large-p}.

\subsection{Proof of \pref{thm: lower-bound-entropy}}
To prove \pref{thm: lower-bound-entropy}, we define a dimension of function classes which characterizes the difficulty of sequential learning with the class. We will relate both the sequential square-root entropy and minimax regret to this dimension of the function class. 

First, we define distance $h$ between two real numbers in $[0, 1]$:
\begin{align}
\label{def:h}
    h(a, b) = \max\left\{\left|\sqrt{a} - \sqrt{b}\right|, \left|\sqrt{1 - a} - \sqrt{1 - b}\right|\right\}, \qquad \forall a, b\in [0, 1].
\end{align}

\begin{definition}\label{def: def-fat-shattering}
	Suppose $\calF\in [0, 1]^\calX$ is a function class and $0 < \beta < \alpha$. An $\calX$-valued depth-$d$ tree $\bx$ is said to be $(\alpha, \beta)$-shattered by $\calF$ distance if there exists a $[\beta, 1 - \beta]\times [\beta, 1 - \beta]$-valued depth-$d$ tree $\bs$ such that: for any path $\by = (y_{1:d})\in \{0, 1\}^d$, $s_t(\by) = (s_t(\by)[0], s_t(\by)[1])$ with $s_t(\by)[0] < s_t(\by)[1]$, and also there exists $f^\by\in\calF$ such that 
	$$\left|f^\by(x_t(\by)) - s_t(\by)[y_t]\right| < \beta \quad \text{and}\quad h\left(s_t(\by)[0], s_t(\by)[1]\right) > \alpha,\qquad \forall t\in [d].$$
	The dimension $\mathfrak{D}(\calF, \alpha, \beta)$ is defined to be the largest $d$ such that there exists a depth-$d$ tree $\bx$ which is $(\alpha, \beta)$-shattered by $\calF$.
\end{definition}

The following proposition relates the dimension $\frakd(\calF, \alpha, \beta)$ to the sequential square-root entropy.

\begin{proposition}\label{prop: covering-fat}
	For any class $\calX$, function class $\calF\in [0, 1]^\calX$, positive integer $n$ and $\alpha > 0$, we have 
	$$\sup_{\bx}\calHsq(\calF, \alpha + \sqrt{2\beta}, n, \bx)\le \frakd(\calF, \alpha, \beta)\log\left(\frac{en}{\beta}\right),$$
	where the supremum is over all depth-$n$ $\calX$-valued tree $\bx$.
\end{proposition}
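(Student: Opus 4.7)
The plan is a sequential adaptation of the Rakhlin--Sridharan--Tewari-style argument that bounds sequential $\ell_\infty$-covers by a scale-sensitive dimension, tailored to the two-scale dimension $\frakd$ of \pref{def: def-fat-shattering} and the distance $h$ appearing there.

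The first step is to convert the $h$-distance to Euclidean. From the identity $p - q = (\sqrt{p}-\sqrt{q})(\sqrt{p}+\sqrt{q})$ together with $\sqrt{p}+\sqrt{q} \ge \sqrt{|p-q|}$ one obtains $|\sqrt{p}-\sqrt{q}| \le \sqrt{|p-q|}$, and an analogous bound for $\sqrt{1-p}-\sqrt{1-q}$, giving $h(p,q) \le \sqrt{|p-q|}$. Fix a grid $G \subseteq [\beta, 1-\beta]$ with spacing $2\beta$, so $|G| \le 1/\beta$, and for each $f \in \calF$ let $\bar f(x) \in G$ denote the nearest point to $f(x)$. Then $h(f(x), \bar f(x)) \le \sqrt{\beta} \le \sqrt{2\beta}$, and by the triangle inequality for $h$ any sequential square-root cover of the grid-valued class $\bar\calF \circ \bx := \{\bar f \circ \bx: f \in \calF\}$ at scale $\alpha$ lifts to a sequential square-root cover of $\calF \circ \bx$ at scale $\alpha + \sqrt{2\beta}$. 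It therefore suffices to bound $\calNsq(\bar\calF \circ \bx, \alpha, n)$.

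I would prove $\calNsq(\bar\calF \circ \bx, \alpha, n) \le (en/\beta)^d$ with $d := \frakd(\calF, \alpha, \beta)$ by induction on $n$ and $d$. At each step, partition $\calF = \bigsqcup_{g \in G} \calF_g$, where $\calF_g := \{f : \bar f(x_1) = g\}$ for the root $x_1 = x_1(\emptyset)$ of $\bx$. Because sequential covers are allowed to depend on the path, one can paste subtree covers: for each $g$, construct covers of $\bar\calF_g$ on the two subtrees separately and combine with root value $g$ (using arbitrary placeholder values in the unused subtree for each cover element). The key combinatorial lemma is: \emph{if two distinct $g, g' \in G$ satisfy $h(g, g') > \alpha$ and both $\calF_g, \calF_{g'}$ admit depth-$d$ $(\alpha, \beta)$-shattered trees, then $\calF$ admits a depth-$(d+1)$ $(\alpha, \beta)$-shattered tree.} The construction places $x_1$ at the new root with witness pair $(\min(g,g'), \max(g,g')) \in [\beta, 1-\beta]^2$ (whose entries are separated by $h$ more than $\alpha$), attaches the two shattered subtrees as left and right children, and for each path selects the required function from $\calF_g$ or $\calF_{g'}$ (the condition $|f(x_1) - g| \le \beta$ holds for $f \in \calF_g$ by construction of $\bar f$). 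This contradicts $\frakd(\calF, \alpha, \beta) = d$ and forces the full-dimension subclasses to cluster.

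The main obstacle is the combinatorial bookkeeping in the inductive step. In the classical single-scale setting one has ``at most one full-dimension subclass'', but here $h(g, g') \le \alpha$ only yields $|g - g'| \le \sqrt{2}\alpha$, so up to $O(\alpha/\beta)$ grid subclasses can retain dimension $d$. To absorb this I would use the binomial-sum induction hypothesis $\phi(n, d) := \sum_{i=0}^d \binom{n}{i}(2/\beta)^i$, which satisfies the Pascal-type recurrence $\phi(n, d) = \phi(n-1, d) + (2/\beta)\phi(n-1, d-1)$ and is bounded by $(en/(d\beta))^d \le (en/\beta)^d$; matching the cover-size recursion to this recurrence, separating the full-dimension subclasses (which together contribute the ``$(2/\beta)\phi(n-1, d-1)$'' term via the key lemma) from the lower-dimension ones, is the technical heart of the argument and delivers the stated bound $\frakd(\calF, \alpha, \beta)\log(en/\beta)$ on the sequential square-root entropy.
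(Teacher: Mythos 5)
Your overall strategy coincides with the paper's: discretize $\calF$ to a $\Theta(\beta)$-spaced grid at a cost of $\sqrt{2\beta}$ in the $h$-distance, then bound the cover of the grid-valued class by a double induction on $(n,d)$ with the binomial sum $\phi(n,d)=\sum_{i\le d}\binom{n}{i}(|G|-1)^i$ and its Pascal recurrence, the key lemma being that two root-value subclasses $\calF_g,\calF_{g'}$ with $h(g,g')>\alpha$ that both retain dimension $d$ would assemble into a depth-$(d+1)$ $(\alpha,\beta)$-shattered tree. That lemma, and the resulting pairwise $h$-clustering of the full-dimension subclasses, are exactly the paper's \pref{lem: shattering-finite} and \pref{eq: root-close}.

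There is, however, one genuine gap in how you close the recurrence, visible in your parenthetical asserting that the full-dimension subclasses ``together contribute the $(2/\beta)\phi(n-1,d-1)$ term.'' They cannot: a subclass $\calF_g$ with $\frakd(\calF_g,\alpha,\beta)=d$ only admits subtree covers of size $\phi(n-1,d)$ under the induction hypothesis, so if $k\ge 2$ full-dimension subclasses are covered separately the count is at least $k\,\phi(n-1,d)$, which already exceeds $\phi(n,d)=\phi(n-1,d)+(|G|-1)\phi(n-1,d-1)$. The clustering conclusion must be used in the opposite direction: since all full-dimension root values are pairwise within $h$-distance $\alpha$, their union $\calF'=\bigcup_{g}\calF_g$ (which still has dimension at most $d$) can be covered by a \emph{single} family of trees whose common root value is any one of these $g$'s, as that value is $h$-close to the root value of every $f\in\calF'$; this union supplies the lone $\phi(n-1,d)$ term, while the at most $|G|-1$ remaining subclasses, all of dimension at most $d-1$, supply the $(|G|-1)\phi(n-1,d-1)$ term. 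This merging step is the technical heart you identified but did not supply, and it is precisely how the paper's induction closes. Two cosmetic points you should also tidy up: \pref{def: def-fat-shattering} requires the strict inequality $|f^{\by}(x_t(\by))-s_t(\by)[y_t]|<\beta$, whereas nearest-point rounding on a $2\beta$-grid only gives $\le\beta$; and the multiplicity in the recurrence should be $|G|-1\le 1/(2\beta)$ rather than $2/\beta$ if you want to land under $(en/\beta)^d$ uniformly in $d$.
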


The following proposition relates the dimension $\frakd(\calF, \alpha, \beta)$ to the minimax regret $\calR_n(\calF)$.
\begin{proposition}\label{prop: lower-bound}
	Suppose the function class $\calF\subseteq [0, 1]^{\calX}$ satisfies $\frakd(\calF, \alpha, \alpha^4/16) = \tilde{\Omega}\left(\alpha^{-p}\right)$. Then for any positive integer $n$, 
	$$\calR_n(\calF) = \tilde{\Omega}\left(n^{\frac{p}{p+2}}\right).$$
\end{proposition}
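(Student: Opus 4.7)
The plan is to carry out a Bayesian lower bound: pick a scale $\alpha \asymp n^{-1/(p+2)}$, use the shattered tree at that scale to drive a randomized adversary that repeats each tree level $k \asymp \alpha^{-2}$ times, and lower bound the expected regret by $\tilde\Omega(d)$ through a per-level two-Bernoulli hypothesis-testing argument; since the minimax regret dominates this expected regret, the conclusion will follow. Concretely, set $\alpha = n^{-1/(p+2)}$ and $\beta = \alpha^4/16$, so the hypothesis gives $d := \frakd(\calF,\alpha,\beta) = \tilde\Omega(\alpha^{-p}) = \tilde\Omega(n^{p/(p+2)})$. Fix an $(\alpha,\beta)$-shattered $\calX$-valued tree $\bx$ of depth $d$ with shattering values $\bs$ and witnesses $\{f^\by\}_{\by\in\{0,1\}^d} \subseteq \calF$. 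Partition the $n$ rounds into $d$ consecutive blocks of length $k = \lceil 2\alpha^{-2}\rceil$, so $dk \leq n$ up to constants (with any leftover rounds filled in benignly). The randomized adversary samples $\by^* \in \{0,1\}^d$ uniformly and, during every round of block $i$, plays the single context $x_i(\by^*_{1:i-1})$ and draws each label independently from $\mathrm{Ber}\bigl(f^{\by^*}(x_i(\by^*_{1:i-1}))\bigr)$.

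Bounding the expert-in-hindsight loss by that of the reference expert $f^{\by^*}$ and taking expectations over the labels gives
\begin{equation*}
    \EE[R] \;\geq\; \sum_{t=1}^n \EE\!\left[\KL\!\bigl(\mathrm{Ber}(f^{\by^*}(x_t))\,\big\|\,\mathrm{Ber}(\hp_t)\bigr)\right],
\end{equation*}
where $R$ denotes the realized regret of the forecaster. Conditional on $\by^*_{1:i-1}$, block $i$ reduces to an independent two-hypothesis testing sub-game: the forecaster observes $k$ i.i.d.\ samples from either $\mathrm{Ber}(s_i[0])$ or $\mathrm{Ber}(s_i[1])$ under a uniform prior on $\by^*_i$. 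A standard chain-rule identity identifies the Bayes regret of this sub-game with the Jensen--Shannon divergence $\mathrm{JS}\bigl(\mathrm{Ber}(p_i)^{\otimes k},\mathrm{Ber}(q_i)^{\otimes k}\bigr)$, where $(p_i,q_i) = (s_i(\by^*_{1:i-1})[0], s_i(\by^*_{1:i-1})[1])$. Since the $\by^*_i$ coordinates are independent, the total expected regret lower bounds as the sum over $i$ of these per-block divergences.

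The main technical step is a uniform per-block bound $\mathrm{JS}\bigl(\mathrm{Ber}(p)^{\otimes k},\mathrm{Ber}(q)^{\otimes k}\bigr) \geq c_0$ for an absolute constant whenever $k \geq 2\alpha^{-2}$ and $h(p,q) > \alpha$, which follows by tensorizing the Bhattacharyya coefficient: the squared Hellinger distance between the single Bernoullis is at least $h(p,q)^2 \geq \alpha^2$, so the $k$-th power of the Bhattacharyya coefficient is at most $e^{-1}$, the squared Hellinger between the product laws is at least $2(1-e^{-1})$, total variation is at least $1/2$, and Fano's inequality then gives Jensen--Shannon divergence at least an absolute constant. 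Summing over the $d$ blocks yields $\EE[R] \gtrsim d = \tilde\Omega(n^{p/(p+2)})$. The main bookkeeping point I expect to require the most care is the slack $|f^{\by^*}(x_i(\by^*_{1:i-1})) - s_i[\by^*_i]| < \beta$; the unusual choice $\beta = \alpha^4/16$ in \pref{def: def-fat-shattering} is calibrated precisely so that this approximation error is negligible at the Hellinger-$\alpha$ scale driving the hypothesis test, allowing the above chain of inequalities to be rewritten in terms of $f^{\by^*}$ rather than $s_i[\by^*_i]$.
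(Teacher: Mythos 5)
Your route is genuinely different from the paper's. The paper's adversary draws every label in block $t$ from the \emph{midpoint} $v_t=(s_t[0]+s_t[1])/2$ and lets the comparator be chosen \emph{in hindsight}: the path coordinate $\ty_t$ is the sign of the empirical deviation $\hat v_t - v_t$ within the block, and the per-block gain of order $\Omega(1)$ comes from anti-concentration, $\EE|\hat v_t - v_t|\gtrsim\sqrt{v_t(1-v_t)/k_t}$ (\pref{lem: empirical-log}). You instead commit to a hidden endpoint $y^*_i\sim\unif\{0,1\}$ up front and extract the per-block constant from the Bayes redundancy $=$ mutual information $=$ JS divergence, lower-bounded via tensorization of the Bhattacharyya coefficient. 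Your per-block computation is correct ($H^2\ge h^2>\alpha^2$, $\rho^k\le e^{-1}$, $\TV\ge 1-e^{-1}$, $\mathrm{JS}\ge\log 2-h_b((1-\TV)/2)$), the budget $dk\asymp n$ matches, and the KL identity $\EE[R]\ge\sum_t\EE[\KL(\mathrm{Ber}(f^{\by^*}(x_t))\|\mathrm{Ber}(\hp_t))]$ is valid. A side benefit of the paper's construction is that its contexts form a legitimate label-dependent tree, so it plugs directly into \pref{lem: dual-form}; your contexts depend on the adversary's private seed, which is still a valid randomized-adversary lower bound but sits outside that machinery.

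The genuine gap is not the one you flagged. Because the shattering witness $f^{\by^*}$ depends on the \emph{entire} path $\by^*$, drawing labels from $\mathrm{Ber}(f^{\by^*}(x_j(\by^*_{<j})))$ makes the labels of blocks $j<i$ correlated with $y^*_i$ through the $\beta$-perturbations. Your "independent sub-games" step silently conditions block $i$ only on $y^*_{<i}$, but the correct per-block term is $I(y^*_i;Y_{\mathrm{blk}\,i}\mid \by^*_{<i},Y_{<\mathrm{blk}\,i})$, which vanishes if earlier labels have already revealed $y^*_i$. The total leakage is controlled by $H^2\lesssim nk\beta/k= n\beta=n\alpha^4/16\asymp n^{(p-2)/(p+2)}$, so the argument survives for $p<2$ but is vacuous for $p\ge 2$, and the shattering definition permits witnesses arbitrarily close to $0$ or $1$, where single observations carry unbounded log-likelihood ratios about hidden bits. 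The obvious repair—sample labels from $\mathrm{Ber}(s_i[y^*_i])$ exactly—fails because the comparator penalty $\KL(\mathrm{Ber}(s)\|\mathrm{Ber}(f^{\by^*}))$ blows up near the boundary. A working fix is to sample from a one-sided shift $g_i=s_i[y^*_i]\mp\beta$ (sign chosen so $f^{\by^*}(x_i)$ lies on the safe side), which restores prefix-measurability of the label law, keeps the two hypotheses $\alpha/2$-separated in $h$, and costs only $O(n\beta)=o(d)$ in total comparator slack; but this step is missing from your plan, and the $\beta$-issue you did flag (within-block approximation at the Hellinger scale) is the easy part.
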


With the above two propositions of the dimension $\frakd(\calF, \alpha, \beta)$, we are ready to prove \pref{thm: lower-bound-entropy}.
\begin{proof}[Proof of \pref{thm: lower-bound-entropy}]
	Suppose class $\calF$ satisfies $\sup_{\bx}\calHsq(\calF, \alpha, n, \bx) = \tilde{\Omega}\left(\alpha^{-p}\right)$. Then according to \pref{prop: covering-fat}, we have 
	$$\frakd(F, \alpha, \alpha^4/16)\ge \sup_{\bx} \calHsq(\calF, \alpha + \alpha^2/(2\sqrt{2}), n, \bx)\cdot \log^{-1}\left(\frac{16en}{\alpha^4}\right) = \tilde{\Omega}\left(\alpha^{-p}\right).$$
	Hence according to \pref{prop: lower-bound}, we have 
	$$\calR_n(\calF) = \tilde{\Omega}\left(n^{\frac{p}{p+2}}\right).$$
\end{proof}
The following two subsections will be devoted to the proof of \pref{prop: covering-fat} and \pref{prop: lower-bound}. A more general treatment of this approach, including the non-sequential analogue, will appear in the companion paper \cite{JiaPolRak25modified}.

\subsubsection{Proof of \pref{prop: covering-fat}}
Before proving \pref{prop: covering-fat}, we first prove a similar results for sets of discrete-valued function classes. Suppose $\beta\in (0, 1)$ satisfies $M = 1/(2\beta)$ is an positive integer. We define set:
$$U_\beta = \left\{\beta, 3\beta, 5\beta, \cdots, \left(2M - 1\right)\cdot \beta\right\}\subseteq [0, 1].$$
And we further define the dimension $\frakd(\calF, \alpha)$ for the discrete-valued function class $\calF$ which contains functions mapping $\calX$ into the set $U_\beta$.
\begin{definition}\label{def: shattering-finite}
	Fix real number $\beta\in (0, 1)$ which satisfies $1/(2\beta)$ is a positive integer. For function $\calF\subseteq (U_\beta)^\calX$ and real number $\alpha > 0$, a depth-$d$ $\calX$-valued $\bx$ is said to be shattered by $\calF$ at scale $\alpha$, if there exists a depth-$d$ $(\calU_\beta\times \calU_\beta)$-valued tree $\bs$ such that: for any $\by\in \{0, 1\}^d$, $s_t(\by) = (s_t(\by)[0], s_t(\by)[1])$ satisfies $s_t(\by)[0] < s_t(\by)[1]$, and for any $t\in [d]$, $h(s_t(\by)[0], s_t(\by)[1]) > \alpha$ (where $h$ is defined in \eqref{def:h}), and for any $\by\in \{0, 1\}^d$, there exists $f^\by\in \calF$ such that $f^\by(x_t(\by)) = s_t(\by)[y_t]$ holds for all $t\in [d]$.

	The dimension $\frakd(\calF, \alpha)$ of $\calF$ is defined to be the largest $d$ such that there exists a depth-$d$ $\calX$-valued tree $\bx$ shattered by $\calF$ at scale $\alpha$.
\end{definition}
The following lemma indicates that for discrete-valued function set $\calF$, the sequential square-root covering number of class $\calF$ can be bounded by the dimension $\frakd(\calF, \alpha)$ of class $\calF$.
\begin{lemma}\label{lem: shattering-finite}
	For function class $\calF: \calX\to U_\beta$, then for any depth-$n$ $\calX$-valued tree $\bx$, we have
	$$\calNsq(\calF\circ \bx, \alpha, n)\le \left(\frac{en}{\beta}\right)^{\frakd(\calF, \alpha)}$$
\end{lemma}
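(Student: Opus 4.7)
The plan is to prove the stronger Sauer--Shelah-style bound
\begin{align*}
\calN_{\mathsf{sq}}(\calF\circ\bx,\alpha,n) \le \sum_{i=0}^{d}\binom{n}{i} M^i,
\end{align*}
where $d=\frakd(\calF,\alpha)$ and $M=|U_\beta|=1/(2\beta)$, and then conclude via the standard estimate $\sum_{i=0}^d \binom{n}{i} M^i \le (enM/d)^d$ together with $M/d \le M \le 1/\beta$. The base cases are $n=0$ (empty cover suffices) and $d=0$, where by definition no two values in $U_\beta$ attained by $\calF$ at any node are $h$-separated by more than $\alpha$; a straightforward induction on $n$ then produces a single-element cover.

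For the inductive step on $n$, fix a depth-$n$ tree $\bx$ with root $x_1(\emptyset)$ and depth-$(n-1)$ subtrees $\bx^0,\bx^1$. Let $U=\{f(x_1(\emptyset)):f\in\calF\}\subseteq U_\beta$ and $\calF_u=\{f\in\calF:f(x_1(\emptyset))=u\}$ for $u\in U$. Because the sequential cover only needs to be faithful \emph{along the realized path}, a cover element with root value $u$ needs only one of its two subtrees to agree with the relevant witness. Consequently, taking covers $\calV_u^0$ of $\calF_u\circ\bx^0$ and $\calV_u^1$ of $\calF_u\circ\bx^1$ at scale $\alpha$ and gluing them at a root labeled $u$ (with the other subtree arbitrary) yields
\begin{align*}
\calN_{\mathsf{sq}}(\calF\circ\bx,\alpha,n) \le \sum_{u\in U}\bigl(|\calV_u^0|+|\calV_u^1|\bigr).
\end{align*}
The key combinatorial observation is the following dimension-drop statement: whenever $u^-<u^+$ lie in $U$ with $h(u^-,u^+)>\alpha$, one cannot simultaneously have $\frakd(\calF_{u^-}|_{\bx^0},\alpha)\ge d$ \emph{and} $\frakd(\calF_{u^+}|_{\bx^1},\alpha)\ge d$; otherwise, one could promote the two depth-$d$ shattered subtrees into a depth-$(d{+}1)$ shattering of $\bx$ with root witnesses $(u^-,u^+)$, contradicting $\frakd(\calF,\alpha)\le d$. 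Invoking the inductive hypothesis for each $\calF_u|_{\bx^b}$ and extracting from this forbidden pattern the recurrence
\begin{align*}
N(d,n)\le N(d,n-1)+M\cdot N(d-1,n-1),
\end{align*}
Pascal's identity closes the induction.

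The main obstacle is the counting argument that converts the pairwise forbidden pattern into the clean recurrence. A naive bound would sum $M\cdot N(d,n-1)+M\cdot N(d,n-1)$, which is too large. The right move is to order the values in $U$ along $U_\beta$ and, for each $u\in U$, charge the term $|\calV_u^0|+|\calV_u^1|$ to one of the two subtrees in a way that guarantees dimension $\le d-1$ on the charged side unless $u$ is a distinguished representative, the latter contributing at most $N(d,n-1)$. Using that any two $h$-separated pair witnesses the dimension drop, this distinguished representative is essentially unique per $h$-ball of radius $\alpha$, leaving at most $M$ residual terms at the full dimension and forcing a dimension drop elsewhere. Executing this accounting carefully is the crux; once it is in place, the Sauer--Shelah sum falls out, and the final estimate is routine.
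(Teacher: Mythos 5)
Your skeleton matches the paper's: induction on $n$ toward a Sauer--Shelah sum, partition of $\calF$ by the root value $f(x_1(\emptyset))\in U_\beta$, and the dimension-drop observation that two root values $u^-<u^+$ with $h(u^-,u^+)>\alpha$ whose classes both shatter depth-$d$ trees can be promoted to a depth-$(d{+}1)$ shattering. But the step you yourself flag as ``the crux'' is not carried out, and the sketch you give of it would not close the induction. Two concrete problems. First, the glued cover must be formed by \emph{pairing} elements of the left and right subtree covers --- each cover element gets a left subtree from $\calV_u^0$ and a right subtree from $\calV_u^1$, so that $|\calV_u|\le\max\{|\calV_u^0|,|\calV_u^1|\}$ --- not by taking one subtree and leaving the other arbitrary, which gives the sum $|\calV_u^0|+|\calV_u^1|$. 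With the sum, even a perfect dimension accounting yields $2N(d,n-1)+2(M-1)N(d-1,n-1)$, and Pascal's identity cannot absorb the factor $2$.

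Second, your charging scheme rests on a claim that is false as stated: the full-dimension root values are not ``essentially unique,'' and ``at most $M$ residual terms at the full dimension'' would contribute $M\cdot N(d,n-1)$, which destroys the recurrence. What is actually true (and what the paper proves) is that the set $\calK$ of root values $u$ with $\frakd(\calF_u,\alpha)=d$ is pairwise $h$-close at scale $\alpha$; there may be many such $u$, and none of their classes individually drops in dimension. The resolution is not to charge them separately but to \emph{merge} them: set $\calF'=\bigcup_{u\in\calK}\calF_u$, note $\frakd(\calF',\alpha)\le d$, apply the inductive hypothesis to $\calF'$ on each subtree, and glue with a \emph{single shared root value} $v_1=f_0(x_1(\emptyset))$ for some $f_0\in\calF'$ --- this is a valid $\alpha$-cover at the root precisely because every $f\in\calF'$ has $h(f(x_1(\emptyset)),v_1)\le\alpha$ by the pairwise-closeness claim. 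This yields exactly one $N(d,n-1)$ term plus at most $M-1$ terms of $N(d-1,n-1)$, which is the recurrence you need. Without the max-gluing and the merging step, the argument does not go through.
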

\begin{proof}[Proof of \pref{lem: shattering-finite}]
	For any $\beta > 0$ such that $1/(2\beta)$ is an integer, we define 
	$$g_\beta(n, d) = \sum_{i=0}^d \binom{n}{i}\cdot \left(M - 1\right)^i,$$
	which satisfies \cite{RakSri2015Steele}
	\begin{equation}\label{eq: equation-g-beta}
		g_\beta(n, d) = g_\beta(n-1, d) + \left(M - 1\right)\cdot g_\beta(n-1, d-1).
	\end{equation}

	We will prove this result by induction with the following induction argument:

	\noindent$\mathfrak{G}(n, d)$: For any function class $\calF\subseteq (U_\beta)^\calX$ with $\frakd(\calF, \alpha)\le d$, and any depth-$n$ $\calX$-valued tree $\bx$, $\calNsq(\calF\circ \bx, \alpha, n)\le g_\beta(d, n)$.
	
	\paragraph{Base: } There are two base case: $n \le d$ and $d = 0$.

	When $n\le d$, we let 
	$$\calV = \left\{\bv[l_1, l_2, \cdots, l_n]: l_1, \cdots, l_n\in U_\beta\right\},$$
	where $\bv[l_1, \cdots, l_n]$ denotes the tree which takes value $l_t$ at depth $t$ along any path. Then it is easy to see that for any $f\in \calF$, depth-$n$ $\calX$-valued tree $\bx$, and any path $\by\in \{0, 1\}^n$, there exists some $\bv\in \calV$ such that $f(x_t(\by)) = v_t(\by)$ for all $t\in [n]$. Hence $\calV$ is a $0$-sequential covering of $\calF\circ \bx$, hence $\calV$ is also an $\alpha$-sequential covering of $\calF\circ\bx$ as well. Hence we have
	$$\calNsq(\calF\circ \bx, \alpha, n)\le |\calV| = |U_\beta|^n = M^n = \sum_{i=0}^d \binom{n}{i}\cdot \left(M - 1\right)^i = g_\beta(n, d).$$

	When $d = 0$, there is no depth-$1$ $\calX$-valued tree which shatters $\calF$ at scale $\alpha$. This implies for any two $x, x'\in \calX$, we always have $h(f(x), h(x'))\le \alpha$ (otherwise we can construct depth-$1$ tree $\bx$ with $x_1(0) = x$ and $x_1(1) = x'$, then this tree is shattered by $\calF$). For any $x_0\in \calX$, we construct depth-$n$ $[0, 1]$-valued tree $\bv$ which always takes value $f(x_0)$ no matter the path and depth. Then for any $f\in \calF$, depth-$n$ $\calX$-valued tree $\bx$ and any path $\by\in \{0, 1\}$, we always have $h(f(x_t(\by)), v_t(\by)) =  h(f(x_t(\by)), f(x_0))\le \alpha$. Hence $\calV$ is an $\alpha$-sequential covering of $\calF\circ \bx$, and it satisfies $|\calV| = 1 = g_\beta(n, 0)$.

	\paragraph{Induction: } Suppose the induction hypothesis $\mathfrak{G}(n-1, d-1)$ and $\mathfrak{G}(n-1, d)$ both holds. We will prove induction statement $\mathfrak{G}(n, d)$. For fixed function class $\calF$ with $\frakd(\calF, \alpha) = d$ and depth-$n$ $\calX$-valued tree $\bx$, we will construct a $\alpha$-sequential covering to $\calF\circ \bx$ whose size is no more than $g_\beta(n, d)$. Suppose the root of tree $\bx$ is $x_1$, the left subtree of $x_1$ is $\bx^l$, and the right subtree of $x_1$ is $\bx^r$. We partition the function class $\calF$ as:
	$$\calF = \calF_1\cup\calF_2\cup\cdots\cup \calF_{1/2\beta}\quad \text{where}\quad \calF_k = \{f\in \calF: f(x_1) = (2k-1)\beta\}, \forall 1\le k\le M.$$
	Then we have $\frakd(\calF_k, \alpha)\le \frakd(\calF, \alpha) = d$ for all $k\in [M]$.  We let $\calK = \{k\in [M]: \frakd(\calF_k, \alpha) = d\}$. Then for any $a, b\in \calK$ and $a < b$, there exist two depth-$d$ $\calX$-valued trees $\bx^a$ and $\bx^b$, and also two depth-$d$ $(U_\beta\times U_\beta)$-valued trees $\bs^a$ and $\bs^b$ such that for any $\by\in \{0, 1\}^d$ and $t\in [d]$, 
    $$h(s_t^a(\by)[0], s_t^a(\by)[1]) > \alpha\quad \text{and}\quad h(s_t^b(\by)[0], s_t^b(\by)[1]) > \alpha,$$
    and further for any $\by\in \{0, 1\}^d$, there exists $f_a^\by\in \calF_a$ and $f_b^\by\in \calF_b$ such that for any $t\in [d]$,
	$$f_a^\by(x_t^a(\by)) = s_t^a(\by)[y_t]\quad \text{and}\quad f_b^\by(x_t^b(\by)) = s_t^b(\by)[y_t],$$
	If we further have $h((2a-1)\beta, (2b-1)\beta) > \alpha$, we construct a depth-$(d+1)$ $\calX$-valued tree $\bx$ with root $x_0$, left subtree of the root to be $\bx^a$, and right subtree of the root to be $\bx^b$, and also a depth-$(d+1)$ $U_\beta\times U_\beta$-valued tree $\bs$ with root $((2a-1)\beta, (2b-1)\beta)$, left subtree of the root to be $\bs^a$, and right subtree of the root to be $\bs^b$. Then we can verify that for any $\by\in \{0, 1\}^{d+1}$, and any $t\in [d+1]$, we have $s_t^b(\by)[0] < s_t^b(\by)[1]$, and $h(s_t(\by)[0], s_t(\by)[1]) > \alpha$. Further we let $\by' = (y_2, y_3, \cdots, y_{d+1})\in \{0, 1\}^d$, and if $y_1 = 0$, then by letting $f^\by = f_a^{\by'}$ we can verify that $f^\by(x_t(\by)) = s_t(\by)[y_t]$ for any $t\in [d+1]$, and if $y_1 = 1$, then by letting $f^\by = f_b^{\by'}$ we can verify that $f^\by(x_t(\by)) = s_t(\by)[y_t]$ for any $t\in [d+1]$. Hence, $\calF$ is shattered by tree $\bx$ of depth-$(d+1)$, leading to contradiction. Therefore, we have  
	\begin{equation}\label{eq: root-close}
		h((2a-1)\beta, (2b-1)\beta) \le \alpha,\qquad \forall a, b\in \calK
	\end{equation}
	
	Next, for any $k\in [M]$ with $\frakd(\calF_k, \alpha)\le d-1$, according to induction hypothesis $\mathfrak{G}(n-1, d-1)$, there exists a sequential cover $\calV_k^l$ of size $g_\beta(n-1, d-1)$ for the depth-$(n-1)$ $\calX$-valued tree $\bx^l$, and also a sequential cover $\calV_k^r$ of size $g_\beta(n-1, d-1)$ for the depth-$(n-1)$ $\calX$-valued tree $\bx^r$. We then combine the elements in $\calV_k^l$ and $\calV_k^r$ into a set $\calV_k$ of depth-$n$ $U_\beta$-valued trees. We let $v_1 = (2k-1)\beta\in U_\beta$. Then according to the construction of $\calF_k$ we have for any $f\in \calF$, $f(x_1) = v_1$ hence $h(f(x_1), v_1)\le \alpha$. For $\bv^l\in \calV_k^l$ and $\bv^r\in \calV_k^r$, we define depth-$n$ $U_\beta$-valued tree $\bv[\bv^l, \bv^r]$ as: for any path $\by\in \{0, 1\}^n$, we let $\by' = (y_{2:n})\in \{0, 1\}^{n-1}$, and let $v_1[\bv^l, \bv^r](\by) = v_1$. If $y_1 = 0$, then let $v_t[\bv^l, \bv^r](\by) = v_{t-1}^l(\by')$, and if $y_1 = 1$, then let $v_t[\bv^l, \bv^r](\by) = v_{t-1}^r(\by')$. We construct $\calV_k = \{\bv[\bv^l, \bv^r]\}$ with $|\calV_k|\le \max\{|\calV_k^l|, |\calV_k^r|\}$ to make sure that every element in $\calV_k^l$ and $\calV_k^r$ at least appear once in the construction of $\calV_k$. Next, we will argue that $\calV_k$ is a $\alpha$-sequential cover of $\calF_k\circ\bx$. For any $f\in \calF_k$ and $\by\in \{0, 1\}^n$, if $y_1 = 0$, then since $\calV_k^l$ is a $\alpha$-sequential cover of $\calF_k$, there exists $\bv^l\in \calV_k^l$ such that for any $2\le t\le n$, $h(f(x_t(\by)), v_t^l(\by))\le \alpha$. Suppose $\bv = \bv[\bv^l, \bv^r]\in \calV_k$ for some $\bv^r\in \calV_k^r$, and we also have $h(f(x_1(\by)), v_1(\by))\le \alpha$ according to the construction of $\calF_k$. Hence for any $t\in [n]$, we always $h(f(x_t(\by)), v_t(\by))\le \alpha$. Therefore, $\calV'$ is a cover of $\calF_k$. Further by induction hypothesis we have $\max\{|\calV_k^l|, |\calV_k^r|\}\le g_\beta(n-1, d-1)$. Hence $|\calV_k|\le g_\beta(n-1, d-1)$.

	If $\calK = \emptyset$, then by letting $\calV = \cup_{k\in [M]} \calV_k$, $\calV$ will be a $\alpha$-sequential cover of $\calF\circ \bx$, and also 
	$$|\calV| \le M\cdot g_\beta(n-1, d-1)\le g_\beta(n-1, d) + (M-1)g_\beta(n-1, d-1) = g_\beta(n, d),$$
	where the inequality follows from the fact that $g_\beta(n-1, d-1)\le g_\beta(n-1, d)$ for any $n, d$, and the last equation follows from \pref{eq: equation-g-beta}.
	
	Next, we consider cases where $|\calK|\ge 1$ We construct $\calF' = \cup_{k\in \calK} \calF_k$, then we have $\frakd(\calF', \alpha)\le \frakd(\calF, \alpha) = d$. According to the induction hypothesis $\mathfrak{G}(n-1, d)$, there exists a sequential cover $\calV^l$ of size $g_\beta(n-1, d)$ for the depth-$(n-1)$ $\calX$-valued tree $\bx^l$, and also a sequential cover $\calV^l$ of size $g_\beta(n-1, d)$ for the depth-$(n-1)$ $\calX$-valued tree $\bx^l$. We then combine the elements in $\calV^l$ and $\calV^r$ into a $\calV'$ of depth-$n$ $U_\beta$-valued trees. We let $v_1 = f(x_1)\in U_\beta$ for some $f\in \calF'$. Then according to the construction of $\calF'$ we have for any $f\in \calF'$, $h(f(x_1), v_1)\le \alpha$. For $\bv^l\in \calV^l$ and $\bv^r\in \calV^r$, we define depth-$n$ $U_\beta$-valued tree $\bv[\bv^l, \bv^r]$ as: for any path $\by\in \{0, 1\}^n$, we let $\by' = (y_{2:n})\in \{0, 1\}^{n-1}$, and let $v_1[\bv^l, \bv^r](\by) = v_1$. If $y_1 = 0$, then let $v_t[\bv^l, \bv^r](\by) = v_{t-1}^l(\by')$, and if $y_1 = 1$, then let $v_t[\bv^l, \bv^r](\by) = v_{t-1}^r(\by')$. We construct $\calV' = \{\bv[\bv^l, \bv^r]\}$ with $|\calV'|\le \max\{|\calV^l|, |\calV^r|\}$ to make sure that every element in $\calV^l$ and $\calV^r$ at least appear once in the construction of $\calV'$. Next, we will argue that $\calV'$ is a $\alpha$-sequential cover of $\calF'\circ\bx$. For any $f\in \calF'$ and $\by\in \{0, 1\}^n$, if $y_1 = 0$, then since $\calV^l$ is a $\alpha$-sequential cover of $\calF'$, there exists $\bv^l\in \calV^l$ such that for any $2\le t\le n$, $h(f(x_t(\by)), v_t^l(\by))\le \alpha$. Suppose $\bv = \bv[\bv^l, \bv^r]\in \calV'$ for some $\bv^r\in \calV^r$, and we also have $h(f(x_1(\by)), v_1(\by))\le \alpha$ according to \pref{eq: root-close} and the construction of $\calF'$. Hence for any $t\in [n]$, we always $h(f(x_t(\by)), v_t(\by))\le \alpha$. Therefore, $\calV'$ is a cover of $\calF'$. Further by induction hypothesis we have $\max\{|\calV^l|, |\calV^r|\}\le g_\beta(n-1, d)$. Hence $|\calV'|\le g_\beta(n-1, d)$.

	We further let $\calV = \calV'\cup\left(\cup_{k\not\in \calK} \calV_k\right)$, and we have 
	$$|\calV| \le (M-1)\cdot g_\beta(n-1, d-1) + g_\beta(n-1, d) = g_\beta(n, d),$$
	where the last equation follows from \pref{eq: equation-g-beta}. Above all, we finish the proof of induction statement $\mathfrak{G}(n, d)$.

	Hence by induction, we have 
	$$\calNsq(\calF\circ \bx, \alpha, n)\le g_\beta(n, \frakd(\calF, \alpha)) = \sum_{i=0}^{\frakd(\calF, \alpha)} \binom{n}{i}\cdot \left(M - 1\right)^i\le \left(\frac{en}{\beta \frakd(\calF, \alpha)}\right)^{\frakd(\calF, \alpha)}\le \left(\frac{en}{\beta}\right)^{\frakd(\calF, \alpha)}$$
\end{proof}

Equipped with \pref{lem: shattering-finite}, we are ready to prove \pref{prop: covering-fat} that works for real-valued function classes.
\begin{proof}[Proof of \pref{prop: covering-fat}]
	For $\beta > 0$ where $1/(2\beta)$ is an integer, we let $M = 1/(2\beta)$, and we define 
	$$U_\beta = \left\{\beta, 3\beta, 5\beta, \cdots, (2M-1)\beta\right\}.$$
	And for every $u\in [0, 1]$, we define $\lfloor u\rfloor_\beta = \argmin_{r\in U_\beta}|u - r|$. For any function $f\in \calF$, we define $\lfloor f\rfloor_\beta: \calX\to U_\beta$ as
	$$\lfloor f\rfloor_\beta(x)\coloneqq \lfloor f(x)\rfloor_\beta.$$ 
	We further let $\lfloor \calF\rfloor_\beta = \{\lfloor f\rfloor_\beta: f\in \calF\}$. According to \pref{def: def-fat-shattering} and \pref{def: shattering-finite} we know that if $\lfloor\calF\rfloor$ is shattered by some $\calX$-valued tree $\bx$ at scale $\alpha$, then $\bx$ also also $(\alpha, \beta)$-shattered by $\calF$. Hence we have 
	$$\frakd(\calF, \alpha, \beta)\ge \frakd(\lfloor \calF\rfloor_\beta, \alpha).$$
	Hence \pref{lem: shattering-finite} gives that for any depth-$n$ $\calX$-valued tree $\bx$, 
	$$\calN(\lfloor\calF\rfloor_\beta\circ \bx, \alpha, n)\le \left(\frac{en}{\beta}\right)^{\frakd(\lfloor \calF\rfloor_\beta, \alpha)}\le \left(\frac{en}{\beta}\right)^{\frakd(\calF, \alpha, \beta)}.$$
	We let $\calV$ to be the covering of $\lfloor \calF\rfloor_\beta$ at scale $\alpha$ with size no more than $(en/\beta)^{\frakd(\calF, \alpha, \beta)}$. Hence for any $f\in \calF$, $\bx\in \calX$ and $\by\in \{0, 1\}^n$, there exists $\bv\in \calV$ such that for any $t\in [n]$, 
	$$h\left(\lfloor f\rfloor_\beta(x_t(\by)), v_t(\by)\right)\le \alpha.$$
	According to the construction of $\lfloor f\rfloor_\beta$, we have $|\lfloor f\rfloor_\beta(x_t(\by)) - f(x_t(\by))|\le \beta$, which implies that
	\begin{align*}
		&\hspace{-0.5cm} h(\lfloor f\rfloor_\beta(x_t(\by)), f(x_t(\by)))^2\\
		& \le \left(\sqrt{\lfloor f\rfloor_\beta(x_t(\by))} - \sqrt{f(x_t(\by))}\right)^2 + \left(\sqrt{1 - \lfloor f\rfloor_\beta(x_t(\by))} - \sqrt{1 - f(x_t(\by))}\right)^2\\
		& \le \left|\lfloor f\rfloor_\beta(x_t(\by)) - f(x_t(\by))\right| + \left|1 - \lfloor f\rfloor_\beta(x_t(\by)) - (1 - f(x_t(\by)))\right|\le 2\beta,
	\end{align*}
	where the first inequality uses the fact that $(a - b)^2\le (a - b)(a + b)$ for $a, b\ge 0$. Therefore, for any $t\in [t]$, we have
	$$h(f(x_t(\by)), v_t(\by))\le h(\lfloor f\rfloor_\beta(x_t(\by)), f(x_t(\by))) + h\left(\lfloor f\rfloor_\beta(x_t(\by)), v_t(\by)\right)\le \alpha + \sqrt{2\beta},$$
	which implies that $\calV$ is an $(\alpha + \sqrt{2\beta})$ sequential covering of $\calF$, i.e. 
	$$\calN(\calF\circ\bx, \alpha + \sqrt{2\beta}, n)\le |\calV|\le \left(\frac{en}{\beta}\right)^{\frakd(\calF, \alpha, \beta)}.$$
	Taking supremum over $\bx$, we obtain that 
	$$\sup_\bx \calHsq(\calF, \alpha + \sqrt{2\beta}, n, \bx) = \sup_{\bx}\log \calN(\calF\circ \bx, \alpha + \sqrt{2\beta}, n)\le \frakd(\calF, \alpha, \beta)\log\left(\frac{en}{\beta}\right).$$
\end{proof}

\subsubsection{Proof of \pref{prop: lower-bound}}
Before proving \pref{prop: lower-bound}, we  present the following helper lemma.
\begin{lemma}\label{lem: empirical-log}
	Suppose real number $p, \alpha, \beta\in [0, 1]$ and integer $n$ satisfies $\beta < \alpha^2$ and 
	\begin{equation}\label{eq: condition-alpha}
		\alpha + \beta < p < 1 - \alpha - \beta,\quad \text{and}\quad n\le \frac{p(1-p)}{324\alpha^2}\vee 1.
	\end{equation}
	We collect $n$ samples from $\ber(p)$ to form an empirical estimation $\hp\in [0, 1]$, and we define 
	\begin{equation}\label{eq: def-epsilon}
		\epsilon = \begin{cases} 1 &\quad \text{if }\hp\ge p\\-1 &\quad \text{if }\hp < p.\end{cases}
	\end{equation}
	Then we have 
	\begin{equation} \label{eq: empirical-alpha}
		\EE\left[\hp\log\frac{p + \epsilon \alpha - \beta}{p} + (1-\hp)\log\frac{1-p - \epsilon \alpha - \beta}{1-p}\right]\ge \frac{\alpha^2}{8p(1-p)},
	\end{equation}
	where the expectation is over $\hp$ and $\epsilon$. Additionally, when choosing $n = \lfloor \frac{p(1-p)}{324\alpha^2}\rfloor\vee 1$, we have
	\begin{equation}\label{eq: empirical-n}
		n\cdot \EE\left[\hp\log\frac{p + \epsilon \alpha - \beta}{p} + (1-\hp)\log\frac{1-p - \epsilon \alpha - \beta}{1-p}\right]\ge \frac{1}{5184}.
	\end{equation}
\end{lemma}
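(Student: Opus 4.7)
I will split $L$ into a deterministic bias term in $\epsilon$ plus a random term driven by $\hp-p$, arranged so that the coupling between $\epsilon$ and $\hp$ is always favorable. Writing $\delta=\hp-p$, $a_\epsilon=\log(1+(\epsilon\alpha-\beta)/p)$ and $b_\epsilon=\log(1-(\epsilon\alpha+\beta)/(1-p))$,
\[
L \;=\; \underbrace{pa_\epsilon + (1-p)b_\epsilon}_{\Phi(\epsilon)} \;+\; \delta\cdot\underbrace{(a_\epsilon - b_\epsilon)}_{A_\epsilon}.
\]
The key sign observation is that $\mathrm{sign}(A_\epsilon)=\epsilon$: to leading order $A_\epsilon \approx \epsilon\alpha/(p(1-p))$, and the $\beta$-perturbation is swallowed because $\beta<\alpha^2\le\alpha$ (note $\alpha<1$ follows from $\alpha+\beta<p\le 1$). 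Under the rule $\epsilon=\mathrm{sign}(\delta)$ this gives $\delta A_\epsilon=|\delta|\cdot|A_\epsilon|\ge 0$ pointwise, so $\EE[L]\ge \Phi(\epsilon)+\EE|\delta|\cdot |A_\epsilon|$, and the task reduces to bounding the three quantities.

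\textbf{Quantitative bounds.} Using $\log(1+x)=x-x^2/2+O(x^3)$ on the arguments $(\epsilon\alpha-\beta)/p$ and $-(\epsilon\alpha+\beta)/(1-p)$ (both in $(-1,1)$ by $\alpha+\beta<p<1-\alpha-\beta$), the first-order terms in $\Phi(\epsilon)$ cancel up to a residual $-2\beta$, yielding $\Phi(\epsilon)\ge -2\beta - C_1\alpha^2/(p(1-p))$ for an explicit moderate constant $C_1$. Since $\beta<\alpha^2$ and $p(1-p)\le 1/4$, the $-2\beta$ itself is dominated by a constant multiple of $\alpha^2/(p(1-p))$, so I can absorb it into $\Phi(\epsilon)\ge -C_2\alpha^2/(p(1-p))$. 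The same expansion gives $|A_\epsilon|\ge (\alpha/(p(1-p)))(1-o(1))$. For $\EE|\delta|$ I will apply Paley--Zygmund to $S_n=n\hp\sim B(n,p)$: combining $\EE(S_n-np)^2=np(1-p)$ with the binomial fourth-moment bound $\EE(S_n-np)^4\le 3(np(1-p))^2+np(1-p)$ and threshold $\tfrac12\sqrt{np(1-p)}$ produces $\Pr(|\delta|\ge\tfrac12\sqrt{p(1-p)/n})\ge 9/64$, hence $\EE|\delta|\ge (9/128)\sqrt{p(1-p)/n}$ whenever $np(1-p)\ge 1$. The degenerate case $np(1-p)<1$ forces $n=1$ under the hypothesis, and there $\EE|\delta|=2p(1-p)$ exactly.

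\textbf{Putting it together.} Substituting the hypothesis $n\le p(1-p)/(324\alpha^2)$ yields $\sqrt{p(1-p)/n}\ge 18\alpha$, so
\[
\EE|\delta|\cdot|A_\epsilon| \;\ge\; \tfrac{9}{128}\cdot 18\cdot \tfrac{\alpha^2}{p(1-p)}\cdot(1-o(1)),
\]
which dominates $-\Phi(\epsilon)$ by a clear margin and leaves $\EE[L]\ge \alpha^2/(8p(1-p))$ once the constants are tuned, proving \eqref{eq: empirical-alpha}. The companion display \eqref{eq: empirical-n} follows by a short case split: if $\lfloor p(1-p)/(324\alpha^2)\rfloor\ge 2$, then $\lfloor x\rfloor\ge x/2$ yields $n\ge p(1-p)/(648\alpha^2)$, so $n\cdot \alpha^2/(8p(1-p))\ge 1/5184$; otherwise $n=1$ with $\alpha^2\ge p(1-p)/648$, and \eqref{eq: empirical-alpha} already gives $\EE[L]\ge 1/5184$.

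\textbf{Main obstacle.} The sign argument $\delta A_\epsilon=|\delta|\cdot|A_\epsilon|$ is clean, but ensuring that the numerical constant $1/8$ really survives all of the Taylor remainders (in both $\Phi(\epsilon)$ and $|A_\epsilon|$, whose cubic corrections carry factors like $\alpha/p$ that are not uniformly negligible) together with the Paley--Zygmund factor $9/128$ is the delicate part. A practical route is to carve the parameter region into a ``generic'' regime $\alpha\le\tfrac12\min(p,1-p)$, where all Taylor arguments are bounded by $1/2$ and explicit cubic bounds close the gap, and a ``near-boundary'' regime, where the hypothesis $n\le p(1-p)/(324\alpha^2)\vee 1$ forces $n=1$ and the exact Bernoulli first absolute moment $\EE|\delta|=2p(1-p)$ finishes the computation directly.
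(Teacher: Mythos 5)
Your skeleton is essentially the paper's: both arguments isolate a favorable term of the form $\epsilon(\hp-p)\cdot\Theta(\alpha/(p(1-p)))$, lower-bound $\EE|\hp-p|$, and show the Taylor remainders are $O(\alpha^2/(p(1-p)))$. But two concrete steps fail. First, the Paley--Zygmund constant is too weak for the inequality to close. Your gain is $\EE|\delta|\cdot|A_\epsilon|\ge \tfrac{9}{128}\cdot 18\cdot\tfrac{\alpha^2}{p(1-p)}\cdot(1-o(1))\approx 1.27\cdot\tfrac{\alpha^2}{p(1-p)}$, while the loss from $\Phi(\epsilon)$ is already $\approx\bigl(2\beta+\tfrac{(\alpha+\beta)^2}{2p(1-p)}\bigr)\gtrsim \tfrac{\alpha^2}{p(1-p)}$ before any cubic corrections, and the target adds another $\tfrac{1}{8}\cdot\tfrac{\alpha^2}{p(1-p)}$. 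The residual slack of order $0.1$ is wiped out by the very corrections you flag as ``delicate'': e.g.\ at $\alpha=p/2$ (inside your ``generic'' regime) the second-order term in $A_\epsilon$ alone costs a factor $\approx 1-\tfrac{\alpha}{2p}=3/4$, and $1.27\times 0.75<1.15$. The paper avoids this entirely by using the sharp first-absolute-moment bound $\EE|\hp-p|\ge\sqrt{p(1-p)/(2n)}$ (Khintchine/Berend--Kontorovich), whose constant $1/\sqrt2$ is ten times larger than your $9/128$ and yields a gain of $\approx 12.7\cdot\tfrac{\alpha^2}{p(1-p)}$ against a loss of $\le 2\cdot\tfrac{\alpha^2}{p(1-p)}$. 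With Paley--Zygmund your constants do not tune.

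Second, the case split is based on a false claim: $\alpha>\tfrac12\min(p,1-p)$ does \emph{not} force $n=1$. Take $p=10^{-3}$, $\alpha=6\times 10^{-4}$: then $p(1-p)/(324\alpha^2)\approx 8.6$, so $n$ may be as large as $8$. This regime is exactly where the argument is dangerous, because when $\epsilon=-1$ the term $\log\bigl(1-(\alpha+\beta)/p\bigr)$ can be arbitrarily negative and the expansion $\log(1+x)\ge x-x^2$ is invalid ($x<-1/2$), so neither your bound on $\Phi(-1)$ nor the ``exact first absolute moment at $n=1$'' applies. The paper's corresponding case ($\alpha+\beta\ge p/2$) is rescued by a different observation: there $n\le \tfrac{p(1-p)}{324\alpha^2}\vee 1<1/p$, so $\hp<p$ forces $\hp=0$, and the dangerous logarithm appears only with coefficient $\hp=0$. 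Some such mechanism is needed in your write-up; without it the near-boundary regime is simply not covered.
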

\begin{proof}[Proof of \pref{lem: empirical-log}]
	Without loss of generality, we assume $p \le 1/2$ (otherwise we replace $p$ with $1 - p$ and the argument follows similarly). Then we have $\alpha < 1/2$ and $\beta < 1/4$. Consider the following three cases:
	\begin{enumerate}[label=$(\roman*)$]
		\item $1/36 < \alpha \le 1/2$,
		\item $\alpha + \beta \ge p/2$ and $\alpha < 1/36$,
		\item $\alpha + \beta < p/2$ and $\alpha < 1/36$.
	\end{enumerate}

	When $1/36 < \alpha < 1/2$, since $p(1-p)\le 1/4$ and $\alpha^2\ge 1/1296$, we always have $n = 1$, which implies 
	\begin{align*}
		&\hspace{-0.5cm} \EE\left[\hp\log\frac{p + \epsilon \alpha - \beta}{p} + (1-\hp)\log\frac{1-p - \epsilon \alpha - \beta}{1-p}\right]\\
		& = p\cdot \log\left(1 + \frac{\alpha - \beta}{p}\right) + (1-p)\cdot \log\left(1 + \frac{\alpha - \beta}{1 - p}\right)\stackrel{(i)}{\ge} \frac{\alpha - \beta}{2}\stackrel{(ii)}{\ge} \frac{\alpha}{4}\stackrel{(iii)}{\ge} \frac{\alpha^2}{8p(1-p)},
	\end{align*}
	where $(i)$ uses $\alpha - \beta > 0$ and either $(\alpha - \beta) / p > 1/2$ or $(\alpha - \beta) / (1 - p) > 1/2$ and $\log(1 + x)\ge x/2$ for $0\le x\le 2$, $(ii)$ uses the fact that $\alpha\le 1/2$ and $\beta\le \alpha^2$, and $(iii)$ uses the fact $\alpha\le p$ and $1 - p\ge 1/2$.

	When $\alpha + \beta > p/2$ and $\alpha < 1/36$, since $\beta < \alpha^2$ we have $\alpha > p/3$, which implies that 
	$$n\le \frac{p(1-p)}{324\alpha^2}\vee 1 < 1/p.$$
	Hence $\hp < p$, i.e. $\epsilon = -1$ if and only if $\hp = 0$. Therefore,
	\begin{align*}
		&\hspace{-0.5cm}\EE\left[\hp\log\frac{p + \epsilon \alpha - \beta}{p} + (1-\hp)\log\frac{1-p - \epsilon \alpha - \beta}{1-p}\right]\\
		& = \Pr(\hp = 0)\cdot \log\left(1 + \frac{\alpha - \beta}{1-p}\right) + \EE\left[\left(\hp\log\frac{p + \alpha - \beta}{p} + (1-\hp)\log\frac{1-p - \alpha - \beta}{1-p}\right)\cdot \mathbb{I}[\hp > 0]\right]. \numberthis \label{eq: hp-second-case}
	\end{align*}
	Since $p < 1/2$, we have $\alpha + \beta < 1/36 + 1/36^2 < (1-p)/2$, which implies $(\alpha + \beta)/(1-p)\le 1/2$. After noticing that $\log(1 + x)\ge x - x^2$ holds for all $x\ge -1/2$ and also $\alpha - \beta > 0$, we can further upper bound \pref{eq: hp-second-case} by
	\begin{align*}
		&\hspace{-0.5cm} \Pr(\hp = 0)\cdot \left(\left(\frac{\alpha - \beta}{1 - p}\right) - \left(\frac{\alpha - \beta}{1 - p}\right)^2\right)\\
		&\qquad + \EE\left[\left(\hp\cdot \left(\left(\frac{\alpha - \beta}{p}\right) - \left(\frac{\alpha - \beta}{p}\right)^2\right) + (1-\hp)\cdot \left(\left(\frac{ - \alpha - \beta}{1 - p}\right) - \left(\frac{ - \alpha - \beta}{1 - p}\right)^2\right)\right)\cdot \mathbb{I}[\hp > 0]\right]\\
		& = \EE\left[\hp\cdot \left(\left(\frac{\epsilon\alpha - \beta}{p}\right) - \left(\frac{\epsilon\alpha - \beta}{p}\right)^2\right) + (1 - \hp)\cdot \left(\left(\frac{ - \epsilon\alpha - \beta}{1- p}\right) - \left(\frac{ - \epsilon\alpha - \beta}{1 - p}\right)^2\right)\right]\\
		& \stackrel{(i)}{\ge} \EE\left[\frac{\epsilon \hp\alpha}{p} + \frac{-\epsilon(1-\hp)\alpha}{1-p}\right] - \beta\cdot \EE\left[\frac{\hp}{p} + \frac{1-\hp}{1-p}\right] - (\alpha + \beta)^2\cdot \EE\left[\frac{\hp}{p^2} + \frac{1-\hp}{(1-p)^2}\right]\\
		& \stackrel{(ii)}{=} \EE\left[\frac{\epsilon (\hp - p)\alpha}{p} + \frac{-\epsilon(p - \hp)\alpha}{1-p}\right] - 2\beta - (\alpha + \beta)^2\left(\frac{1}{p} + \frac{1}{1-p}\right)\\
		& = \alpha\cdot \EE\left[\frac{|\hp - p|}{p(1 - p)}\right] - 2\beta - \frac{(\alpha + \beta)^2}{p(1 - p)}\\
		& \stackrel{(iii)}{\le} \alpha\cdot \EE\left[\frac{|\hp - p|}{p(1 - p)}\right] - \frac{2\alpha^2}{p(1-p)}
	\end{align*}
	where $(i)$ uses $|\epsilon \alpha - \beta|\le \alpha + \beta$, $|-\epsilon \alpha - \beta|\le \alpha + \beta$ and $\EE[\hp] = p$, $(ii)$ uses the definition of $\epsilon$ in \pref{eq: def-epsilon} and $(iii)$ uses the fact that $0 < \beta\le \alpha^2\le 1/1296$. According to Khintchine inequality \cite{haagerup1981best}, we have 
	$$\EE[|\hp - p|] \ge \sqrt{\frac{p(1-p)}{2n}},$$
	which implies 
	\begin{align*}
		\text{LHS of }\pref{eq: empirical-alpha} & \ge \frac{\alpha}{\sqrt{2n p(1-p)}} - \frac{2\alpha^2}{p(1 - p)}\ge \frac{\alpha^2}{p(1-p)},
	\end{align*}
	where the last inequality uses the fact that $$n\le \frac{p(1-p)}{324\alpha^2}\vee 1\quad \text{and}\quad \frac{\alpha}{\sqrt{p(1-p)}}\le \sqrt{\alpha}\cdot \sqrt{\frac{p}{p(1-p)}}\le \frac{1}{3\sqrt{2}}.$$

	When $\alpha + \beta < p/2$ and $\alpha < 1/36$, we have $(p-\alpha -\beta)/p\ge 1/2$ and also $(1-p-\alpha - \beta)/(1-p)\ge 1/2$, then for any $\epsilon \in \{-1, 1\}$,
	$$\frac{p + \epsilon\alpha - \beta}{p}\ge \frac{1}{2},\quad \text{and}\quad \frac{1 - p - \epsilon\alpha - \beta}{1 - p}\ge \frac{1}{2}.$$
	Notice that $\log(1 + x)\ge x - x^2$ holds for all $x\ge -1/2$, which implies 
	\begin{align*}
		&\hspace{-0.5cm} \text{LHS of }\pref{eq: empirical-alpha}\\
		& \ge \EE\left[\hp\cdot \left(\left(\frac{\epsilon\alpha - \beta}{p}\right) - \left(\frac{\epsilon\alpha - \beta}{p}\right)^2\right) + (1 - \hp)\cdot \left(\left(\frac{ - \epsilon\alpha - \beta}{1- p}\right) - \left(\frac{ - \epsilon\alpha - \beta}{1 - p}\right)^2\right)\right]\\
		& = \EE\left[\hp\left(\frac{\epsilon\alpha - \beta}{p}\right) + (1 - \hp)\left(\frac{ - \epsilon\alpha - \beta}{1- p}\right) - \hp\left(\frac{\epsilon\alpha - \beta}{p}\right)^2 - (1-\hp)\left(\frac{ - \epsilon\alpha - \beta}{1 - p}\right)^2\right]\\
		& \stackrel{(i)}{\ge} \EE\left[\frac{\epsilon \hp\alpha}{p} + \frac{-\epsilon(1-\hp)\alpha}{1-p}\right] - \beta\cdot \EE\left[\frac{\hp}{p} + \frac{1-\hp}{1-p}\right] - (\alpha + \beta)^2\cdot \EE\left[\frac{\hp}{p^2} + \frac{1-\hp}{(1-p)^2}\right]\\
		& \stackrel{(ii)}{=} \EE\left[\frac{\epsilon (\hp - p)\alpha}{p} + \frac{-\epsilon(p - \hp)\alpha}{1-p}\right] - 2\beta - (\alpha + \beta)^2\left(\frac{1}{p} + \frac{1}{1-p}\right)\\
		& = \alpha\cdot \EE\left[\frac{|\hp - p|}{p(1 - p)}\right] - 2\beta - \frac{(\alpha + \beta)^2}{p(1 - p)}\\
		& \stackrel{(iii)}{\le} \alpha\cdot \EE\left[\frac{|\hp - p|}{p(1 - p)}\right] - \frac{2\alpha^2}{p(1-p)}
	\end{align*}
	where $(i)$ uses $|\epsilon \alpha - \beta|\le \alpha + \beta$, $|-\epsilon \alpha - \beta|\le \alpha + \beta$ and $\EE[\hp] = p$, $(ii)$ uses the definition of $\epsilon$ in \pref{eq: def-epsilon} and $(iii)$ uses the fact that $0 < \beta\le \alpha^2\le 1$. According to \cite[Theorem 1]{berend2013sharp}, we have 
	$$\EE[|\hp - p|] \ge \sqrt{\frac{p(1-p)}{2n}},$$
	which implies 
	\begin{align*}
		\text{LHS of }\pref{eq: empirical-alpha} & \ge \frac{\alpha}{\sqrt{2n p(1-p)}} - \frac{2\alpha^2}{p(1 - p)}\ge \frac{\alpha^2}{p(1-p)},
	\end{align*}
	where the last inequality uses the fact that $$n\le \frac{p(1-p)}{324\alpha^2}\vee 1\quad \text{and}\quad \frac{\alpha}{\sqrt{p(1-p)}}\le \sqrt{\alpha}\cdot \sqrt{\frac{p}{p(1-p)}}\le \frac{1}{3\sqrt{2}}.$$
    Above all, we have verified \pref{eq: empirical-alpha}, and \pref{eq: empirical-n} follows from \pref{eq: empirical-alpha} directly.
\end{proof}

Now we are ready to prove \pref{prop: lower-bound}.
\begin{proof}[Proof of \pref{prop: lower-bound}]
	Let $x_0\in \calX$ be an arbitrary context. For fixed positive integer $n$, we let 
	$$\alpha_n = \argmax_{\alpha > 0}\left\{\frakd(\calF, \alpha, \alpha^4/16)\cdot \left(\left\lceil\frac{1}{162\alpha^2}\right\rceil\vee 1\right)\le n\right\}.$$
	Since $\frakd(\calF, \alpha, \alpha^4/16) = \tilde{\Omega}\left(\alpha^{-p}\right)$ for every $\alpha\ge 0$, we have 
	$$\frakd(\calF, \alpha_n, \alpha_n^4/16) = \tilde{\Omega}\left(n^{\frac{p}{p+2}}\right).$$
	In the following, we will prove that for any positive integer $n$, we have 
	$$\calR_n(\calF)\ge \frac{\frakd(\calF, \alpha_n, \alpha_n^4/16)}{5184}.$$

	We fix $n$, and let $\alpha = \alpha_n$, $d = \frakd(\calF, \alpha_n, \alpha_n^4/16)$. We let $\btx$ to be the depth-$d$ $\calX$-valued tree shattered by $\calF$ at scale $(\alpha_n, \alpha_n^4/16)$. Then according to \pref{def: def-fat-shattering}, there exists a depth-$d$ $[0, 1]\times [0, 1]$-valued tree $\bs$ such that for any path $\bty = (\ty_{1:d})\in \{0, 1\}^d$, there exists $f^\by\in \calF$ such that
	\begin{equation}\label{eq: eq-natarajan}
		\left|f^{\bty}(\tx_t(\bty)) - s_t(\bty)[\ty_t]\right| < \frac{\alpha^4}{16}\quad \text{and}\quad h(s_t(\bty)[0], s_t(\bty)[1])\ge \alpha\qquad \forall t\in [d],
	\end{equation}
	After noticing that $h(u, v)^2/2\le |u-v|$ for any $u, v\in [0, 1]$, we have
	$$\left|f^{\bty}(\tx_t(\bty)) - s_t(\bty)[\ty_t]\right|\le \frac{\left(s_t(\bty)[0] - s_t(\bty)[1]\right)^2}{4}$$
	We define depth-$d$ $[0, 1]$-valued $\bv$ as 
	\begin{equation}\label{eq: def-v-t}
		v_t(\bty) = \frac{s_t(\bty)[0] + s_t(\bty)[1]}{2},\quad \forall \by\in \{0, 1\}^n.
	\end{equation}
	We can further verify that
	\begin{align*} 
		&\hspace{-0.5cm} h(s_t(\bty)[0], s_t(\bty)[1])^2\\
		& = \left(s_t(\bty)[0] - s_t(\bty)[1])\right)^2 \\
		&\quad \cdot\max\left\{\frac{1}{(\sqrt{s_t(\bty)[0]} + \sqrt{s_t(\bty)[1]})^2}, \frac{1}{(\sqrt{1 - s_t(\bty)[0]} + \sqrt{1 - s_t(\bty)[1]})^2}\right\}\\
		& \le \left(s_t(\bty)[0] - s_t(\bty)[1])\right)^2\cdot \left(\frac{1}{s_t(\bty)[0] + s_t(\bty)[1]} + \frac{1}{1 - s_t(\bty)[0] + 1 - s_t(\bty)[1]}\right)\\
		& = \frac{2\left(s_t(\bty)[0] - s_t(\bty)[1])\right)^2}{v_t(\ty)(1-v_t(\ty))} \numberthis\label{eq: h-k-connection}
	\end{align*}

	Next, we will show $\calR_n(\calF)\ge d$. According to \pref{lem: dual-form}, for any $\bp\in \Delta(\{0, 1\}^n)$ and depth-$n$ $\calX$-valued tree $\bx$, we have
	\begin{equation}\label{eq: r-n-r-n-p}
		\calR_n(\calF)\ge \EE_{\by\sim \bp} \calR_n(\calF(\bx), \bp, \by).
	\end{equation}
	In the following, we will construct specific $\bp$ and $\bx$ so that the right-hand side in the above inequality is lower bounded by $d$. For a fixed a path $\by\in \{0, 1\}^n$, we first define an auxillary $\{0, 1\}$-path $\bty = (\ty_{1:d})\in \{0, 1\}^d$ of length-$d$ and also $d$ integers: $k_1, k_2, \cdots, k_d$ in the following way: calculate $\ty_{1:d}$ and $k_{1:d}$ by turn:
	\begin{equation}\label{eq: def-k-t}
		k_t = \left\lfloor\frac{v_t(\ty)(1-v_t(\ty))}{324\left(s_t(\bty)[1] - s_t(\bty)[0]\right)^2}\right\rfloor\vee 1,\qquad \forall t\in [d].
	\end{equation}
	and 
	\begin{equation}\label{eq: def-ty-t}
		\ty_t = \mathbb{I}\bigg\{\sum_{j=1}^{k_t} y_{k_1 + \cdots + k_{t-1} + j} \ge k_t\cdot v_t(\bty)\bigg\},\qquad \forall t\in [d],
	\end{equation}
	where $v_t(\bty)$ is defined in \pref{eq: def-v-t}.
	Notice that according to the above definition, $k_t$ only depends on $y_{1}, \cdots, y_{k_1 + \cdots + k_{t-1}}$, and $\ty_t$ depends on $y_1, \cdots, y_{k_1 + \cdots + k_t}$. Additionally, according to \pref{eq: h-k-connection} and \pref{eq: eq-natarajan}, we have 
	$$k_t\le \frac{1}{162\alpha^2}\vee 1,\qquad \forall t\in [d]$$
	which implies $k_1 + \cdots + k_d\le n$ always holds according to our choice of $\alpha = \alpha_n$. Hence $k_{1:d}$ and $\ty_{1:d}$ are all well-defined.

	The value of $(x_1(\by), x_t(\by), \cdots, x_n(\by))$ 
	are in the following form:
	$$(\underbrace{\tx_1(\bty), \tx_1(\bty), \cdots, \tx_1(\bty)}_{k_1 \text{ pieces}}, \underbrace{\tx_2(\bty), \tx_2(\bty), \cdots, \tx_2(\bty)}_{k_2\text{ pieces}}, \cdots, \underbrace{\tx_d(\bty), \tx_d(\bty), \cdots, \tx_d(\bty)}_{k_d\text{ pieces}}, \underbrace{x_0, x_0, \cdots, x_0}_{(n - k_1 - k_2 - \cdots - k_d)\text{ pieces}}),$$
	Similarly, the value of $(p_1(\by), p_t(\by), \cdots, p_n(\by))$ 
	are in the following form:
	$$(\underbrace{v_1(\bty), v_1(\bty), \cdots, v_1(\bty)}_{k_1 \text{ pieces}}, \underbrace{v_2(\bty), v_2(\bty), \cdots, v_2(\bty)}_{k_2 \text{ pieces}}, \cdots, \underbrace{v_d(\bty), v_d(\bty), \cdots, v_d(\bty)}_{k_d \text{ pieces}}, \underbrace{f^{\bty}(x_0), f^{\bty}(x_0)\cdots f^{\bty}(x_0)}_{(n - k_1 - k_2 - \cdots - k_d)\text{ pieces}}),$$
    where $x_0$ is the state we fixed at the beginning of the proof.

	Then we write $\calR_n(\calF(\bx), \bp, \by)$ in terms of segments $1$ to $d$, and after noticing that $f^\by\in \calF$ for any depth-$d$ path $\by\in \{-1, 1\}^d$, we obtain
	\begin{align*} 
		&\hspace{-0.5cm} \calR_n(\calF(\bx), \bp, \by)\\
		& = \sup_{f\in \calF} \Bigg\{\sum_{t=1}^d\sum_{j=1}^{k_d} \log\frac{f(y_{k_1 + \dots + k_{t-1}+j}\mid x_{k_1 + \dots + k_{t-1}+j}(\by))}{p_{k_1 + \dots + k_{t-1}+j}(y_{k_1 + \dots + k_{t-1}+j}\mid \by)}\\
		&\qquad + \sum_{j=1}^{n - k_1 - \cdots - k_d}\log\frac{f(y_{k_1 + \dots + k_d+j}\mid x_{k_1 + \dots + k_d+j}(\by))}{p_{k_1 + \dots + k_d+j}(y_{k_1 + \dots + k_d+j}\mid \by)}\Bigg\}\\
		& = \sup_{f\in \calF}\left\{\sum_{t=1}^d\sum_{j=1}^{k_d} \log\frac{f(y_{k_1 + \dots + k_{t-1}+j}\mid \tx_{t}(\bty))}{v_{t}(y_{k_1 + \dots + k_{t-1}+j}\mid \bty)} + \sum_{j=1}^{n - k_1 - \cdots - k_d}\log\frac{f(y_{k_1 + \dots + k_d+j}\mid x)}{f^{\bty}(y_{k_1 + \dots + k_d+j}\mid x_0)}\right\}\\
		& \ge \sum_{t=1}^d\sum_{j=1}^{k_d} \log\frac{f^{\bty}(y_{k_1 + \dots + k_{t-1}+j}\mid \tx_{t}(\bty))}{v_{t}(y_{k_1 + \dots + k_{t-1}+j}\mid \bty)},\numberthis\label{eq: R-n-lower-bound}
	\end{align*}
 	where the last step takes $f = f^\by\in \calF$. Next, for fixed $\by$, we define
	$$\hat{v}_t = \frac{1}{k_t} \sum_{j=1}^{k_t} y_{k_1 + \cdots + k_{t-1} + j},$$
	and let
	\begin{equation}\label{eq: def-gamma-t-bty}
            \gamma_t(\bty) = \frac{s_t(\bty)[1] - s_t(\bty)[0]}{2}.
        \end{equation}
	Then using inequality $h(u, v)^2/2\le |u - v|$ for any $u, v\in [0, 1]$, we have 
	\begin{equation}\label{eq: gamma-alpha}
		\gamma_t(\bty) \ge \frac{h(s_t(\bty)[1], s_t(\bty)[0])^2}{4}\ge \frac{\alpha^2}{4}.
	\end{equation}
	Notice that $\tx_t(\bty)$ and $v_t(\bty)$ is independent to $y_{k_1 + \cdots + k_{t-1} + 1}, \cdots, y_{k_1 + \cdots + k_{t-1} + k_t}$, we can rewrite \pref{eq: R-n-lower-bound} as
	\begin{align*}
		\calR_n(\calF(\bx), \bp, \by) & \ge \sum_{t=1}^d k_t\cdot \left(\hat{v}_t\log\frac{f^{\bty}(\tx_{t}(\bty))}{v_{t}(\bty)} + \left(1-\hat{v}_t\right)\log\frac{1 - f^{\bty}(\tx_{t}(\bty))}{1 - v_{t}(\bty)}\right).
	\end{align*}
	Next, we will calculate $\EE_{\by\sim \bp}\left[\calR_n(\calF(\bx), \bp, \by)\right]$. According to the above equation, we can separate the expectation into the sum of $d$ expectations as follows:
	\begin{align*} 
		&\hspace{-0.5cm} \EE_{\by\sim \bp}\left[\calR_n(\calF(\bx), \bp, \by)\right]\\
		& = \sum_{t=1}^d \EE\left[k_t\cdot \left(\hat{v}_t\log\frac{f^{\bty}(\tx_{t}(\bty))}{v_{t}(\bty)} + \left(1-\hat{v}_t\right)\log\frac{1 - f^{\bty}(\tx_{t}(\bty))}{1 - v_{t}(\bty)}\right)\right]\\
		& = \sum_{t=1}^d \EE\left[\EE\left[k_t\cdot \left(\hat{v}_t\log\frac{f^{\bty}(\tx_{t}(\bty))}{v_{t}(\bty)} + \left(1-\hat{v}_t\right)\log\frac{1 - f^{\bty}(\tx_{t}(\bty))}{1 - v_{t}(\bty)}\right)\ \bigg{|}\ y_{1:(k_1 + \cdots k_{t-1})}\right]\right]\\
		& \stackrel{(i)}{\ge} \sum_{t=1}^d \EE\left[\EE\left[k_t\cdot \left(\hat{v}_t\log\frac{s_{t}(\bty)[\ty_t] - \alpha^4/16}{v_{t}(\bty)} + \left(1-\hat{v}_t\right)\log\frac{1 - s_{t}(\bty)[\ty_t] - \alpha^4/16}{1 - v_{t}(\bty)}\right)\ \bigg{|}\ y_{1:(k_1 + \cdots k_{t-1})}\right]\right]\\
		& \stackrel{(ii)}{\ge} \sum_{t=1}^d \EE\bigg[\EE\bigg[k_t\cdot \bigg(\hat{v}_t\log\frac{v_{t}(\bty) + \epsilon(\hat{v}_t)\gamma_t(\bty) - \gamma_t(\bty)^2}{v_{t}(\bty)}\\
        &\qquad+ \left(1-\hat{v}_t\right)\log\frac{1 - v_t(\bty) - \epsilon(\hat{v}_t) v_{t}(\bty) - \gamma_t(\bty)^2}{1 - v_{t}(\bty)}\bigg)\ \bigg{|}\ y_{1:(k_1 + \cdots k_{t-1})}\bigg]\bigg]  \numberthis \label{eq: equation-lower-decompose}
	\end{align*}
	where $(i)$ uses the choice of $f^{\bty}$ in \pref{eq: eq-natarajan}, and in $(ii)$ we define 
	$$\epsilon(\hat{v}_t) = \begin{cases}
		1 &\quad \text{if } \hat{v}_t\ge v_t(\bty),\\
		-1 &\quad \text{if } \hat{v}_t < v_t(\bty).
	\end{cases}$$
	and it follows from our construciton of $\ty_t$ in \pref{eq: def-ty-t} and also $\alpha^2/4\le \gamma_t(\bty)$ from \pref{eq: gamma-alpha}. In \pref{eq: equation-lower-decompose}, conditioned on $y_{1:(k_1 + \cdots + k_{t-1})}$, there is no randomness on $k_t$, $v_t(\bty)$ and also $\tx_t(\bty)$, hence all the randomness of the inner expectation comes from $\hat{v}_t$ and also $s_t(\bty)[\ty_t]$. With our choice of $\gamma_t(\bty)$ in \pref{eq: def-gamma-t-bty}, we can further verify that $\gamma_t(\bty) + \gamma_t(\bty)^2\le s_t(\bty)[0] + \gamma_t(\bty)\le v_t(\bty)$. Hence, after noticing \pref{eq: def-k-t}, we can verify that the conditions in \pref{lem: empirical-log} hold with $\alpha = \gamma_t(\bty), \beta = \gamma_t(\bty)^2, p = v_t(\bty)$ and $n = k_t$. Additionally, according to our construction of $p_t(\by)$, when conditioned on $y_{1:(k_1 + \cdots + k_t)}$, we have 
	$$y_{k_1 + \cdots k_{t-1} + 1}, \cdots, y_{k_1 + \cdots k_{t-1} + k_t}\simiid v_t(\cdot\mid  \bty).$$
	Hence \pref{eq: empirical-n} in \pref{lem: empirical-log} implies that 
	$$\EE\left[k_t\cdot \left(\hat{v}_t\log\frac{v_{t}(\bty) + \epsilon(\hat{v}_t)\gamma_t(\bty) - \gamma_t(\bty)^2}{v_{t}(\bty)} + \left(1-\hat{v}_t\right)\log\frac{1 - v_t(\bty) - \epsilon(\hat{v}_t) v_{t}(\bty) - \gamma_t(\bty)^2}{1 - v_{t}(\bty)}\right)\ \bigg{|}\ y_{1:(k_1 + \cdots k_{t-1})}\right]\ge \frac{1}{5184}.$$
	Bringing this back to \pref{eq: equation-lower-decompose} and then further back to \pref{eq: r-n-r-n-p}, we obtain that 
	$$\calR_n(\calF)\ge \calR_n(\calF, \alpha, \alpha^4)/16\ge d\cdot \frac{1}{5184} = \frac{\frakd(\calF, \alpha, \alpha^4/16)}{5184} = \Omega(n^{\frac{p}{p+2}}).$$
\end{proof}

\subsection{Proof of \pref{thm: lower-bound-large-p}}
We present the proof of \pref{thm: lower-bound-large-p} in this section. We first present a lemma showing that when $f, p\in [c, 1-c]$, we have $h(f, p)\asymp |f - p|$.
\begin{lemma}\label{lem: tv-hellinger-c}
	Suppose $c$ to be a positive constant in $(0, 1/2)$, then for any $f, p\in [c, 1-c]$, we have 
	$$\frac{|f - p|}{\sqrt{2}}\le h(f, p)\le \frac{|f - p|}{2\sqrt{c}}.$$
\end{lemma}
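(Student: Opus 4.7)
The plan is to rewrite $h(f,p)$ in a form that makes the denominator explicit. For any $a, b \ge 0$ we have the identity
\[
|\sqrt{a}-\sqrt{b}| \;=\; \frac{|a-b|}{\sqrt{a}+\sqrt{b}},
\]
so applying this to $(a,b)=(f,p)$ and to $(a,b)=(1-f,1-p)$ gives
\[
h(f,p) \;=\; |f-p|\cdot\max\!\left\{\frac{1}{\sqrt{f}+\sqrt{p}},\,\frac{1}{\sqrt{1-f}+\sqrt{1-p}}\right\}.
\]
The whole lemma then reduces to bounding the quantity inside the $\max$ from below by $1/\sqrt{2}$ and from above by $1/(2\sqrt{c})$.

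For the upper bound, I would simply use the hypothesis $f,p\in[c,1-c]$: this forces $\sqrt{f}+\sqrt{p}\ge 2\sqrt{c}$ and $\sqrt{1-f}+\sqrt{1-p}\ge 2\sqrt{c}$, so both terms inside the $\max$ are at most $1/(2\sqrt{c})$, yielding $h(f,p)\le |f-p|/(2\sqrt{c})$.

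For the lower bound, the key observation is that the minimum of the two denominators cannot exceed $\sqrt{2}$. By Cauchy--Schwarz (or $(x+y)^2\le 2(x^2+y^2)$) we have $\sqrt{f}+\sqrt{p}\le\sqrt{2(f+p)}$ and $\sqrt{1-f}+\sqrt{1-p}\le\sqrt{2(2-f-p)}$, and since $(f+p)+(2-f-p)=2$ at least one of $f+p$ and $2-f-p$ is $\le 1$. Hence
\[
\min\bigl\{\sqrt{f}+\sqrt{p},\,\sqrt{1-f}+\sqrt{1-p}\bigr\}\;\le\;\sqrt{2}\cdot\min\!\bigl\{\sqrt{f+p},\sqrt{2-f-p}\bigr\}\;\le\;\sqrt{2},
\]
which gives $h(f,p)\ge |f-p|/\sqrt{2}$. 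Neither direction presents a real obstacle; the only mildly clever point is invoking the ``average equals one'' argument to get $\sqrt{2}$ as the uniform upper bound on the smaller denominator, independent of~$c$.
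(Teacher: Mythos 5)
Your proof is correct. The upper bound is identical to the paper's: both denominators $\sqrt{f}+\sqrt{p}$ and $\sqrt{1-f}+\sqrt{1-p}$ are at least $2\sqrt{c}$ under the hypothesis, and that is all that is used. For the lower bound the two arguments differ slightly in packaging, though they rest on the same two ingredients, namely $(x+y)^2\le 2(x^2+y^2)$ and the observation that $(f+p)+(2-f-p)=2$. The paper bounds $h(f,p)^2$ from below by half the \emph{sum} of the two squared root-differences, factors out $(f-p)^2$, and then uses the identity $\frac{1}{2(f+p)}+\frac{1}{2(2-f-p)}=\frac{1}{(f+p)(2-f-p)}\ge 1$. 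You instead exploit that the max of the reciprocals is the reciprocal of the min of the denominators, and show directly that the smaller denominator is at most $\sqrt{2}$ because at least one of $f+p$ and $2-f-p$ is at most $1$. Your version is arguably a touch cleaner since it discards one term rather than averaging both, but the content is the same and neither route is more general than the other. (Incidentally, the paper's displayed upper-bound step reads $h(f,p)\le (f-p)^2/(4c)$ where it should be $h(f,p)^2$; your write-up avoids that slip.)
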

\begin{proof}
	As for the lower bound part, we notice that 
	\begin{align*} 
		h(f, p)^2 & \ge \frac{1}{2}\cdot \left(\left(\sqrt{f} - \sqrt{p}\right)^2 + \left(\sqrt{1 - f} - \sqrt{1 - p}\right)^2\right)\\
		& = \frac{1}{2}\cdot (f - p)^2\cdot \left(\frac{1}{(\sqrt{f} + \sqrt{p})^2} + \frac{1}{(\sqrt{1 - f} + \sqrt{1 - p})^2}\right)\\
		& \ge \frac{1}{2}\cdot (f-p)^2\cdot \left(\frac{1}{2(f+p)} + \frac{1}{2(2-f-p)}\right)\\
		& \ge \frac{1}{2}(f - p)^2,
	\end{align*}
	where the second and third inequalities both use Cauchy-Schwarz inequality.

	Next we prove the upper bound part. Since $f, p\in [c, 1-c]$, we have 
	$$(\sqrt{f} + \sqrt{p})^2\ge 4c \quad \text{and}\quad (\sqrt{1-f} + \sqrt{1-p})^2\ge 4c,$$
	which implies that 
	$$h(f, p)\le (f-p)^2 \cdot \frac{1}{4c} = \frac{(f-p)^2}{4c}.$$
\end{proof}

We are now ready to prove \pref{thm: lower-bound-large-p}.
\begin{proof}[Proof of \pref{thm: lower-bound-large-p}]
	Since $\sup_{\bx} \calHsq (\calF, \alpha, n, \bx) = \tilde{\Omega}(\alpha^{-p})$, by choosing $\beta = \alpha^2/2$ in \pref{def: def-fat-shattering}, we obtain 
	$$\frakd(\calF, \alpha, \alpha^2/2) = \tilde{\Omega}\left(\alpha^{-p}\right),\quad \forall \alpha > 0.$$ 
	Hence if we choose $\beta$ such that
	$$\beta = \sup\left\{\beta\in (0, 1/16): \frakd(\calF, \beta, \beta^2/2)\ge n\right\},$$
	we have $\beta = \tilde{\Omega}\left(n^{-1/p}\right)$. And according to \pref{def: def-fat-shattering}, there exists a depth-$n$ $\calX$-valued tree $\bx$ shattered by $\calF$ at scale $(\beta, \beta^2/2)$, i.e. there exists a depth-$n$ $[0, 1]\times [0, 1]$-valued tree $\bs$ such that for any path $\by = (y_{1:d})\in \{0, 1\}^d$, $s_t(\by) = (s_t(\by)[0], s_t(\by)[1])$ with $s_t(\by)[0] < s_t(\by)[1]$, and also there exists $f^\by\in\calF$ such that 
	\begin{equation}\label{eq: condition-tv-hellinger}
		\left|f^\by(x_t(\by)) - s_t(\by)[y_t]\right| < \frac{\beta^2}{2} \quad \text{and}\quad h\left(s_t(\by)[0], s_t(\by)[1]\right) > \beta,\qquad \forall t\in [n].
	\end{equation}
	We further define $[0, 1]$-valued tree $\bu$ where for any $\by\in \{0, 1\}^d$ and $t\in [n]$, 
	$$u_t(\by) = \frac{s_t(\by)[0] + s_t(\by)[1]}{2}.$$
	Since $\calF\subseteq [7/16, 9/16]^\calX$, according to \pref{lem: tv-hellinger-c} we have for any $\by\in \{0, 1\}^n$, $3/8 \le s_t(\by)[0] < s_t(\by)[1] < 5/8$, which implies that 
	$$s_t(\by)[1] - s_t(\by)[0]\ge 2\sqrt{3/8}\cdot h\left(s_t(\by)[0], s_t(\by)[1]\right) > \beta.$$
	Hence we have $u_t(\by)\in [7/16, 9/16]$, and for any $\by\in \{0, 1\}^n$,
	\begin{align*} 
		f^\by(y_t\mid x_t(\by)) - u_t(y_t\mid \by) & \ge \beta - \frac{\beta^2}{2}\ge \frac{\beta}{2}.
	\end{align*}

	We next notice that for any $f\in \calF$, $\by\in \{0, 1\}^n$ and $t\in [n]$,
    $$\frac{f(y_t\mid x_t(\by))}{u_t(y_t\mid \by)}\ge f(y_t\mid x_t(\by))\ge 7/16,$$
    hence according to inequality $\log(1 + x)\ge x - 3x^2/2$ for any $x\ge -9/16$, we have
	\begin{equation}\label{eq: sum-log-lower-bound}
		\sum_{t=1}^n \log\frac{f(y_t\mid x_t(\by))}{u_t(y_t\mid \by)}\ge \sum_{t=1}^n\left\{\left(\frac{f(y_t\mid x_t(\by))}{u_t(y_t\mid \by)} - 1\right) - \frac{3}{2}\cdot \left(\frac{f(y_t\mid x_t(\by))}{u_t(y_t\mid \by)} - 1\right)^2\right\}.
	\end{equation}
	We choose $f = f^\by$ in the above inequality. After noticing that 
	$$f^\by(y_t\mid x_t(\by))\in \left[\frac{7}{16}, \frac{9}{16}\right],\qquad u_t(y_t\mid \by)\in \left[\frac{7}{16}, \frac{9}{16}\right]\quad \text{and}\quad f^\by(y_t\mid x_t(\by))\ge \frac{\beta}{2} + u_t(y_t\mid \by),$$
	we have 
	$$0 < \frac{f^\by(y_t\mid x_t(\by))}{u_t(y_t\mid \by)} - 1\le \frac{1-7/16}{7/16} - 1 = \frac{2}{7}.$$
	Therefore, we have
	\begin{align*}
		&\hspace{-0.5cm} \left(\frac{f^\by(y_t\mid x_t(\by))}{u_t(y_t\mid \by)} - 1\right) - \frac{3}{2}\cdot \left(\frac{f^\by(y_t\mid x_t(\by))}{u_t(y_t\mid \by)} - 1\right)^2\\
		& \ge \frac{4}{7}\left(\frac{f^\by(y_t\mid x_t(\by))}{u_t(y_t\mid \by)} - 1\right)\\
		& \stackrel{(i)}{\ge} f^\by(y_t\mid x_t(\by)) - u_t(y_t\mid \by)\ge \frac{\beta}{2},
	\end{align*}
	where inequality $(i)$ uses the fact that $u_t(y_t\mid \by)\le 9/16 < 4/7$. Bringing back to \pref{eq: sum-log-lower-bound}, we obtain that 
	$$\calR_n(\calF) = \EE_{\by\sim \bp}\left[\sup_{f\in \calF}\sum_{t=1}^n \log\frac{f^\by(y_t\mid x_t(\by))}{u_t(y_t\mid \by)}\right]\ge \frac{n\beta}{2} = \tilde{\Omega}\left(n^{\frac{p-1}{p}}\right).$$
\end{proof}

\section{Missing Proofs in \pref{sec: hilbert ball}}\label{sec: example-hilbert-app}
We first prove \pref{lem: truncation-hilbert}.
\begin{proof}[Proof of \pref{lem: truncation-hilbert}]
	Recall from \pref{lem: dual-form} we have 
	$$\calR_n(\calF) = \sup_{\bx}\sup_{\by\sim \bp}\left[\sum_{t=1}^n \log\frac{1}{p_t(y_t\mid \by)} - \inf_{f\in\calF} \sum_{t=1}^n\log\frac{1}{f(y_t\mid x_t(\by))}\right].$$
	Hence we only need to prove that for any path $\by = (y_{1:n}) \in \{0, 1\}^n$,
	\begin{equation}\label{eq: truncation-inequality-hilbert}
		\sup_{f\in\calF} \sum_{t=1}^n\log f(y_t\mid x_t(\by))\le \sup_{f\in\calF_{1/n}} \sum_{t=1}^n\log f(y_t\mid x_t(\by)) + 2.
	\end{equation}
	According to the definition of Hilbert ball class \pref{eq: def-f-hilbert}, for any $f\in\calF$, there exists $w\in B_2(1)$ such that
	$$f(y_t\mid x_t(\by)) = \frac{1 + (-1)^{y_t}\langle x_t(\by), w\rangle}{2}.$$
	Next, we notice that for any real number $a\in (-1, 1)$, we have
	\begin{align*} 
		\log \frac{a + 1}{2} & \le \log \frac{a + n/(n-1)}{2}\le \log \frac{(1 - 1/n)a + 1}{2} + \log\frac{n}{n-1}\\
		&\le \log \frac{(1 - 1/n)a + 1}{2} + \frac{1}{n-1}.
	\end{align*}
	Therefore, we obtain
	$$\frac{1 + (-1)^{y_t}\langle x_t(\by), w\rangle}{2}\le \frac{1 + (-1)^{y_t}\langle x_t(\by), (1-1/n)w\rangle}{2} + \frac{1}{n-1},$$
	which implies that 
	$$\sum_{t=1}^n\frac{1 + (-1)^{y_t}\langle x_t(\by), w\rangle}{2}\le \sum_{t=1}^n\frac{1 + (-1)^{y_t}\langle x_t(\by), (1-1/n)w\rangle}{2} + \frac{n}{n-1}.$$
	Since for any $w\in B_2(1)$, we always have $(1-1/n) w\in B_2(1-1/n)$, according to the definition of function class $\calF_{1/n}$ we have 
	\begin{align*} 
		&\hspace{-0.5cm} \sup_{f\in\calF} \sum_{t=1}^n\log f(y_t\mid x_t(\by))\\
		&\le \sup_{f\in\calF_{1/n}} \sum_{t=1}^n\log f(y_t\mid x_t(\by)) + \frac{n}{n-1}\\
		&\le \sup_{f\in\calF_{1/n}} \sum_{t=1}^n\log f(y_t\mid x_t(\by)) + 2,
	\end{align*}
	which proves \pref{eq: truncation-inequality-hilbert}.
\end{proof}

We next prove \pref{prop: tree-fat}. Our proof requires the following definition of skipping binary tree.
\begin{definition}
	For a given binary tree $\mathbf{x}$, we say a binary tree $\mathbf{y}$ is a skipping tree of $\mathbf{x}$ if
	\begin{enumerate}
		\item The set of vertices of $\mathbf{y}$ is a subset of the set of vertices of $\mathbf{x}$.
		\item For two vertices $a, b$ of $\mathbf{y}$, if $a$ is $b$'s left child in $\mathbf{y}$, then $a$ is a descendant of $b$'s left child in $\mathbf{x}$; and if $a$ is $b$'s right child in $\mathbf{y}$, then $a$ is a descendant of $b$'s right child in $\mathbf{x}$.
	\end{enumerate}
\end{definition}
We have the following properties of coloring over binary trees.
\begin{lemma}\label{lem: pigeonhole}
	We consider $k$-coloring over the vertices of a depth-$n$ binary tree $\mathbf{x}$, where each vertices has been colored in one of $k$ colors. Then when $n\ge k(d-1) + 1$, $\mathbf{x}$ has a skipping tree of depth $d$ whose nodes are of the same color.
\end{lemma}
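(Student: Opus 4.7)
The plan is to establish the lemma via a strengthened inductive invariant that controls all $k$ colors simultaneously. For any rooted subtree $S$ of a $k$-colored binary tree and any color $c\in[k]$, let $f_c(S)$ denote the largest $d'$ such that $S$ contains a monochromatic skipping tree of depth $d'$ and color $c$, with $f_c(S)=0$ if no such tree exists. The key step is to prove that for every $k$-colored binary tree $S$ of depth $n$,
\[
    \sum_{c=1}^k f_c(S) \;\geq\; n.
\]
Granted this, $k\cdot\max_c f_c(\bx)\geq \sum_c f_c(\bx)\geq n\geq k(d-1)+1 > k(d-1)$, so $\max_c f_c(\bx) > d-1$, and being integer-valued it is at least $d$, which is the conclusion of the lemma.

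I would prove the displayed inequality by strong induction on $n$. The base case $n=1$ is a single node of some color $c_r$ with $f_{c_r}(S)=1$ and $f_c(S)=0$ for $c\neq c_r$, so the sum is $1$. For the inductive step with $n\geq 2$, write $S$ with root $r$ of color $c_r$ and left/right subtrees $S_L,S_R$ of depth $n-1$, to which the inductive hypothesis applies. Two structural inequalities drive the step: (i) any monochromatic color-$c$ skipping tree inside a child subtree lies inside $S$, so $f_c(S)\geq \max(f_c(S_L),f_c(S_R))$ for every $c$; (ii) when $c=c_r$, taking $r$ as the root of a new skipping tree and grafting the deepest color-$c_r$ skipping trees from $S_L$ and $S_R$ as its two children yields $f_{c_r}(S)\geq 1+\min(f_{c_r}(S_L),f_{c_r}(S_R))$.

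Writing $x_c=f_c(S_L)$ and $y_c=f_c(S_R)$, combining these bounds gives
\[
    \sum_c f_c(S) \;\geq\; \bigl(1+\min(x_{c_r},y_{c_r})\bigr) \;+\; \sum_{c\neq c_r}\max(x_c,y_c).
\]
Using $\max(x_c,y_c)\geq x_c$ together with the inductive hypothesis for $S_L$ yields $\sum_{c\neq c_r}\max(x_c,y_c)\geq \sum_c x_c - x_{c_r} \geq (n-1)-x_{c_r}$, and the symmetric bound from $S_R$ gives $\geq (n-1)-y_{c_r}$, so this sum is at least $(n-1)-\min(x_{c_r},y_{c_r})$. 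The two $\min(x_{c_r},y_{c_r})$ contributions cancel, leaving $\sum_c f_c(S)\geq n$ and closing the induction.

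The hard part, I expect, is identifying the right inductive invariant. A direct induction on $d$ or on $\max_c f_c$ does not carry through, because the color $c_r$ of $S$'s root typically does not coincide with the color realizing the inductive guarantee in either child subtree, so the depth-$(d-1)$ monochromatic skipping trees provided by the induction inside $S_L$ and $S_R$ cannot in general be combined using $r$ as a new root. Summing over colors is what makes the recursion close: the guaranteed $+1$ gain along the $c_r$-coordinate exactly absorbs the loss incurred by passing from $\max(x_c,y_c)$ down to a single $x_c$ or $y_c$ in the other coordinates.
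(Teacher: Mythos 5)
Your proof is correct and is essentially the same argument as the paper's: the paper proves the parametrized claim that depth $d_1+\cdots+d_k+1$ forces a color-$i$ monochromatic skipping tree of depth $d_i+1$ for some $i$, which is logically equivalent to your invariant $\sum_c f_c(S)\ge n$, and both inductions hinge on the identical grafting step (root of color $c_r$ plus a monochromatic skipping tree of that color from each child subtree). Your sum formulation merely repackages the paper's case analysis on which color realizes the inductive guarantee as max/min arithmetic.
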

\begin{proof}
\par We prove a stronger result: For integers $d_1, d_2, \ldots, d_k\ge 0$, if binary tree $\mathbf{x}$ has depth at least $d_1 + d_2 + \cdots + d_k + 1$, and is colored in $1, \ldots, k$. Then there exists $1\le i\le k$, such that $\mathbf{x}$ has a skipping tree of depth $d_i + 1$ whose vertices are all in color $i$. 
\par We will prove this result by induction on $d_1 + \cdots + d_k$. When $d_1 + \cdots + d_k = 0$, we have $d_1 = \cdots = d_k = 0$. In this case, assuming the root is colored in $i$, then the root itself is a skipping tree with depth $d_i + 1$. 
\par Next we assume that this result holds when $d_1 + \cdots + d_k = m$. When $d_1 + \cdots + d_k = m+1$, we assume the root $a$ is colored in $j$ ($1\le j\le k$). If $d_j = 0$, then we already have a skipping tree with only one vertex $a$ in color $j$ at depth $d_j + 1$. Next, we assume that $d_j\ge 1$. We consider the left binary tree $\mathbf{x}_1$ rooted at the left child of $a$, and also the right binary tree $\mathbf{x}_2$ rooted at the right child of $a$. Then both $\mathbf{x}_1$ and $\mathbf{x}_2$ are binary trees with depth
$$m = d_1 + \cdots + d_{j-1} + (d_i - 1) + d_{j+1} + \cdots + d_k = \hat{d}_1 + \cdots + \hat{d}_k.$$
where we let $\hat{d}_i = d_i$ if $i\neq j$ and $\hat{d}_i = d_i - 1$ if $i = j$. According to induction, $\exists 1\le i_1, i_2\le k$ such that there exists $\mathbf{x}_1$'s skipping tree $\bu_1$ of depth $\hat{d}_{i_1} + 1$ whose vertices are all in color $i_1$ and also $\mathbf{x}_2$'s skipping tree $\bu_2$ of depth $\hat{d}_{i_2} + 1$ whose vertices are all in color $i_2$. If $i_1\neq j$, then we have $\hat{d}_{i_1} = d_{i_1}$. Hence the skipping tree $\bu_1$ is also a skipping tree of $\mathbf{x}$ with depth $d_{i_1} + 1$ whose vertices are all in color $i_1$. This skipping tree is desirable. If $i_2\neq j$, similarly we can also find a desirable skipping tree of $\mathbf{x}$. Finally if $i_1 = i_2 = j$, we consider the tree $\mathbf{y}$ with root $j$, and two subtrees of $j$'s left child and right child to be $\bu_1$ and $\bu_2$. Then since the root of $\bu_1$ is a descendant of $j$'s left child the root of $\bu_2$ is a descendant of $j$'s right child, $\mathbf{y}$ is a skipping tree of $\mathbf{x}$. And we further know that vertices of $\mathbf{y}$ are all in color $j$ and the depth of $\mathbf{y}$ is $d_j + 1$. Therefore, $\mathbf{y}$ is a desirable skipping tree. And we have finished proving the result for $d_1 + \cdots + d_k = m + 1$. 
\par According to induction, this result holds for any $d_1, \ldots, d_n\ge 0$.
\end{proof}

Now we are ready to prove \pref{prop: tree-fat}.
\begin{proof}[Proof of \pref{prop: tree-fat}]
	First by choosing $\beta = \alpha^2/2$ in \pref{prop: covering-fat} and replacing $\alpha$ by $4\alpha$, we obtain
	$$\sup_{\bx} \calHsq(\calF_{1/n}, 5\alpha, n, \bx)\le \frakd(\calF, 4\alpha, \alpha^2/2)\cdot \log\left(\frac{2en}{\alpha^2}\right).$$
	Therefore, in order to prove \pref{prop: tree-fat}, we only need to verify 
	\begin{equation}\label{eq: hilbert-ball-frakd}
		\frakd(\calF_{1/n}, 4\alpha, \alpha^2/2) = \mathcal{O}\left(\frac{\log n}{\alpha^2}\right).
	\end{equation}
	We let tree $\bx'$ of depth $d'$ to be the largest tree shattered by $\calF_{1/n}$ at scale $(4\alpha, \alpha^2/2)$. Without loss of generality we assume $d'$ is an odd number and $d'= 2d + 1$. Then there exists a depth-$d'$ $([0, 1]\times [0, 1])$-valud tree $\bs'$ such that for any path $\by'\in \{0, 1\}^{d'}$, $s_t'(\by') = (s_t'(\by')[0], s_t'(\by')[1])$ with $s_t'(\by')[0] < s_t'(\by')[1]$, and for any $\by'\in \{0, 1\}^{d'}$, there exists $w^{\by'}\in B_2(1 - 1/n)$ such that 
	\begin{equation}\label{eq: condition-w-y-s}
		\left|\frac{1 + \langle w^{\by'}, x_t'(\by')\rangle}{2} - s_t'(\by')[y_t]\right| < \frac{\alpha^2}{2}\quad \text{and}\quad h\left(s_t'(\by')[0], s_t'(\by')[1]\right) > 4\alpha,\qquad \forall t\in [d'].
	\end{equation}
	We further construct a depth-$d'$ $[0, 1]$-valued tree $\bu'$ where for any $\by'\in \{0, 1\}^{d'}$ and $t\in [d']$, 
	$$s_t'(\by')[0] < u_t'(\by') < s_t'(\by')[1],\qquad h(u_t'(\by'), s_t'(\by')[0])\ge 2\alpha\quad \text{and}\quad  h(u_t'(\by'), s_t'(\by')[1])\ge 2\alpha.$$
	According to \pref{eq: condition-w-y-s}, we have for any $\by'\in \{0, 1\}^{d'}$ and $t\in [d']$,
	$$(2y_t-1)\cdot \left(\frac{1 + \langle w^{\by'}, x_t'(\by')\rangle}{2} - u_t'(\by')\right)\ge 0 \quad \text{ and }\quad h\left(\frac{1 + \langle w^{\by'}, x_t'(\by')\rangle}{2},  u_t'(\by')\right)\ge \alpha.$$
	We color the tree $\bx'$ with two colors according to $u_t(\by')$: for each node $x_t'(\by')$, if $u_t'(\by')\le 1/2$ we color it with color $1$, otherwise we color it with color $0$. According to \pref{lem: pigeonhole}, there exists a skipping tree of depth $d = (d' - 1)/2$ such that every node in this tree are of the same color. Without loss of generality, we assume that the skipping tree is in color $1$. In the following, we only consider nodes of $\bx'$ in this skipping tree, and also the corresponding nodes (along the same path) of $\bu'$. And we obtain a tree $\bx$ of depth $d$ and an $[0, 1/2]$-valued tree $\bar{\bu}$ of depth $d$ such that for any $\by\in \{0, 1\}^d$, there exists $w\in B_2(1 - 1/n)$ such that for any $t\in [d]$,
	$$(2y_t-1)\cdot \left(\frac{1 + \langle w, x_t(\by)\rangle}{2} - \baru_t(\by)\right)\ge 0 \quad \text{ and }\quad h\left(\frac{1 + \langle w, x_t(\by)\rangle}{2},  \baru_t(\by)\right)\ge \alpha.$$
	We next notice that since function $2\sqrt{x}$ and $\log x - 2\sqrt{x}$ are both monotonically increasing function over $[0, 1]$, for any $p, f\in [0, 1]$ we have
	$$|\log p - \log f|\ge 2|\sqrt{p} - \sqrt{f}|.$$
	Additionally, since for any $p\in [0, 1/2]$ and $f\in [0, 1]$, we always have 
	$$|\sqrt{p} - \sqrt{f}| = \frac{|p - f|}{\sqrt{p} + \sqrt{f}}\ge \frac{(\sqrt{2} + 1)|p-f|}{\sqrt{1-p} + \sqrt{1-f}} = (\sqrt{2} + 1)\cdot \left|\sqrt{1 - p} - \sqrt{1 - f}\right|,$$
	which implies that $|\sqrt{p} - \sqrt{f}|\ge h(f, p)/4$. Hence we obtain
	$$\log f - \log p\ge \frac{h(f, p)}{2}.$$
	By letting depth-$d$ $\RR$-valued tree $\bs$ to be $s_t(\by) = \log \baru_t(\by)$ for any path $\by\in\{0, 1\}^d$, we have that for $\by\in \{0, 1\}^d$, there exists $w\in B_2(1 - 1/n)$ such that for any $t\in [d]$,
	$$(2y_t-1)\cdot \left(\log \frac{1 + \langle w, x_t(\by)\rangle}{2} - s_t(\by)\right)\ge \frac{\alpha}{2}$$

	Since $x_t(\by)$ and $s_t(\by)$ only depend on $y_{1:t-1}$, by choosing $y_t = 1$, we obtain for some $w\in B_2(1 - 1/n)$,
	$$\log\frac{1 + \langle w, x_t(\by)\rangle}{2} - s_t(\by)\ge \frac{\alpha}{2} > 0,$$
	which implies that
	$$s_t(\by) < \log\frac{1 + \langle w, x_t(\by)\rangle}{2}\le \log\frac{1 + \|w\|_2\|x_t(\by)\|_2}{2}\le \log\frac{1 + 1}{2} = 0.$$
	Similarly by choosing $y_t = 0$, we obtain for some $w\in B_2(1 - 1/n)$,
	$$\log\frac{1 + \langle w, x_t(\by)\rangle}{2} - s_t(\by)\le -\frac{\alpha}{2} <  0,$$
	which implies that
	$$s_t(\by) > \log\frac{1 + \langle w, x_t(\by)\rangle}{2}\ge \log\frac{1 - \|w\|_2\|x_t(\by)\|_2}{2}\ge \log\frac{1 - (1-1/n)}{2} = -\log(2n).$$
	Hence we obtain that for any $t$, we always have $s_t(\by)\in (-\log(2n), 0)$. 
	\par Next, we will color the binary tree $\mathbf{x}$ with $\lceil \log(2n)\rceil$ number of colors $0, 1, \ldots, \lfloor \log(2n)\rfloor$: for $\by\in \{0, 1\}^d$, if $s_t(\by)\in [-k-1, -k)$, we will color vertex $x(\by)$ in color $k$. According to Lemma \ref{lem: pigeonhole}, there exists some $i$ such that there exists a skipping tree $\bv$ of depth $\bard\ge \frac{d-1}{\lceil \log(2n)\rceil}$ whose nodes are all colored in $k$. 
	\par We consider a sequence of nodes $v_1(\by)\to v_t(\by)\to\cdots\to v_{\bard}(\by)$ in the skipping tree $\bv$. Here we assume $v_{i+1}(\by)$ is the left child or descendant of the left child of $v_{i}$ if $y_i = 0$, or the right child or the descendant of the right child of $v_i$ if $y_i = 1$. We let 
	\begin{equation}\label{eq: y-x-i}
		v_i(\by) =  x_{i, 1}\to\cdots \to x_{i, l_i} = v_{i+1}(\by)
	\end{equation}
	to be the sequence of nodes in tree $\mathbf{x}$ from $v_i(\by)$ to $v_{i+1}(\by)$, where $x_{1, 2}$ is the right child of $x_{1, 1}$, and $x_{i, j}$ is a child of $x_{i, j-1}$ (since $v_{i+1}(\by)$ is a descendant of $v_i(\by)$, there must exist such a path). We consider the following sequence of nodes in tree $\mathbf{x}$: 
	\begin{equation}\label{eq: epsilon-tilde}
		x_{1, 1}\to x_{1, 2}\cdots \to x_{1, l_1}\to  x_{2, 2}\to\cdots\to  x_{2, l_2}\to x_{3, 2}\to\cdots\to  x_{3, l_3}\to\cdots  x_{\bard, 2}\to\cdots  x_{\bard, l_{\bard}}.
	\end{equation}
	We define length-$d$ $\{0, 1\}$-valued path 
	$$\tilde{\by} = (\ty_{1, 1}, \ty_{1, 2}, \cdots, \ty_{1, l_1}, \ty_{2, 1}, \cdots \ty_{2, l_2-1}, \ty_{3, 1}, \cdots, \ty_{3, l_3-1}, \cdots, \ty_{\bard, 1}, \cdots, \ty_{\bard, l_{\bard}-1}, y_{\ty+1, 1}, \cdots, y_{\bard+1, l_{\bard+1} - 1}),$$
	where $\tilde{y}_{i, j}$ is chosen to be 1 if $ x_{i, j}$ is the right child of $ x_{i, j-1}$ and be $0$ if $ x_{i, j}$ is the left child of $ x_{i, j-1}$, and $y_{\bard + 1, 1}, \cdots, y_{\bard + 1, l_{\bard + 1} - 1}$ can be arbitrarily chosen with $l_{\bard + 1} - 1 = d - l_1 - \cdots - l_{\bard} + \bard$. Then according to the construction of this path we have 
	$$\ty_{i, 1} = y_i, \quad \forall 1\le i\le \bard.$$
	Suppose the vertices we meet in tree $\bs$ along path $\ty$ at the same depth as $x_{i, j}$ to be $s_{i, j}(\tilde{\by})$. Then according to our assumption, there exists some $w\in B_2(1 - 1/n)$ such that
	\begin{equation}\label{eq: s-epsilon}
		(2\ty_{i, j}-1)\cdot \left(\log\frac{1 + \langle w, x_{i, j}\rangle}{2} - s_{i, j}(\tilde{\by})\right)\ge \frac{\alpha}{2}, \qquad \forall 1\le i\le \bard, 1\le j\le l_i.
	\end{equation}
	We further define depth-$\bard$ $\RR$-tree $\bu = (u_1, \ldots, u_{\bard})$ as
	$$u_i(\by) = s_{i, 1}(\tilde{\by}).$$
	Choosing $j = 1$ in \pref{eq: s-epsilon} and notice that $v_i(\by) = x_{i, 1}$ from \pref{eq: y-x-i} and $y_i = \tilde{y}_{i, 1}$ from \pref{eq: epsilon-tilde}, we obtain
	$$(2y_i - 1)\left(\log\frac{1 + \langle w, v_{i}(\by)\rangle}{2} - u_{i}(\by)\right)\ge\frac{\alpha}{2}, \quad \forall 1\le i\le \bard.$$
	According to our coloring and the definition of $s_{i, j}(\tilde{\by})$, we know that $u_i(\by) = s_{i, 1}(\tilde{\by})\in [- k - 1, - k)$ holds for any $1\le i\le \bard$. Therefore, for any $\by\in \{0, 1\}^{\bard}$, there exists $w\in B_2(1 - 1/n)$ such that for any $i\in [\bard]$,
	\begin{equation}\label{eq: u-i-epsilon}
		(2y_i - 1)\cdot \left(\log\frac{1 + \langle w, v_i(\by)\rangle}{2} - u_i(\by)\right)\ge \frac{\alpha}{2}\quad \text{ and }\quad u_i(\by)\in [-k-1, -k).
	\end{equation}
	The above inequality is equivalent to for any $\by\in \{0, 1\}^{\bard}$, there exists $w\in B_2(1 - 1/n)$ such that for any $i\in [\bard]$,
	\begin{equation}\label{eq: w-y-t}
		\langle w, v_t(\by)\rangle\ge 2e^{u_t(\by)}e^{\alpha/2} - 1 \quad \text{if }y_t = 1,\quad \text{and}\quad \langle w, v_t(\by)\rangle\le 2e^{u_t(\by)}e^{-\alpha/2} - 1 \quad \text{if }y_t = 0.
	\end{equation}
	For any given path $\by\in \{0, 1\}^{\bard}$, we call the $w\in B_2(1-1/n)$ which satisfies the above inequalities as $w[\by]$. We use $v_0 = v_1(\by)$ to denote the root of the tree $\mathbf{y}$. Then for any path $\by = (y_1, \ldots, y_{\bard})$ with $y_1 = 0$, i.e. turn to left subtree in the first step, according to \pref{eq: w-y-t} we have
	$$\langle w[\by], v_0\rangle = \langle w[\by], v_1(\by)\rangle\le 2e^{u_1(\by)}e^{-\alpha/2} - 1\le 2e^{-k} - 1,$$
	where in the last inequality we uses the second inequality in \pref{eq: u-i-epsilon}. 
	
	In the following, for every vector $v$, we decompose it into the parallel component and perpendicular component with respect to vector $v_0$: $v = v^\perp + v^\parallel$, where $v^\parallel\parallel v_0$ and $v^\perp\perp v_0$. Then we have
	$$\|w[\by]^\parallel\|_2\|v_0\|_2 = |\langle w[\by], v_0\rangle| = 1 - 2e^{-k}.$$
	Noticing that $\|v_0\|_2, \|w[\by]^\parallel\|_2\le 1$, we will have $\|v_0\|_2, \|w[\by]^\parallel\|_2\ge 1 - 2e^{-k}$, hence
	\begin{equation}\label{eq: norm-w}
		\|w[\by]^\parallel + v_0\|_2 \le 1 - (1 - 2e^{-k}) = 2e^{-k} \quad \text{and}\quad \|w[\by]^\perp\|_2\le \sqrt{1 - (1 - 2e^{-k})^2} \le 2e^{-k/2}.
	\end{equation}
	Next, we consider any node $v_t(\by)$ on the left subtree of $\mathbf{y}$, where we require the path $\by$ to the node satisfies $y_1 = 0$. By letting $y_t = 0$ (since $v_t(\by)$ does not depends on $y_t$ so we can assign arbitrary value of $y_t$ to obtain some properties of $v_t(\by)$), according to \eqref{eq: w-y-t} and the second inequality of \pref{eq: u-i-epsilon}, we obtain
	$$\langle w[\by], v_t(\by)\rangle \le 2e^{u_t(\by)}e^{-\alpha/2} - 1\le 2e^{-k} - 1,$$
	which implies
	$$\|w[\by] + v_t(\by)\|_2\le \sqrt{\|w[\by]\|_2^2 + \|v_t(\by)\|_2^2 + 2\langle w[\by], v_t(\by)\rangle}\le \sqrt{2 + 2(2e^{-k} - 1)} = 2e^{-k/2},$$
	Choosing $t = 1$ in the above inequality we obtain $\|w[\by] + v_0\|_2\le 2e^{-k/2}$. These two inequalities together indicates that
	$$\|v_t(\by) - v_0\|_2\le 4e^{-k/2}.$$
	Hence we have,
	\begin{equation}\label{eq: norm-y}
		\|v_t(\by)^\perp\|_2\le \|v_t(\by) - v_0\|_2\le 4e^{-k/2}.
	\end{equation}
	Next, we decompose the inner product into the sum of inner product of parallel components and perpendicular components:
	$$\langle w[\by], v_t(\by)\rangle = \langle w[\by]^\parallel, v_t(\by)^\parallel\rangle + \langle w[\by]^\perp, v_t(\by)^\perp\rangle.$$
	Noticing $\|w[\by]^\perp\|_2\le 2e^{-k/2}$ and $\|v_t(\by)^\perp\|_2\le 4e^{-k/2}$ from \pref{eq: norm-w} and \pref{eq: norm-y}, we obtain that
	$$\langle w[\by]^\parallel, v_t(\by)^\parallel\rangle = \langle w[\by], v_t(\by)\rangle - \langle w[\by]^\perp, v_t(\by)^\perp\rangle\le 2e^{-k} - 1 + 2e^{-k/2}\cdot 4e^{-k/2} = 10e^{-k} - 1.$$
	Since $\|w[\by]^\parallel\|_2\le 1$ and $\|v_t(\by)^\parallel\|_2\le 1$, we have $\|w[\by]^\parallel\|_2, \|v_t(\by)^\parallel\|_2\ge 1 - 10e^{-k}$. Hence,
	$$\|w[\by]^\parallel + v_t(\by)^\parallel\|_2 \le 1 - (1 - 10e^{-k}) = 10e^{-k},$$
	This inequality together with the first inequality of \pref{eq: norm-w} indicates that
	$$\|v_2(\by)^\parallel - v_0\|_2\le 12e^{-k}.$$

	Finally, we construct a tree $\bz$ of depth $\bard - 1$ shattered by the following function class $\calG$ at scale $1/(20e)$ (definition of the shattering in the sense of \cite{rakhlin2015online}), hence according to \cite[Page 67-68]{rakhlin24} we have an upper bound on $\bard$.
	\begin{equation}\label{eq: def-calG}
		\calG = \{f|f(x) = \langle w, x\rangle, w, x\in B_2(1)\}
	\end{equation}
	For any $\by\in \{0, 1\}^{\bard - 1}$, we let
	\begin{equation}\label{eq: def-z}
		z_t(\by) = \frac{1}{5}e^{k/2}v_{t+1}((0, \by))^\perp + \frac{1}{5}v_{t+1}((0, \by))^\parallel, \quad \forall 1\le t\le \bard - 1,
	\end{equation}
	where we use $(0, \by)$ to denote the path of length $\bard$ whose $t$-th element equals to $y_{t-1}$ for $t\ge 2$ and the first element equals to $0$. Then according to \pref{eq: norm-y},  for any path $\by$ we have 
	$$\|z_t(\by)\|_2\le \frac{1}{5}e^{k/2}\|v_{t+1}((0, \by))^\perp\|_2 + \frac{1}{5}\|v_{t+1}((0, \by))^\parallel\|_2\le \frac{4}{5} + \frac{1}{5} = 1,$$
	which implies that $z_t(\by)\in B_2(1)$. We further notice from \pref{eq: norm-w} that
	$$\|w((0, \by))^\parallel + v_0\|_2 \le 2e^{-k}\quad\text{and} \quad \|w((0, \by))^\perp\|_2\le 2e^{-k/2}.$$
	Hence by choosing
	\begin{equation}\label{eq: def-w-bar}
		\bar{w}(\by) = \frac{1}{4}e^k \left(w((0, \by))^\parallel + v_0\right) + \frac{1}{4}e^{k/2}w((0, \by))^\perp,
	\end{equation}
	we have
	$$\|\bar{w}(\by)\|_2\le \frac{1}{4}e^k \left\|(w((0, \by))^\parallel + v_0\right\|_2 + \frac{1}{4}e^{k/2}\left\|w((0, \by))^\perp\right\|_2\le \frac{2}{4} + \frac{2}{4} = 1,$$
	which implies that $\bar{w}(\by)\in B_2(1)$. According to our choice of $\bar{w}(\by)$ in \pref{eq: def-w-bar} and $z_t(\by)$ in \pref{eq: def-z}, we have
	\begin{align*}
		\langle \bar{w}(\by), z_t(\by)\rangle & = \langle \bar{w}(\by)^\parallel, z_t(\by)^\parallel\rangle + \langle \bar{w}(\by)^\perp, z_t(\by)^\perp\rangle\\
		& = \frac{1}{20}e^{k}\left\langle w((0, \by))^\parallel + v_0, v_{t+1}((0, \by))^\parallel\right\rangle + \frac{1}{20}e^{k}\left\langle w((0, \by))^\perp, v_{t+1}((0, \by))^\perp\right\rangle\\
		& = \frac{1}{20}e^{k}\cdot \left(\langle w((0, \by)) , v_{t+1}((0, \by))\rangle + \langle v_0, v_{t+1}((0, \by))\rangle\right).
	\end{align*}
	We construct another $(\bard-1)$-depth $\RR$-valued tree $\bar{\bs}$ as: for any path $\by\in \{0, 1\}^{\bard - 1}$,
	$$\bar{s}_t(\by) = \frac{1}{20}e^{k}e^{u_{t+1}((0, \by))}\left(e^{\alpha/2} + e^{-\alpha/2}\right) + \langle v_0, v_{t+1}((0, \by))\rangle - \frac{1}{20}e^k.$$
	The above defined $\bar{\bs}$ is a tree since $\bu$ and $\bv$ are both trees. When $y_t = 1$, according to the first inequality of \pref{eq: w-y-t} and also $u_{t+1}((0, \by))\ge -k-1$ according to the second inequality of \pref{eq: u-i-epsilon}, we have
	\begin{align*}
		\langle \bar{w}(\by), z_t(\by)\rangle - \bar{s}_t(\by) & = \frac{1}{20}e^{k}\cdot \left(\langle w((0, \by)) , v_{t+1}((0, \by))\rangle - e^{u_{t+1}((0, \by))}\left(e^{\alpha/2} + e^{-\alpha/2}\right) + 1\right)\\
		& \ge \frac{1}{20}e^k\cdot\left(2e^{u_{t+1}((0, \by))}e^{\alpha/2} - 1 - e^{u_{t+1}((0, \by))}\left(e^{\alpha/2} + e^{-\alpha/2}\right) + 1\right)\\
		& = \frac{1}{20}e^{k}e^{u_{t+1}((0, \by))}(e^{\alpha/2} - e^{-\alpha/2})\\
		&\ge \frac{1}{20}e^{k}e^{u_{t+1}((0, \by))}\alpha\ge \frac{1}{20}e^{k}e^{-k-1}\alpha = \frac{1}{20e}\alpha,
	\end{align*}
	where in the second inequality we use the fact that $e^{\alpha/2} - e^{-\alpha/2}\ge \alpha$. And when $y_t = 0$, we have 
	\begin{align*}
		\langle \bar{w}(\by), z_t(\by)\rangle - \bar{s}_t(\by) & = \frac{1}{20}e^{k}\cdot \left(\langle w((0, \by)) , v_{t+1}((0, \by))\rangle - e^{u_{t+1}((0, \by))}\left(e^{\alpha/2} + e^{-\alpha/2}\right) + 1\right)\\
		& \le \frac{1}{20}e^k\cdot\left(2e^{u_{t+1}((0, \by))}e^{-\alpha/2} - 1 - e^{u_{t+1}((0, \by))}\left(e^{\alpha/2} + e^{-\alpha/2}\right) + 1\right)\\
		& = - \frac{1}{20}e^{k}e^{u_{t+1}((0, \by))}(e^{\alpha/2} - e^{-\alpha/2})\\
		&\le - \frac{1}{20}e^{k}e^{u_{t+1}((0, \by))}\alpha\le - \frac{1}{20}e^{k}e^{-k-1}\alpha = - \frac{1}{20e}\alpha.
	\end{align*}
	Therefore, tree $\mathbf{z}\in B_2(1)$ is shattered by function class $\calG$ (defined in \pref{eq: def-calG}) at scale $1/(20e)\alpha$.
	
	According to \cite[Page 67-68]{rakhlin24}, the sequential fat shattering dimension of $\calG$ at scale $\alpha$ is upper bounded by $16/\alpha^2$. Hence we have
	$$\bard - 1\le \frac{16}{(1/(20e)\alpha)^2} = \frac{6400e^2}{\alpha^2}.$$
	This inequality, together with the fact that $\bard\ge \frac{d-1}{\lceil \log(2n)\rceil}$, implies that
	$$d\le 1 + \lceil \log(2n)\rceil\cdot \left(1 + \frac{6400e^2}{\alpha^2}\right) = \mathcal{O}\left(\frac{\log n}{\alpha^2}\right).$$
	Therefore, we have
	$$\frakd(\calF_{1/n}, 4\alpha, \alpha^2/2) = \mathcal{O}\left(\frac{\log n}{\alpha^2}\right),$$
	which verifies \pref{eq: hilbert-ball-frakd}.
\end{proof}

\section{Renewal Process and Hardness through Sequential Square-root Entropy}\label{sec: renewal-app}

We consider the following class of renewal process, originally introduced in \cite{csiszar1996redundancy}.
\begin{definition}[Renewal Process Class \cite{csiszar1996redundancy}]\label{def: renewal}
	This class $\calQ$ is defined over the alphabet $\calY = \{0, 1\}$ and parameterized by a distribution $p\in \Delta(\mathbb{Z}_+)$. Given $p$, we sample  $T_i \stackrel{iid}{\sim} p $ and set $y_t=1$ if $t=T_1+\cdots + T_i$ for some $i\ge1$ and otherwise $y_t=0$.
\end{definition}
For this class $\calQ$ the work~\cite{csiszar1996redundancy} established that log-loss regret is $\Theta(\sqrt{n})$. Their proof leveraged sophisticated estimates on the partition number by Hardy and Ramanujan. Unfortunately, as we show in this appendix, the entropic bounds that we developed in this work, as well as those that were proposed before, are not able to yield correct upper bound on regret. 

Specifically, we will verify that the sequential square-root entropy (defined in \pref{def: sequential-covering}), and also the sequential log entropy defined in \cite{cesa1999minimax,cesa2006prediction} are both $\Omega(n)$, no matter what scale we choose. Therefore, by simply applying \pref{thm: general-upper-bound} or the entropy bound in \cite{cesa1999minimax,cesa2006prediction} will only give a vacuous bound $O(n)$ on regret.

\begin{proposition}
	For any $0 < \alpha < 1/6$, we have $\calHsq(\calQ, \alpha, n)\ge n$. As for the log entropy (entropy with respect to distance \pref{eq: CBL-d}) defined in \cite{cesa1999minimax,cesa2006prediction}, we have $\calH_{\sf log}(\calQ, \alpha, n)\ge (1-\log 2)n - o(n)$.
\end{proposition}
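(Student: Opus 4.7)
The plan is to exhibit an exponentially large family of renewal distributions whose conditional probabilities are pairwise well-separated at a single fixed context sequence, forcing any cover to have exponentially many elements. Fix $\bw_{0} = (0,\ldots,0) \in \calY^{n}$. The first step is to compute the conditional probability of a $1$ given that no $1$'s have occurred: for any $p \in \Delta(\mathbb{Z}_{+})$ and any $t \leq n$,
\[
q_{t}(1 \mid \bw_{0}) \;=\; \Pr_{T \sim p}\bigl(T = t \,\big|\, T \geq t\bigr) \;=\; \frac{p(t)}{\sum_{k \geq t} p(k)} \;=:\; h_{t}(p),
\]
the hazard rate of $p$ at time $t$. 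Correspondingly $q_{t}(0 \mid \bw_{0}) = 1 - h_{t}(p)$.

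The next step is a \emph{realizability} lemma: for any prescribed sequence $(h_{1},\ldots,h_{n}) \in [0,1)^{n}$, there exists $p \in \Delta(\mathbb{Z}_{+})$ with exactly these hazard rates on $[n]$. Indeed, set $p(t) = h_{t} \prod_{s < t}(1 - h_{s})$ for $t = 1,\ldots,n$ and place the remaining mass $\prod_{s \leq n}(1 - h_{s}) > 0$ at position $n+1$; a direct check confirms $h_{t}(p) = h_{t}$ for $t \leq n$. Thus the map $p \mapsto (h_{1}(p),\ldots,h_{n}(p))$ restricted to $\calQ$ surjects onto $[0,1)^{n}$.

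The key separation step is then straightforward. Suppose $\bq, \bq' \in \calQ$ correspond to two different hazard sequences $(h_{1},\ldots,h_{n}) \neq (h_{1}',\ldots,h_{n}')$, and let $t$ be a position where they differ. If $\bv \in \calV$ simultaneously $\alpha$-covers both $\bq$ and $\bq'$ at $\bw = \bw_{0}$ in the sense of \pref{def: sequential-covering}, then the triangle inequality forces $|\sqrt{h_{t}} - \sqrt{h_{t}'}| \leq 2\alpha$. Choose any packing $H \subset [0,1)$ with $|\sqrt{h} - \sqrt{h'}| > 2\alpha$ for all distinct $h, h' \in H$. Then the $|H|^{n}$ distributions indexed by $H^{n}$ must all receive \emph{distinct} cover elements, yielding $\calNsq(\calQ, \alpha, n) \geq |H|^{n}$ and hence $\calHsq(\calQ, \alpha, n) \geq n \log |H|$. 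For $\alpha < 1/6$, the set $H = \{0, 1/9, 4/9\}$ (square roots $\{0, 1/3, 2/3\}$, pairwise gap $1/3 > 2\alpha$) gives $|H| = 3$, so $\calHsq(\calQ, \alpha, n) \geq n \log 3 > n$. For the CBL entropy, repeating the argument with the logarithmic metric (noting $|\log q_{t}(1\mid \bw_{0}) - \log q_{t}'(1\mid \bw_{0})| = |\log h_{t} - \log h_{t}'|$ and that the CBL distance from \eqref{eq: CBL-d} is bounded below by this quantity on a single coordinate, suitably summed) and picking $H \subset (0,1)$ pairwise log-separated produces a $2^{n}$-sized family yielding the advertised $(1 - \log 2) n - o(n)$ bound after accounting for how many coordinates must differ in a Hamming sense to reach separation $2\alpha$ in the $\ell_{2}$-over-$t$ metric of \eqref{eq: CBL-d}.

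The main obstacle I expect is bookkeeping around the CBL metric, since it averages $\max_{\bw}$-distances over $t \in [n]$ rather than taking the $\ell_{\infty}$ norm: for the log case this forces a Gilbert--Varshamov--style packing argument over the Boolean cube indexing hazard sequences (with Hamming distance controlled so that the $\ell_{2}$-over-$t$ separation exceeds $2\alpha$), and pinning down the exact constant $(1 - \log 2)$ requires choosing the pair $\{h_{a}, h_{b}\}$ and the minimum Hamming distance carefully and then invoking the binary entropy asymptotics of $\binom{n}{k}$. The square-root case is technically cleaner because the $\ell_{\infty}$-over-$t$ form of \pref{def: sequential-covering} makes any single coordinate where the hazards differ suffice to separate the cover.
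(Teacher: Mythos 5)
Your proposal is correct and follows essentially the same route as the paper's proof: restrict to the all-zeros path, observe that the conditional probabilities there are freely prescribable hazard rates, separate an exponentially large family in the square-root (resp.\ log) metric, and for the log case combine per-coordinate separation with a Gilbert--Varshamov packing of the Boolean cube to get the $(1-\log 2)n - o(n)$ constant. Your ternary packing $H=\{0,1/9,4/9\}$ is a small refinement that yields $n\log 3 > n$ cleanly in natural log, whereas the paper's $2^n$-family argument is otherwise identical.
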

\begin{proof}
	For any $\bepsilon\in \{-1, 1\}^n$, we construct a distribution $p^\bepsilon\in \Delta(\mathbb{Z}_+)$ as 
	$$p^\bepsilon(t) = \prod_{i=1}^{t-1} \left(\frac{1}{2} + 3\epsilon_t\cdot \alpha\right) - \prod_{i=1}^t \left(\frac{1}{2} + 3\epsilon_t\cdot \alpha\right).$$
	It is easy to see that $p^\bepsilon$ is a distribution on $\mathbb{Z}_+$. We let $q^\bepsilon$ to be the distribution in $\calQ$ which is parametrized by $p^\bepsilon$. Then we can calculate that with $\by^0 = (0, 0, \cdots, 0)\in \{0, 1\}^n$,
	$$q_t^\bepsilon(0\mid \by^0) = \frac{1}{2} + 3\epsilon_t\cdot \alpha,\quad \forall t\in [n].$$
	We first lower bound the sequential square-root entropy (defined in \pref{def: sequential-covering}). Suppoose $\calV$ is a finite cover of $\calQ$ at scale $\alpha$. Then for $\bepsilon\in \{-1, 1\}^n$, there exists $\bv^\bepsilon\in \calV$ such that 
	$$\max_{\bw}\max_{y\in \{0, 1\}}\max_{t\in [n]} \left|\sqrt{v_t^\bepsilon(y_t\mid \bw)} - \sqrt{q_t^\bepsilon(y_t\mid \bw)}\right|\le \alpha,$$
	which implies that 
	$$\max_{t\in [n]}\left|\sqrt{v_t^\bepsilon(0\mid \by^0)} - \sqrt{q_t^\bepsilon(0\mid \by^0)}\right|\le \alpha.$$
	If there exists $\bepsilon$ and $\bepsilon'$ such that $\bv^\bepsilon = \bv^{\bepsilon'}$, then 
	$$\max_{t\in [n]}\left|\sqrt{q_t^{\bepsilon'}(0\mid \by^0)} - \sqrt{q_t^\bepsilon(0\mid \by^0)}\right|\le 2\alpha.$$
	However, if $\epsilon_t\neq \epsilon_t'$, then 
	$$\left|\sqrt{q_t^{\bepsilon'}(0\mid \by^0)} - \sqrt{q_t^\bepsilon(0\mid \by^0)}\right| = \sqrt{\frac{1}{2} + 3\alpha} - \sqrt{\frac{1}{2} - 3\alpha} > 2\alpha,$$
	leading to contradiction. Therefore, for any $\bepsilon\neq \bepsilon$, we have $v^\bepsilon\neq v^{\bepsilon'}$. This implies that $|\calV|\ge 2^n$, hence 
	$$\calH(\calQ, \alpha, n)\ge n.$$

Next we lower bound the log-entropy $\calH_{\sf log}$, which is the entropy with respect to the distance defined in \pref{eq: CBL-d}. Suppose $\calV$ is a finite cover of $\calQ$ at scale $\alpha$. We first define set $E \subseteq \{-1, 1\}^n$ such that for any two distinct items $\bepsilon, \bepsilon'\in E$, we have 
	\begin{equation}\label{eq: hamming}
		\sum_{t=1}^n \mathbb{I}[\epsilon_t\neq \epsilon_t']\ge \frac{n}{4}.
	\end{equation}
	According to the lower bound of packing number under Hamming distances (see \cite[Theorem 27.5]{polyanskiy2024information}), there exists such set $E$ which satisfies 
	$$\log|E| \ge (1-\log 2)n - o(n).$$
	Next, since $\calV$ is a covering of $\calQ$, for any $\bepsilon\in E$, there exists $\bv^\bepsilon\in \calV$ such that 
	$$\sum_{t=1}^n \sup_{\by} (\log v_t^\bepsilon(y_t\mid \by) - \log q_t^\bepsilon(y_t\mid \by))^2\le n\alpha^2,$$
	which implies that 
	$$\sum_{t=1}^n (\log v_t^\bepsilon(0\mid \by^0) - \log q_t^\bepsilon(0\mid \by^0))^2\le n\alpha^2.$$
	If there exists $\bepsilon$ and $\bepsilon'$ such that $\bv^\bepsilon = \bv^{\bepsilon'}$, then 
	\begin{equation}\label{eq: log-hamming-upper-bound}
		\sum_{t=1}^n (\log q_t^{\bepsilon}(0\mid \by^0) - \log q_t^{\bepsilon'}(0\mid \by^0))^2\le 4n\alpha^2.
	\end{equation}
	However, if $\epsilon_t\neq \epsilon_t'$, then 
	$$|\log q_t^\bepsilon(0\mid \by^0) - \log q_t^{\bepsilon'}(0\mid \by^0)|\ge \left|\log\frac{1 - 6\alpha}{1 + 6\alpha}\right| > 6\alpha,$$
	which implies that 
	$$\sum_{t=1}^n (\log q_t^{\bepsilon}(0\mid \by^0) - \log q_t^{\bepsilon'}(0\mid \by^0))^2\ge 36\alpha^2\cdot \sum_{t=1}^n \mathbb{I}[\epsilon_t\neq \epsilon_t'] > 9n\alpha^2,$$
	where the last inequality follows from the construction of set $E$, i.e. \pref{eq: hamming}. This contradicts to \pref{eq: log-hamming-upper-bound}. Hence for any $\bepsilon, \bepsilon'\in E$, we have $\bv^\bepsilon\neq \bv^{\bepsilon'}$. This implies that $|\calV|\ge |E|$, hence 
	$$\calH(\calQ, \alpha, n)\ge \log |E|\ge (1-\log 2)n - o(n).$$
\end{proof}

We see that the root cause of entropies being $\Omega(n)$ is the same: both definitions of $\calHsq$ and $\calH_{\sf log}$ in \pref{def: sequential-covering} and \eqref{eq: CBL-d} take supremum over the ``true path'' $\bf{y}$ on the tree. In the example above, this corresponds to simply taking a path on the very left of the tree. The process class is so rich that already on this left-most path  the entropy is $\Omega(n)$. However, this should not concern log-loss prediction as this left-most path would not happen too-often, unless $\bf p$ in~\eqref{eq: r-n-formula} places all mass on all-0 input, in which case the $\calR_n(\calQ, \bf p) = 0$. Searching for the correct definition of entropy to handle this class is left to future work.

\end{document}